\newcommand{\vertiii}[1]{{\left\vert\kern-0.25ex\left\vert\kern-0.25ex\left\vert #1
		\right\vert\kern-0.25ex\right\vert\kern-0.25ex\right\vert}}
\newcommand{\vect}[1]{\ensuremath{\mathbf{#1}}}
\newcommand{\x}{\vect{x}}
\newcommand{\mSigma}{\bm{\Sigma}}
\newcommand{\A}{\vect{A}}
\newcommand{\PAR}{\mathrm{PAR}}
\newcommand{\ridge}{\mathrm{ridge}}
\newcommand{\Lasso}{\mathrm{Lasso}}
\newcommand{\prox}{\mathbf{prox}}
\DeclareMathOperator{\clip}{clip}
\DeclareMathOperator{\supp}{supp}
\newcommand{\mylinewidthone}{0.6pt}
\newcommand{\mylinewidthtwo}{1.2pt}
\newcommand{\inner}[2]{\left\langle #1, #2 \right\rangle}
\newcommand{\norm}[1]{\left\lVert#1\right\rVert}
\newcommand{\tr}{{\operatorname{tr}}}
\newcommand{\qr}{\operatorname{qr}}
\par\vspace{4mm}}
\newcommand{\cA}{\mathcal{A}}
\newcommand{\cC}{\mathcal{C}}
\newcommand{\cD}{\mathcal{D}}
\newcommand{\cE}{\mathcal{E}}
\newcommand{\cN}{\mathcal{N}}
\newcommand{\cQ}{\mathcal{Q}}
\newcommand{\cR}{\mathcal{R}}
\newcommand{\bE}{\mathbb{E}}
\newcommand{\bP}{\mathbb{P}}
\newcommand{\bR}{\mathbb{R}}
\newcommand{\bZ}{\mathbb{Z}}
\crefname{assumption}{assumption}{assumptions}
\crefname{condition}{condition}{conditions}
\newtheorem{theorem}{Theorem}
\newtheorem{corollary}{Corollary}
\newtheorem{proposition}{Proposition}
\newtheorem{example}{Example}
\newtheorem{assumption}{Assumption}
\DeclareMathOperator{\col}{col}
\newcommand*{\rom}[1]{%
\textup{\uppercase\expandafter{\romannumeral#1}}%
}
\def\eqref#1{equation~\ref{#1}}
\def\1{\bm{1}}
\def\vzero{{\bm{0}}}
\def\vepsilon{{\bm{\epsilon}}}
\def\va{{\bm{a}}}
\def\vb{{\bm{b}}}
\def\vv{{\bm{v}}}
\def\vx{{\bm{x}}}
\def\vy{{\bm{y}}}
\def\vz{{\bm{z}}}
\def\mA{{\bm{A}}}
\def\mB{{\bm{B}}}
\def\mC{{\bm{C}}}
\def\mI{{\bm{I}}}
\def\mJ{{\bm{J}}}
\def\mSigma{{\bm{\Sigma}}}
\DeclareMathAlphabet{\mathsfit}{\encodingdefault}{\sfdefault}{m}{sl}
\SetMathAlphabet{\mathsfit}{bold}{\encodingdefault}{\sfdefault}{bx}{n}
\DeclareMathOperator*{\argmin}{arg\,min}
\DeclareMathOperator{\sign}{sign}
\def\vdelta{{\bm{\delta}}}
	\title{Quantization through Piecewise-Affine Regularization:\\ Optimization and Statistical Guarantees}
\author{Jianhao Ma\\
University of Pennsylvania
\\
\texttt{jianhaom@wharton.upenn.edu}\and Lin Xiao\\
Meta FAIR\\
\texttt{linx@meta.com}}
\begin{document}
\maketitle

\begin{abstract}
Optimization problems over discrete or quantized variables are very challenging in general due to the combinatorial nature of their search space. 
Piecewise-affine regularization (PAR) provides a flexible modeling and computational framework for quantization based on continuous optimization. In this work, we focus on the setting of supervised learning and investigate the theoretical foundations of PAR from optimization and statistical perspectives. 
First, we show that in the overparameterized regime, where the number of parameters exceeds the number of samples, every critical point of the PAR-regularized loss function exhibits a high degree of quantization.
Second, we derive closed-form proximal mappings for various (convex, quasi-convex, and non-convex) PARs and show how to solve PAR-regularized problems using the proximal gradient method, its accelerated variant, and the Alternating Direction Method of Multipliers.
Third, we study statistical guarantees of PAR-regularized linear regression problems; specifically, we can approximate classical formulations of $\ell_1$-, squared $\ell_2$-, and nonconvex regularizations using PAR and obtain similar statistical guarantees with quantized solutions. 
\end{abstract}

\section{Introduction}
In many machine learning and decision-making problems, we need to optimize an objective function where some variables are constrained to be discrete:
\begin{equation}
\min_{\vx\in \cQ^{d_1}, \vy\in \bR^{d_2}} f(\vx, \vy). \label{eq::direct-quantization}
\end{equation}
Here $\vy$ is the continuous variable, and the elements of $\vx$  are restricted to a discrete set $\cQ$. For example, $\cQ$ can be the set of binary values $\{0, 1\}$, a subset of integers, or a finite set of discrete real numbers.
The prevalence and importance of such problems are highlighted by the following examples. 
\begin{itemize}
    \item \textbf{Quantization in machine learning model compression.} Modern deep learning models offer remarkable capabilities in vision and language processing, but they often come with substantial computational and memory requirements. Quantization, which maps model parameters from high-precision to low-precision formats, has emerged as an effective approach for model compression. It can significantly reduce memory footprint, computational cost, and inference latency \cite{han2016compression,sze2017efficient}.
    \item \textbf{Communications and signal processing.} Quantization plays a fundamental role in digital communications, where continuous-amplitude signals must be converted into discrete values for transmission, storage, and processing \cite{proakis2008digital,oppenheim1999discrete,gray2002quantization}. This process underlies analog-to-digital conversion, enabling real-world analog signals to be represented with finite bit-depth. The quality of the quantization directly affects signal fidelity, bandwidth efficiency, and error rates in communication systems.
    \item \textbf{Mixed integer programming in operations research.} Many optimization problems in operations research require variables to take discrete values, such as facility location, production scheduling, or resource allocation \cite{snyder2019fundamentals}. These problems are formulated as mixed integer programs where some variables are constrained to integer values while others remain continuous, leading to challenging computational problems that combine combinatorial and continuous optimization \cite{wolsey1999integer,wolsey2020integer}.
\end{itemize}
Solving Problem~(\ref{eq::direct-quantization}) exactly is extremely challenging due to its combinatorial nature. For instance, even the simple convex quadratic binary program $\min_{\vx\in\{0,1\}^d}\vx^{\top}\mA\vx$ with $\mA\succeq 0$ is NP-hard \cite{hartmanis1982computers}.

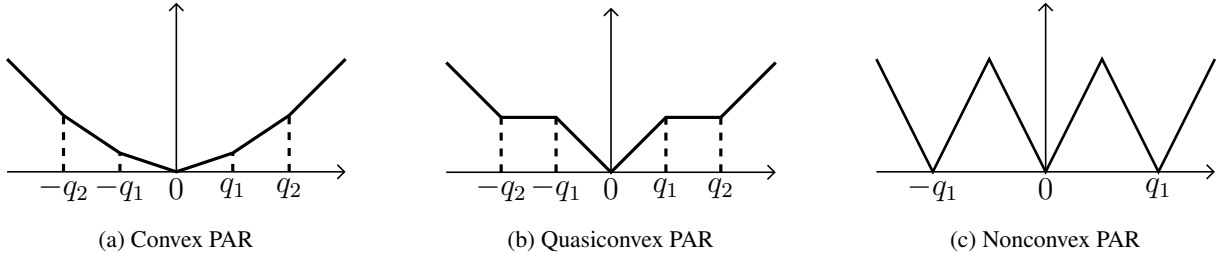
\begin{figure}
    \centering
     \begin{subfigure}[b]{0.3\textwidth}
         \centering
         \begin{tikzpicture}[scale=0.75]
    \tikzset{myarrow/.style={-{Straight Barb[scale=1]}, line width=\mylinewidthone}}
    \tikzset{thickline/.style={line width=\mylinewidthtwo}}
    
    \draw[myarrow] (-3,0) -- (3,0);
    \draw[myarrow] (0,0) -- (0,3);

    \draw[thickline]
        (-3,2) -- (-2,1) -- (-1,0.333) -- (0,0)
        -- (1,0.333) -- (2,1) -- (3,2);

    \draw[dashed, line width=1.2pt] (-2,0) -- (-2,1);
    \draw[dashed, line width=1.2pt] (-1,0) -- (-1,0.333);
    \draw[dashed, line width=1.2pt] (2,0) -- (2,1);
    \draw[dashed, line width=1.2pt] (1,0) -- (1,0.333);
    
    \node at (-2,-0.3) {\large $-q_2$};
    \node at (-1,-0.3) {\large $-q_1$};
    \node at (0,-0.3) {\large $0$};
    \node at (1,-0.3) {\large $q_1$};
    \node at (2,-0.3) {\large $q_2$};

\end{tikzpicture}
         \caption{Convex PAR}
         \label{fig:convex}
     \end{subfigure}
     \hfill
     \begin{subfigure}[b]{0.3\textwidth}
         \centering
         \begin{tikzpicture}[scale=0.73]
    \tikzset{myarrow/.style={-{Straight Barb[scale=1]}, line width=\mylinewidthone}}
    \tikzset{thickline/.style={line width=\mylinewidthtwo}}
    
    \draw[myarrow] (-3,0) -- (3,0);
    \draw[myarrow] (0,0) -- (0,3);

    \draw[thickline]
        (-3,2) -- (-2,1) -- (-1,1) -- (0,0)
        -- (1,1) -- (2,1) -- (3,2);

    \draw[dashed, line width=1.2pt] (-2,0) -- (-2,1);
    \draw[dashed, line width=1.2pt] (-1,0) -- (-1,1);
    \draw[dashed, line width=1.2pt] (2,0) -- (2,1);
    \draw[dashed, line width=1.2pt] (1,0) -- (1,1);
    
    \node at (-2,-0.3) {\large $-q_2$};
    \node at (-1,-0.3) {\large $-q_1$};
    \node at (0,-0.3) {\large $0$};
    \node at (1,-0.3) {\large $q_1$};
    \node at (2,-0.3) {\large $q_2$};

\end{tikzpicture}
          \caption{Quasiconvex PAR}
         \label{fig:quasiconvex}
     \end{subfigure}
     \hfill
    \begin{subfigure}[b]{0.3\textwidth}
         \centering
         \begin{tikzpicture}[scale=0.75]
    \tikzset{myarrow/.style={-{Straight Barb[scale=1]}, line width=\mylinewidthone}}
    \tikzset{thickline/.style={line width=1.0pt}}
    
    \draw[myarrow] (-3,0) -- (3,0);
    \draw[myarrow] (0,0) -- (0,3);

    \draw[thickline]
        (-3,2) -- (-2,0) -- (-1,2) -- (0,0)
        -- (1,2) -- (2,0) -- (3,2);

    \node at (-2,-0.3) {\large $-q_1$};
    \node at (0,-0.3) {\large $0$};
    \node at (2,-0.3) {\large $q_1$};

\end{tikzpicture}
          \caption{Nonconvex PAR}
         \label{fig:nonconvex}
     \end{subfigure}
     \caption{Three different types of PAR $\Psi(\cdot)$ for inducing quantization towards $\cQ=\{\pm q_i\}$.}
    \label{fig::three-different-pars}
\end{figure}

In this paper, we focus on finding \emph{approximate solutions} using continuous optimization methods, by adding regularizations/penalties that can induce discrete solutions to the objective function.
Specifically, we consider the following unconstrained optimization problem
\begin{equation}
    \min_{\vx\in \bR^d} F_{\lambda}(\vx):= f(\vx)+\lambda \Psi(\vx),
    \label{eq::regularized-loss}
\end{equation}
where $\Psi(\cdot)$ is a regularization that encourages the variables to be discrete (within the set~$\cQ$), and $\lambda$ controls the strength of regularization.\footnote{Here we omit the continuous part $\vy$ and focus on the case where all the variables are subject to regularization. It can be easily extended to the general setting by setting $\Psi(\vy)=0$.}
Among many possible choices, the family of Piecewise-Affine Regularizers (PARs) are particularly suited for inducing quantization due to their nature of nonsmoothness (\Cref{fig::three-different-pars} illustrates three representative types of PAR).
In particular, they tend to trap the optimization variables at the set of nondifferentiable breakpoints. 
This mirrors how the $\ell_1$-regularizer promotes sparsity through nondifferentiability at zero. For PARs, we align their set of nondifferentiable points with the target quantization values, making them effective for inducing desired quantization. We refer to the Problem~(\ref{eq::regularized-loss}) with such a regularizer as \textbf{Piecewise-Affine Regularized Optimization (PARO)}.

PAR has a long history in statistics. The classic example is the Lasso \cite{tibshirani1996regression,ChenDonohoSaunders1998}, which uses $\ell_1$-regularization to induce sparsity in linear regression and other statistical learning tasks. Subsequent work introduced other convex PARs to induce different structures; a prominent case is the graph-based total-variation penalty, which suppresses jumps and favors piecewise-constant signals \cite{condat2017discrete,kolmogorov2016total}. Several recent studies examined the geometry of solutions produced by convex PARs \cite{everink2024geometry,schneider2022geometry,tardivel2021geometry}. 

\begin{sloppypar}
    Parallel advances in machine learning highlight an intrinsic link between PARs and quantization. In particular, \cite{courbariaux2015binaryconnect} introduced the straight-through estimator (STE) which has become a workhorse for quantization-aware training, and \cite{yin2018binaryrelax} draws its connection to the framework of regularized optimization. \cite{bai2019proxquant} reinterpret STE as regularized dual averaging \cite{xiao2010rda} with a non-convex PAR (illustrated in \Cref{fig:nonconvex}), 
    and follow-up works broaden this analysis to wider families of PARs, devising optimization algorithms with rigorous convergence guarantees \cite{dockhorn2021demystifying,lu2024understanding}. Most recently, \cite{jin2025parq} propose to use convex PARs (\Cref{fig:convex}) for quantization, and introduce an aggregated proximal stochastic gradient method with last-iterate convergence guarantee and strong empirical results on deep learning tasks.
\end{sloppypar}

While PARs have achieved strong empirical success, their theoretical properties remain underexplored. In particular, the mechanisms by which PARs promote discrete structure, their optimization guarantee, and statistical behavior are not well understood. In this work, we bridge this gap by analyzing PARs through the lenses of quantization, optimization, and statistics, establishing new theoretical foundations that explain and support their empirical effectiveness. Our main contributions are:
\begin{itemize}
    \item \textbf{Quantization guarantees.} 
    We provide theoretical backing for PAR's ability to induce high quantization rate in supervised learning models. We show that all critical points of PARO have a high proportion of quantized entries. The quantization rate is directly linked to the ratio of parameter dimension to sample size, indicating that overparameterized models naturally achieve higher quantization rate.
    \item \textbf{Optimization methods.} We derive closed-form expressions for the proximal mappings of various PARs, spanning convex, quasiconvex, and nonconvex formulations. We show that the proximal gradient method and its accelerated variants can efficiently converge to critical points of PARO for both convex and nonconvex cases. Additionally, for structured problems like linear regression, we demonstrate that the Alternating Direction Method of Multipliers (ADMM) \cite{boyd2011distributed} can be effectively applied, often achieving faster convergence.
    \item \textbf{Statistical properties.} We demonstrate that PARs can closely approximate a wide range of conventional regularizers, including squared $\ell_2$-, $\ell_1$-, and general nonconvex regularizers. For linear regression, we prove that specific PARs can effectively mimic classic regularizers, achieving optimal statistical guarantees with quantized solutions. This highlights PARO's ability to reduce model size without sacrificing performance.
    \item \textbf{Numerical experiments.} We conduct extensive simulations across linear and logistic regression tasks. These experiments empirically corroborate our theoretical findings, validating the quantization, optimization, and statistical guarantees provided by the PARO framework.
\end{itemize}

The rest of this paper is organized as follows. \Cref{sec::PAR} introduces different types of PAR along with their first-order optimality conditions. We also provide quantization guarantees of PAR for generalized linear models and supervised learning. In \Cref{sec::optimization}, we derive the proximal operators of various PARs and illustrate how to solve PARO using several standard optimization algorithms. In \Cref{sec::statistical-guarantee}, we establish statistical guarantees for various PARs in the linear regression setting. Finally, in \Cref{sec::numerical-experiments}, we present numerical experiments that support our theoretical findings and demonstrate the effectiveness of our approach.

\paragraph{Notations.}  
We use bold lowercase letters (e.g., $\vx, \vy, \vz$) to denote vectors, and bold uppercase letters (e.g., $\mA, \mB, \mC$) to denote matrices. For a vector $\vx \in \bR^d$, we define its $\ell_2$-norm as $\norm{\vx} = (\sum_{i=1}^{d} x_i^2)^{1/2}$ and its $\ell_\infty$-norm as $\norm{\vx}_\infty = \max_{i} |x_i|$. The sign function is denoted as $\sign(x)$, which returns $1$ if $x > 0$, $-1$ if $x < 0$, and $0$ otherwise. For a set $I$, we denote its cardinality by $|I|$. Given a matrix $\mA \in \bR^{m \times d}$, we use $\A_i$ to denote its $i$-th row. The notation $\mA_I$ refers to the submatrix of $\mA$ containing only the rows indexed by $I$. For a function $f: \bR^d \to \bR$, we denote its gradient at $\vx$ by $\nabla f(\vx)$ if $f$ is differentiable at $\vx$ and its Clarke subdifferential by $\partial f(\vx)$ otherwise. For a scalar $x$, we use $\lfloor x\rceil$ to denote the closest integer to $x$.

\section{PAR and quantization guarantees}
\label{sec::PAR}

In this paper, we consider coordinate-wise piecewise-affine regularization (PAR), i.e., 
\[
\Psi(\vx)=\sum_{i=1}^d\Psi(x_i),
\]
where we use $\Psi$ for both vector and scalar inputs (slight abuse of notation).
For ease of presentation and broad applicability in practice, we focus on PARs that are symmetric with respect to the origin (as illustrated in \Cref{fig::three-different-pars}), with the definition
\begin{equation}\label{eqn:par-def}
\Psi(x) = a_k(|x| - q_k) + b_k \quad \text{if } q_k \leq |x| \leq q_{k+1},
\end{equation}
where $\mathcal{Q} = \{0, \pm q_1, \ldots, \pm q_m\}$, with $0 = q_0 < q_1 < \cdots < q_m$, denotes the set of targeted quantization values.
The slopes $\cA=\{a_0, a_1, \cdots, a_m\}$ satisfy $a_k \neq a_{k+1}$ for all $0 \leq k \leq m-1$, and the intercepts $\{b_0, b_1, \cdots, b_m\}$ are determined recursively by setting $b_0 = 0$ and
\begin{equation}
b_k = b_{k-1} + a_{k-1}(q_k - q_{k-1}), \qquad k = 1, \ldots, m.
\end{equation}

The remainder of this section presents general optimality conditions for PARO, followed by quantization guarantees established under a supervised learning framework.

\subsection{Optimality conditions}
In this section, we examine the first-order optimality conditions for PARO.\footnote{The results for convex PARs were previously presented in \cite{jin2025parq}; here, we extend them to general PARs and include the full statements for completeness.} %
Specifically, we assume that~$f$ is differentiable and $\vx^\star$ is a Clarke critical point \cite[Definition~2]{bolte2007clarke} of PARO (\eqref{eq::regularized-loss}), i.e., 
\begin{equation}
    \bm{0}\in\nabla f(\vx^\star)+\lambda\partial\Psi(\vx^\star),
\end{equation}
where $\partial\Psi(\vx^\star)$ is the Clarke subdifferential of $\Psi$ at $\vx^\star$.
This can be rewritten as
\(
\nabla f(\vx^\star) \in -\lambda\,\partial \Psi(\vx^\star),
\)
which yields the following conditions for each coordinate $i=1,\cdots,d$ and quantization level $k=1,\cdots,m$:
\begin{alignat}{4}
\nonumber
x_i^\star & = -q_k, & & \Longleftarrow & \nabla_i f(\vx^\star) & \in \lambda\, (a_{k-1}, a_k)\\ %
\nonumber
x_i^\star & \in \!(-q_k, -q_{k-1}) ~& & \Longrightarrow ~\quad & \nabla_i f(\vx^\star) & = \lambda\,a_{k-1}\\ %
\nonumber
x_i^\star & = 0 & & \Longleftarrow & \!\!\!\!\!\! -\nabla_i f(\vx^\star) & \in \lambda\, (-a_0, a_0) \\ %
\nonumber
x_i^\star & \in (q_{k-1},q_k)\quad & & \Longrightarrow ~\quad & \nabla_i f(\vx^\star) & =-\lambda\,a_{k-1} \\ %
\nonumber
x_i^\star & = q_k, & & \Longleftarrow &  \nabla_i f(\vx^\star) & \in \lambda\, (-a_k, -a_{k-1}). %
\end{alignat}
Here the symbol $\Longleftarrow$ ($\Longrightarrow$) means that the left-hand side expression is a necessary (sufficient) condition for the right-hand side expression.

We immediately recognize that the sufficient condition for $x_i^\star=0$ is the same as for the $\ell_1$-regularization $\Psi(x)=\lambda a_0\cdot |x|$. 
Further examination reveals that 
for any parameter not clustered at a discrete value in~$\mathcal{Q}$, i.e., if $x_i^\star\in(q_{k-1},q_k)$, the corresponding partial derivative of~$f$ must equal to one of the $2m$ discrete values $\{\pm\lambda a_{k-1}\}_{k=1}^m$. 
Conversely, almost all values of the partial derivatives of~$f$, except for these $2m$ discrete values,
can be balanced by assigning the model parameters at the $2m+1$ discrete values in~$\mathcal{Q}$.
Intuitively, this implies that the model parameters at optimality are more likely to be clustered at these discrete values. 

\subsection{Quantization guarantee}
\label{sec::quantization-guarantee}
In this section, we study the quantization properties of solutions obtained using the PARO framework
\begin{equation}
    \min_{\vx\in \bR^d} F_{\lambda}(\vx)= f(\vx)+\lambda \Psi(\vx).
\end{equation}
We start with a simple generalized linear model setting, and later generalize it to general supervised learning setting. Given a specific PAR $\Psi(\cdot)$ with quantization values $\cQ=\{0, \pm q_1, \ldots, \pm q_m\}$, for an arbitrary point $\vx\in \bR^d$, we define its \textbf{quantization rate} $\qr(\vx)$ as the fraction of coordinates with quantized values, i.e.,
\begin{equation}
    \qr(\vx):=\frac{|\{i: x_i\in \cQ\}|}{d}.
\end{equation}

\paragraph{Generalized linear model.} We consider objective functions of the form $f(\vx)=g(\mA\vx)$. Here $g(\cdot)$ is a smooth function, and $\mA\in \bR^{n\times d}$ is a matrix representing the input data. This formulation includes various generalized linear models, such as linear regression and logistic regression. The following result provides a quantization guarantee for the critical points of this problem when regularized by a class of PARs.

\begin{theorem}[Quantization guarantee for generalized linear models]
\label{thm::main}
    Consider a PAR $\Psi(\cdot)$ where the slopes $\cA$ satisfies $0\notin \cA$. Suppose each row of the design matrix $\mA\in \bR^{n\times d}$ with $n\leq d$ is i.i.d. drawn from a distribution $\cD_\va$ that is absolutely continuous with respect to the $p$-dimensional Lebesgue measure. Then, with probability one, any critical point $\vx^{\star}$ of PARO (\eqref{eq::regularized-loss}) satisfies $\qr(\vx^{\star})\geq 1-n/d$.
\end{theorem}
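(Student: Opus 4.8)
The plan is to show that, almost surely, every critical point has at most $n$ non-quantized coordinates; since $\qr(\vx^\star) = 1 - |\{i : x_i^\star \notin \cQ\}|/d$, this immediately gives $\qr(\vx^\star)\ge 1-n/d$ (and we may assume $n<d$, as otherwise the bound is vacuous). The starting point is the coordinatewise optimality conditions stated above. Writing $\vu^\star := \nabla g(\mA\vx^\star)\in\bR^n$ and letting $\va_i\in\bR^n$ denote the $i$-th column of $\mA$, the chain rule gives $\nabla_i f(\vx^\star)=\langle \va_i,\vu^\star\rangle$. For any coordinate $i$ that is not quantized, i.e. $x_i^\star$ lies strictly inside one of the open intervals between consecutive points of $\cQ$, the necessary conditions force $\nabla_i f(\vx^\star)$ to equal one of the $2(m+1)$ values $\{\pm\lambda a_k\}_{k=0}^m$. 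Crucially, since $0\notin\cA$, every such value is nonzero.

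Next I would isolate the randomness of $\mA$ by an existential reduction. Suppose some critical point $\vx^\star$ had a set $T$ of non-quantized coordinates with $|T|\ge n+1$. Choose any $S\subseteq T$ with $|S|=n+1$, and let $\vc=(c_i)_{i\in S}$ record the corresponding gradient values, so $\vc$ lies in the finite set $(\{\pm\lambda a_k\}_{k=0}^m)^{S}$ and has all-nonzero entries. Then $\vu=\vu^\star$ solves the overdetermined linear system $\langle\va_i,\vu\rangle=c_i$, $i\in S$; equivalently $\vc$ lies in the column space of the $(n+1)\times n$ matrix $\mB$ whose rows are $\va_i^\top$, $i\in S$. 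Hence the event ``some critical point has more than $n$ non-quantized coordinates'' is contained in the finite union, over the $\binom{d}{n+1}$ choices of $S$ and the finitely many admissible $\vc$, of the events $\cE_{S,\vc}:=\{\vc\in\col(\mB)\}$. It therefore suffices to prove $\Pr(\cE_{S,\vc})=0$ for each fixed $(S,\vc)$.

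For fixed $(S,\vc)$, the containment $\vc\in\col(\mB)$ forces the $(n+1)\times(n+1)$ augmented matrix $[\,\mB\mid\vc\,]$ to be singular, so $\cE_{S,\vc}\subseteq\{D(\mA)=0\}$ where $D(\mA):=\det[\,\mB\mid\vc\,]$. Viewing $D$ as a polynomial in the entries $\{A_{ji}\}_{j\le n,\,i\in S}$, I would argue it is not identically zero: expanding along the last column gives $D=\sum_{i\in S}\pm c_i\,M_i$, where each $M_i$ is an $n\times n$ minor of $\mB$; since $\vc\neq\bm 0$ and these minors are generically nonzero, $D\not\equiv 0$. Finally, the joint law of the relevant entries of $\mA$, namely the columns indexed by $S$ across all $n$ rows, is absolutely continuous with respect to Lebesgue measure on $\bR^{n(n+1)}$: the rows of $\mA$ are independent, and the marginal of $\cD_\va$ onto the coordinates in $S$ is absolutely continuous, since marginals of absolutely continuous laws remain absolutely continuous. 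A nonzero polynomial vanishes only on a Lebesgue-null set, so $\Pr(\cE_{S,\vc})\le\Pr(D(\mA)=0)=0$, and the finite union bound closes the argument.

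The main obstacle, and the place where the hypotheses do real work, is the nonvanishing of the determinant polynomial $D$. This is exactly where $0\notin\cA$ is used: it guarantees $\vc\neq\bm 0$, without which $[\,\mB\mid\vc\,]$ would be singular identically and the system $\mB\vu=\vc$ would be trivially solved by $\vu=\bm 0$. The other point requiring care is the passage from ``Lebesgue-null'' to ``probability-null'', which relies on the absolute-continuity hypothesis on $\cD_\va$ surviving both marginalization and the i.i.d.\ row structure. The existential treatment of $\vu^\star$, forgetting that it equals $\nabla g(\mA\vx^\star)$ and quantifying over all $\vu\in\bR^n$, is what decouples the combinatorial region pattern from the nonlinear dependence of $\vx^\star$ on $\mA$, reducing the whole statement to finitely many linear-algebraic events.
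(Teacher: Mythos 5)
Your argument is correct, and its engine is the same one the paper uses: if a critical point had more than $n$ non-quantized coordinates, a vector with entries from the finite nonzero set $\{\pm\lambda a_k\}$ would lie in the column space of an $(n+1)\times n$ submatrix of the data, forcing an augmented $(n+1)\times(n+1)$ determinant to vanish, an event of probability zero under absolute continuity. The packaging differs, though. The paper derives \Cref{thm::main} as a corollary of the general supervised-learning result (\Cref{thm::main-extended}): the contradiction runs through the real-analytic function $H=\det([\mJ_{I^c},\vv])$ and the zero-set theorem for analytic functions (Mityagin), and the GLM case is then handled by verifying the non-degenerate Jacobian assumption, using that the Jacobian columns are just the data vectors. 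You instead argue directly and more elementarily: for GLMs the determinant is a polynomial in the entries of the submatrix $\mB=(\mA_{:,S})^{\top}$, and a nonzero polynomial vanishes on a Lebesgue-null set, so no analytic machinery is needed. Two points where your write-up is in fact more careful than the paper's own proof: you make the union bound over the $\binom{d}{n+1}$ choices of $S$ and the finitely many admissible patterns $\vc$ fully explicit (the paper fixes the data-dependent partition $I$ and the value $\vv$ without flagging that these must be quantified over), and your existential decoupling of $\vu^{\star}$ from $\nabla g(\mA\vx^{\star})$ cleanly removes the dependence of the critical point on the data. One step you compress: ``the minors are generically nonzero'' does not by itself rule out cancellation in $D=\sum_{i\in S}\pm c_i M_i$. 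To close it, either observe that monomials of distinct cofactors cannot collide (every monomial of $M_j$ with $j\neq i$ uses exactly one variable from row $i$ of $\mB$, while monomials of $M_i$ use none), or evaluate at an explicit point: pick $i_0\in S$ with $c_{i_0}\neq 0$ (available since $0\notin\cA$ and $\lambda>0$), set the rows of $\mB$ indexed by $S\setminus\{i_0\}$ to the standard basis of $\bR^n$ and row $i_0$ to zero, which gives $D=\pm c_{i_0}\neq 0$. That is a one-line repair, not a gap; with it your proof is complete and self-contained.
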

Below, we highlight the implications and significance of Theorem~\ref{thm::main}:
\begin{itemize}
    \item \textbf{Effect of overparameterization.} \Cref{thm::main} establishes a lower bound on the quantization rate, namely $1 - \frac{n}{d}$. Thus, in highly overparameterized models where $d \gg n$, the quantization rate approaches $1$. This suggests that larger models are inherently easier to quantize, which is consistent with empirical observations reported in \cite{chen2025scaling,cao2024scaling}.
    \item \textbf{Guarantees for critical points.} This result applies to all critical points of the regularized objective, not just global minimizers. This is particularly useful, as we show in \Cref{sec::optimization} that standard optimization algorithms such as the proximal gradient method can efficiently find such critical points.
    \item \textbf{Tightness of the quantization guarantee.} The lower bound on the quantization rate depends only on the ratio of sample size to data dimension and is independent of the regularization strength. While this may appear weak, extensive simulations in \Cref{sec::numerical-experiments} demonstrate that this bound is nearly tight, particularly when the regularization strength is moderate.
\end{itemize}

\paragraph{General supervised learning.}
We consider a training dataset \(S_n=\{(\va_i, b_i)\}_{i=1}^{n}\) consisting of \(n\) data points, where \(\va_i\in \bR^p\) represents the input and \(b_i\in \bR\) represents the output. We assume that each data point \((\va_i, b_i)\) is i.i.d. generated from an underlying distribution \(\mathcal{D}=\cD_\va\times \cD_b\). We consider a family of machine learning models $f(\va; \vx)$ indexed by parameter $\vx\in \bR^d$. Then, we optimize the PAR-regularized empirical risk function
\begin{equation}
    \min_{\vx\in \bR^d} F_{\lambda}(\vx):=\frac{1}{n}\sum_{i=1}^{n}\ell\left(f(\va_i; \vx), b_i\right)+\lambda \Psi(\vx). \label{eq::ERM}
\end{equation}
Here $\Psi(\cdot)$ is a PAR with quantization values $\cQ$ and the slopes $\cA$.

We now introduce a set of conditions on the data distribution, the model, and the PAR.

\begin{assumption}
\label{assumption::quantization-guarantee}
    The marginal data distribution $\cD_\va$, the model $f(\va; \vx)$, and the PAR regularizer $\Psi(\cdot)$ satisfy the following conditions:
    \begin{itemize}
        \item \textbf{(Continuous data distribution)} The marginal data distribution $\cD_\va$ is absolutely continuous with respect to the $p$-dimensional Lebesgue measure. 
        \item \textbf{(Real analytic model)} The model $f:\bR^p\times \bR^d\to \bR$ is a real analytic function.
        \item \textbf{(Non-degenerate Jacobian)} For any critical point $\vx^\star$ and any index set $I$ with $|I| = n + 1$, and for any vectors $\vv \in \bR^d$, we all have
        \begin{equation}
            \det\left(\left[\nabla_{\vx} f(\va_1; \vx^\star), \cdots, \nabla_{\vx} f(\va_n; \vx^\star), \vv\right]_I\right)\not\equiv 0.
        \end{equation}
        \item \textbf{(Non-degenerate PAR)} For the PAR $\Psi(\cdot)$, $0$ is not in its slope set $\cA$.
    \end{itemize}
\end{assumption}
These conditions are relatively mild. First, we only assume absolute continuity of the data distribution and impose no further requirements such as sub-Gaussianity or bounded moments. Second, the real analyticity of the model $f$ is a standard assumption in the machine learning theory literature, and is satisfied by many common architectures, including feed-forward neural networks with analytic activation functions such as sigmoid or tanh; see, e.g., \cite{schmidt2023taylor, nguyen2017loss, watanabe2010asymptotic, kawaguchi2021recipe}. Third, the non-degenerate Jacobian condition rules out trivial models (e.g., constant functions) and ensures sufficient expressiveness. A similar assumption is also used in \cite{kawaguchi2021recipe}. Lastly, the non-degenerate PAR condition is necessary: if $0 \in \mathcal{A}$, the PAR family includes the zero function $\Psi(x) \equiv 0$, which yields no quantization effect.

\begin{theorem}[Quantization guarantee]
\label{thm::main-extended}
    Suppose \Cref{assumption::quantization-guarantee} holds. Then, with probability one, the quantization rate of any critical point $\vx^{\star}$ of the objective in \Cref{eq::ERM} is at least $1 - n/d$, i.e.,
\begin{equation}
    \qr(\vx^{\star}) \geq 1 - \frac{n}{d}.
\end{equation}
\end{theorem}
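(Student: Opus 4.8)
The plan is to reduce Theorem~\ref{thm::main-extended} to exactly the over-determination phenomenon that drives the generalized-linear-model case (\Cref{thm::main}), and then replace the linear-algebraic ``nonzero polynomial'' step by a real-analytic one. First I would record the critical-point structure: writing $c_i := \partial_1 \ell(f(\va_i;\vx^\star),b_i)$ and $\g_i := \nabla_\vx f(\va_i;\vx^\star)$, stationarity gives $\tfrac1n\sum_{i=1}^n c_i \g_i \in -\lambda\,\partial\Psi(\vx^\star)$, so the loss gradient lies in the subspace $V = \Span\{\g_1,\ldots,\g_n\}$, which has dimension at most $n$. The coordinatewise optimality conditions stated above then say that for every \emph{unquantized} coordinate $j$ (i.e.\ $x_j^\star\notin\cQ$), the $j$-th loss-gradient entry equals one of the finitely many values $\{\pm\lambda a_k\}$, all of which are nonzero because $0\notin\cA$.

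Next I would argue by contradiction. If $\qr(\vx^\star) < 1 - n/d$, then the unquantized set $U$ has $|U|\ge n+1$; fix any $I\subseteq U$ with $|I|=n+1$ and let $\vv^\star$ collect the corresponding gradient values, a \emph{fixed nonzero} vector drawn from a finite list once the interval of each coordinate is specified. Since the loss gradient lies in $V$, its restriction to $I$ lies in $\col(\mH_I)$ with $\mH_I = [(\g_1)_I,\ldots,(\g_n)_I]\in\bR^{(n+1)\times n}$; hence $\vv^\star\in\col(\mH_I)$, forcing the $(n+1)\times(n+1)$ matrix $[\mH_I \mid \vv^\star]$ to be singular, i.e.
\[
\det\!\big([\nabla_\vx f(\va_1;\vx^\star),\ldots,\nabla_\vx f(\va_n;\vx^\star),\,\vv^\star]_I\big) = 0,
\]
which is precisely the quantity appearing in the non-degenerate Jacobian condition. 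A union bound over the finitely many choices of $I$ (there are $\binom{d}{n+1}$) and of the discrete vector $\vv^\star$ (at most $(2m)^{n+1}$ of them) reduces the claim to showing that, for each fixed configuration $(I,\vv^\star)$, the set of data admitting such a critical point is Lebesgue-null. Note that $\vv^\star$ and $\mH_I$ do not involve the labels $b_i$, which is consistent with the assumption constraining only $\cD_\va$.

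For a fixed configuration, the map $(\va_{1:n},\vx)\mapsto \det([\nabla_\vx f(\va_1;\vx),\ldots,\nabla_\vx f(\va_n;\vx),\vv^\star]_I)$ is real analytic since $f$ is, and the non-degenerate Jacobian assumption is exactly the statement that it is not identically zero; the zero set of a nontrivial real-analytic function is Lebesgue-null, and absolute continuity of $\cD_\va$ then gives the desired null probability. The only gap relative to the clean GLM argument, and the step I expect to be the crux, is that here $\g_i$ depends on the \emph{data-dependent} point $\vx^\star$, so the vanishing determinant is not a priori a function of the data alone (in the GLM case the columns are fixed rows of $\mA$ and the determinant is a genuine polynomial in the data). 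To close this I would treat $\vx$ as an auxiliary variable, so that $D(\va_{1:n},\vx)$ is a single nontrivial analytic function whose joint vanishing locus is null, and then pass to a null set in data space along the critical-point locus via the analytic implicit function theorem (using genericity of the stationarity Jacobian) or a stratification argument, keeping $D(\va_{1:n},\vx^\star(\va_{1:n}))$ analytic and, by the assumption, not identically zero. This is exactly the role of the non-degenerate Jacobian hypothesis: it excludes degenerate models (e.g.\ those in which several coordinates are forced to move in lockstep) for which the determinant would collapse identically and the guarantee would genuinely fail.
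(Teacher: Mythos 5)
Your proposal follows essentially the same route as the paper's proof: assume more than $n$ unquantized coordinates, observe that the stationarity condition forces a fixed vector of slope values $\vv$ (with entries in the finite set $\cA$, all nonzero) into the column space of the restricted Jacobian $\mJ_{I^c}$, conclude that an $(n+1)\times(n+1)$ determinant vanishes, and rule this out almost surely via the non-degenerate Jacobian assumption combined with the fact that the zero set of a nontrivial real-analytic function is Lebesgue-null and $\cD_\va$ is absolutely continuous, with your finite union bound over index sets and slope vectors being implicit in the paper's ``for an arbitrary $\vv$'' phrasing. The subtlety you flag as the crux --- that $\nabla_{\vx} f(\va_i;\vx^\star)$ depends on the data through the data-dependent critical point $\vx^\star$, so the vanishing determinant is not a priori an analytic function of $(\va_1,\ldots,\va_n)$ alone --- is genuine, but the paper's proof does not resolve it either: it defines $H(\va_1,\ldots,\va_n)=\det([\mJ_{I^c},\vv])$ with $\vx^\star$ implicitly held fixed and invokes the assumption directly, so your implicit-function/stratification sketch is an (incomplete but honest) attempt to patch a gap that the published argument silently shares rather than a deviation from it.
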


Finally, we prove \Cref{thm::main,thm::main-extended}. We begin with the proof of \Cref{thm::main-extended}, and then verify that the generalized linear model setting satisfies \Cref{assumption::quantization-guarantee}. This allows us to apply \Cref{thm::main-extended} and thereby establish \Cref{thm::main} as a special case.
\begin{proof}[Proof of \Cref{thm::main-extended}]
    For any critical point $\vx^\star$ of the objective in \Cref{eq::ERM}, it satisfies the first-order optimality condition 
    \begin{equation}
        \vzero\in \mJ\vdelta+\lambda \partial \Psi(\vx^\star).
    \end{equation}
    Here $\mJ$ is the Jacobian matrix defined by 
    \begin{equation}
        \mJ=\left[\nabla_{\vx} f(\va_1; \vx^\star), \cdots, \nabla_{\vx} f(\va_n; \vx^\star)\right]\in \bR^{d\times n},
    \end{equation}
    and $\vdelta=\left[\nabla_f \ell(f(\va_1; \vx^\star), b_1),\cdots, \nabla_f \ell(f(\va_n; \vx^\star), b_n)\right]^{\top}$ is the residual vector.

    Then, we partition $\vx^\star$ into two parts, i.e., $\vx^\star=[\vx^\star_I; \vx^\star_{I^c}]$ where $\vx^\star_I\in \cQ$ stands for the quantized part and $\vx^\star_{I^c}$ stands for the non-quantized part. Similarly, we partition the Jacobian matrix $\mJ$ as $\mJ=[\mJ_I; \mJ_{I^c}]$. Suppose that $|I|\geq d-n$, then we are done. Hence, we assume that $|I|< d-n$. Now let us consider the optimality condition over the index set $[d]- I$, which is given by 
    \begin{equation}
        -\frac{1}{\lambda}\mJ_{I^c}\vdelta\in \partial\Psi\left(\vx^\star_{I^c}\right).
    \end{equation}
    Since $\vx_{I^c}$ is the non-quantized part, we have $\partial\Psi\left(x_i^\star\right)\in \cA$ for all $i\in I^c$, which implies that 
    \begin{equation}
        -\frac{1}{\lambda}\mJ_{I^c}\vdelta\in \cA^{d-|I|}.
    \end{equation}
    Now, we show that this event happens with probability zero. To this end, for an arbitrary $\vv\in \cA^{d-|I|}$, we have 
    \begin{equation}
        \bP\left(-\frac{1}{\lambda}\mJ_{I^c}\vdelta=\vv\right)\leq \bP\left(\vv\in \col\left(\mJ_{I^c}\right)\right)\leq \bP\left(\det\left(\left[\mJ_{I^c},\vv\right]\right)=0\right).
    \end{equation}
    To proceed, according to \Cref{assumption::quantization-guarantee}, we know that the function
    \begin{equation}
        H(\va_1, \cdots, \va_n)=\det\left(\left[\mJ_{I^c},\vv\right]\right)=\det\left(\left[\nabla_{\vx} f(\va_1; \vx^\star), \cdots, \nabla_{\vx} f(\va_n; \vx^\star), \vv\right]_{I^c}\right)
    \end{equation}
    is also a real analytic function and $H(\va_1, \cdots, \va_n)\not\equiv 0$. As a consequence, the set $S=\{(\va_1, \cdots, \va_n): H(\va_1, \cdots, \va_n)=0\}$ has zero Lebesgue measure \cite{mityagin2015zero}. Additionally, since $\va_i\sim \cD_\va$ is absolutely continuous with respect to the $p$-dimensional Lebesgue measure, we have 
    \begin{equation}
        \bP\left(\det\left(\left[\mJ_{I^c},\vv\right]\right)=0\right)=\bP(S)=0.
    \end{equation}
Therefore, we show that with probability $1$, $-\frac{1}{\lambda}\mJ_{I^c}\vdelta\notin \cA^{d-|I|}$, which leads to the contradiction.
\end{proof}

\begin{proof}[Proof of \Cref{thm::main}]
    It suffices to check the \textit{Non-degenerate Jacobian} condition in \Cref{assumption::quantization-guarantee}. Note that
    \begin{equation}
        \left[\nabla_{\vx} f(\va_1; \vx^\star), \cdots, \nabla_{\vx} f(\va_n; \vx^\star), \vv\right]=\left[\va_1, \cdots, \va_n, \vv\right].
    \end{equation}
    Therefore, for any index set $I\subset [d]$ satisfying $|I|=n+1$ and $\vv\in \cA^d$, we can always choose $[\va_1, \cdots, \va_n]_I$ such that $\left[\va_1, \cdots, \va_n, \vv\right]_I$ is non-degenerate. This completes the proof.
\end{proof}

\section{Optimization algorithms}
\label{sec::optimization}
In this section, we first derive the proximal mappings for different variants of PARs and then introduce optimization algorithms that can leverage these mappings to efficiently solve the PARO problem.

\subsection{Proximal mapping of PARs}
\label{sec::proximal-mappings}
In this section, we study the \textbf{proximal mappings} for different variants of PARs, including convex, quasiconvex, and nonconvex formulations, which are defined by 
\begin{equation}
    \prox_{\lambda\Psi}(\vx)=\argmin_{\vz}\left\{\lambda\Psi(\vz)+\frac{1}{2}\norm{\vx-\vz}^2\right\}.
\end{equation}
Since we consider coordinate-wise PARs, the proximal mapping can be decomposed across coordinates, so we only need to analyze the scalar case
\begin{equation}
    \prox_{\lambda\Psi}(x)=\argmin_{z}\left\{\lambda\Psi(z)+\frac{1}{2}(x-z)^2\right\}.
\end{equation}

Proximal mapping is a key tool for optimizing regularized objectives and exhibits a strong connection to quantization with PAR: it often maps inputs to predefined discrete levels, effectively acting as a quantizer.

For any PAR, the proximal mapping is also piecewise affine, though it may exhibit discontinuities. Within each linear segment of the regularizer, the composite objective $\lambda\Psi(z)+\frac{1}{2}(x-z)^2$ becomes a strongly convex quadratic function, whose minimizer either coincides with an endpoint of the segment or corresponds to the unconstrained minimizer, which is also an affine function of $x$, provided it lies within the interval. In the remainder of this section, we derive explicit formulas for the proximal mappings of several representative PARs and visualize them in \Cref{fig::prox-mappings}. Notably, the proximal mappings act as (soft) quantizers, further revealing the quantization effect induced by PAR.

\paragraph{Convex PAR.}
Suppose that the set of target quantization values is given as
\(
\mathcal{Q} = \{0, \pm q_1,\ldots,\pm q_m\}\) 
with $0=q_0<q_1<\cdots<q_m$.
A convex PAR can be defined as
\begin{equation}\label{eqn:par-convex}
\Psi(x) = \max_{k\in\{0,\ldots,m\}} \{ a_k(|x|-q_k) + b_k \}, 
\end{equation}
where the slopes $\{a_k\}_{k=0}^m$ are free parameters that satisfy
$0 \leq a_0 < a_1 < \cdots < a_m = +\infty$, 
and $\{b_k\}_{k=0}^m$ are determined 
by setting $b_0=0$, and 
\begin{equation}
    b_k = b_{k-1} + a_{k-1}(q_k-q_{k-1}), \qquad k=1,\ldots,m.
\end{equation}
Its proximal mapping is provided by
    \begin{equation}
        \prox_{\lambda\Psi}(x) 
= \begin{cases}
\sign(x)\,q_k & \text{if}~|x|\in \left[\lambda a_{k-1}+q_k,\;\lambda a_k+q_k\right], \\
x-\sign(x)\lambda a_k & \text{if}~|x|\in\left[\lambda a_k+ q_k,\;\lambda a_k+ q_{k+1}\right].
\end{cases}
    \end{equation}
As shown in \Cref{fig::prox-mappings} (top row), when $x$ falls within the intervals $\pm\left[\lambda a_{k-1}+q_k,\;\lambda a_k+q_k\right]$, the proximal mapping quantizes it to the corresponding quantization value $\sign(x)q_k$. Moreover, as the regularization strength $\lambda$ increases, these quantization intervals become wider, making it more likely for the proximal mapping to produce a quantized solution when incorporated into an optimization algorithm.

\paragraph{Quasiconvex PAR.}
\begin{figure}
    \centering
     \begin{subfigure}[b]{0.3\textwidth}
         \centering
         \begin{tikzpicture}[scale=0.8]
    \tikzset{myarrow/.style={-{Straight Barb[scale=1]}, line width=\mylinewidthone}}
    \tikzset{thickline/.style={line width=\mylinewidthtwo}}
    
    \draw[myarrow] (-3,0) -- (3,0);
    \draw[myarrow] (0,0) -- (0,3);

    \draw[thickline]
        (-3,2) -- (-2,1.677) -- (-1,1) -- (0,0)
        -- (1,1) -- (2,1.677) -- (3,2);

    \draw[dashed, line width=1.2pt] (-2,0) -- (-2,1.677);
    \draw[dashed, line width=1.2pt] (-1,0) -- (-1,1);
    \draw[dashed, line width=1.2pt] (2,0) -- (2,1.677);
    \draw[dashed, line width=1.2pt] (1,0) -- (1,1);
    
    \node at (-2,-0.3) {\large $-q_2$};
    \node at (-1,-0.3) {\large $-q_1$};
    \node at (0,-0.3) {\large $0$};
    \node at (1,-0.3) {\large $q_1$};
    \node at (2,-0.3) {\large $q_2$};

\end{tikzpicture}
\caption{}
         \label{fig::quasiconvex-1}
     \end{subfigure}
     \hspace{2.5cm}
     \begin{subfigure}[b]{0.3\textwidth}
         \centering
         \begin{tikzpicture}[scale=0.8]
    \tikzset{myarrow/.style={-{Straight Barb[scale=1]}, line width=\mylinewidthone}}
    \tikzset{thickline/.style={line width=\mylinewidthtwo}}
    
    \draw[myarrow] (-3,0) -- (3,0);
    \draw[myarrow] (0,0) -- (0,3);

    \draw[thickline]
        (-3,2) -- (-2,1) -- (-1,1) -- (0,0)
        -- (1,1) -- (2,1) -- (3,2);

    \draw[dashed, line width=1.2pt] (-2,0) -- (-2,1);
    \draw[dashed, line width=1.2pt] (-1,0) -- (-1,1);
    \draw[dashed, line width=1.2pt] (2,0) -- (2,1);
    \draw[dashed, line width=1.2pt] (1,0) -- (1,1);
    
    \node at (-2,-0.3) {\large $-q_2$};
    \node at (-1,-0.3) {\large $-q_1$};
    \node at (0,-0.3) {\large $0$};
    \node at (1,-0.3) {\large $q_1$};
    \node at (2,-0.3) {\large $q_2$};

\end{tikzpicture}
\caption{}
         \label{fig::quasiconvex-2}
     \end{subfigure}
     \caption{Two different forms of quasiconvex PARs.}
    \label{fig::quasiconvex-par}
\end{figure}
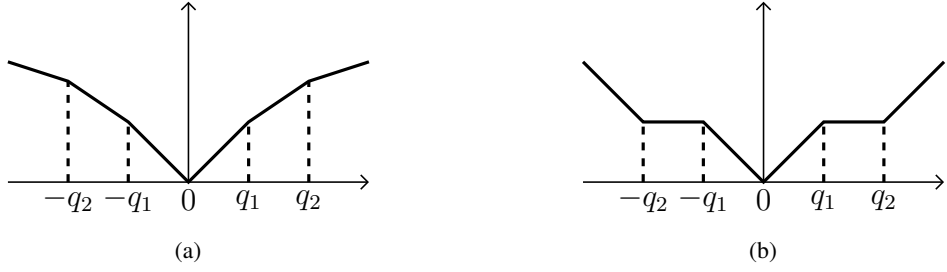
In this section, we study the proximal mapping for a class of quasiconvex PARs. A function $f:\bR\to \bR$ is said to be quasiconvex if, for any $x, y\in \bR$ and $\lambda\in [0, 1]$, it satisfies 
$$f(\lambda x+(1-\lambda)y)\leq \max\{f(x), f(y)\}.$$
\Cref{fig::quasiconvex-par} illustrates two representative quasiconvex PARs. In this paper, we primarily focus on the variant illustrated in \Cref{fig::quasiconvex-2}, as it empirically promotes quantization not only at zero but also at nonzero levels. In contrast, the version in \Cref{fig::quasiconvex-1} exhibits local concavity around all breakpoints except zero, which makes it less effective at inducing quantization to nonzero levels.

Although \Cref{fig::quasiconvex-2} technically violates the slope condition stated in \Cref{thm::main}, since it contains flat regions with zero slope, this issue is not problematic in practice. One can slightly perturb the zero slopes (for example, to $\pm \epsilon$ for a small $\epsilon > 0$) without significantly affecting the solution or its quantization guarantees. For clarity of exposition, we continue with the unperturbed version shown in \Cref{fig::quasiconvex-2}.

We consider the following quasiconvex PAR with uniformly spaced quantization gaps
\begin{equation}
\label{eq::psi-quasiconvex-general}
    \Psi(x)=\begin{cases}
        |x|-\frac{k}{2}q\quad \text{if}\quad kq\leq |x|\leq \frac{2k+1}{2}q,\\
        \frac{k+1}{2}q\quad \text{if}\quad\frac{2k+1}{2}q\leq |x|\leq (k+1)q.
    \end{cases}
\end{equation}
We focus on this equal-gap setting since general nonuniform cases yield more complex proximal mappings. We now characterize the proximal mapping for this PAR:
\begin{itemize}
    \item When the regularization strength $\lambda\leq q$, we have
    \begin{equation}
        \prox_{\lambda \Psi}(x)=\begin{cases}
        \sign(x)kq\quad \text{if}\quad kq\leq |x|\leq kq+\lambda,\\
        \sign(x)\left(|x|-\lambda\right)\quad \text{if}\quad kq+\lambda\leq |x|\leq \frac{2k+1}{2}q+\frac{\lambda}{2},\\
        \sign(x)|x|\quad \text{if}\quad \frac{2k+1}{2}q+\frac{\lambda}{2}\leq |x|\leq (k+1)q.
        \end{cases}
    \end{equation}
    \item When the regularization strength $\lambda\geq q$, we have
    \begin{equation}
        \prox_{\lambda \Psi}(x)=\sign(x)\left\lfloor \frac{|x|-\frac{\lambda}{2}}{q}\right\rceil q.
    \end{equation}
\end{itemize}
Unlike the convex PAR case, when $\lambda$ exceeds a certain threshold (e.g., $\lambda\geq q$), the proximal operator becomes a hard quantizer, mapping inputs exactly to discrete levels in the quantization set $\cQ$.

\begin{figure}[p]
    \centering
     \caption*{\textbf{Convex PAR}}
     \begin{subfigure}[b]{0.3\textwidth}
         \centering
         \begin{tikzpicture}[scale=0.55]
    \tikzset{myarrow/.style={-{Straight Barb[scale=1]}, line width=\mylinewidthone}}
    \tikzset{thickline/.style={line width=\mylinewidthtwo}}
    
    \draw[myarrow] (-4.8,0) -- (4.8,0);
    \draw[myarrow] (0,-3) -- (0,3);

    \draw[dashed, line width=0.9pt, gray] (-3,-3) -- (3, 3);
    \draw[dashed, line width=0.7pt, gray] (0,1) -- (1.4, 1);
    \draw[dashed, line width=0.7pt, gray] (0, 2) -- (2.8, 2);
    \draw[dashed, line width=0.7pt, gray] (0,-1) -- (-1.4, -1);
    \draw[dashed, line width=0.7pt, gray] (0, -2) -- (-2.8, -2);

    \draw[thickline]
        (-4.2, -2) -- (-2.4, -2) -- (-1.4, -1) -- (-1.2, -1) -- (-0.2, 0) -- (0,0)
        -- (0.2,0) -- (1.2, 1) -- (1.4, 1) -- (2.4, 2) -- (4.2, 2);

    \node at (0.7,-1) {\large $-q_1$};
    \node at (0.7,-2) {\large $-q_2$};
    \node at (0.4,-0.4) {\large $0$};
    \node at (-0.5,1) {\large $q_1$};
    \node at (-0.5,2) {\large $q_2$};

\end{tikzpicture}
          \caption{small $\lambda$}
         \label{fig:convex_lambda-0.2}
     \end{subfigure}
     \hfill
     \begin{subfigure}[b]{0.3\textwidth}
         \centering
         \begin{tikzpicture}[scale=0.55]
    \tikzset{myarrow/.style={-{Straight Barb[scale=1]}, line width=\mylinewidthone}}
    \tikzset{thickline/.style={line width=\mylinewidthtwo}}
    
    \draw[myarrow] (-4.8,0) -- (4.8,0);
    \draw[myarrow] (0,-3) -- (0,3);

    \draw[thickline]
        (-4.2, -2) -- (-3.2, -2) -- (-2.2, -1) -- (-1.6, -1) -- (-0.6, 0) -- (0,0)
        -- (0.6,0) -- (1.6, 1) -- (2.2, 1) -- (3.2, 2) -- (4.2, 2);

    \draw[dashed, line width=0.9pt, gray] (-3,-3) -- (3, 3);
    \draw[dashed, line width=0.7pt, gray] (0,1) -- (1.4, 1);
    \draw[dashed, line width=0.7pt, gray] (0, 2) -- (3.2, 2);
    \draw[dashed, line width=0.7pt, gray] (0,-1) -- (-1.4, -1);
    \draw[dashed, line width=0.7pt, gray] (0, -2) -- (-3.2, -2);
    \node at (0.7,-1) {\large $-q_1$};
    \node at (0.7,-2) {\large $-q_2$};
    \node at (0.4,-0.4) {\large $0$};
    \node at (-0.5,1) {\large $q_1$};
    \node at (-0.5,2) {\large $q_2$};

\end{tikzpicture}
          \caption{medium $\lambda$}
         \label{fig:convex_lambda-0.8}
     \end{subfigure}
     \hfill
     \begin{subfigure}[b]{0.3\textwidth}
         \centering
         \begin{tikzpicture}[scale=0.55]
    \tikzset{myarrow/.style={-{Straight Barb[scale=1]}, line width=\mylinewidthone}}
    \tikzset{thickline/.style={line width=\mylinewidthtwo}}
    
    \draw[myarrow] (-4.8,0) -- (4.8,0);
    \draw[myarrow] (0,-3) -- (0,3);

    \draw[thickline]
        (-4.5, -2) -- (-4, -2) -- (-3, -1) -- (-2, -1) -- (-1, 0) -- (0,0)
        -- (1,0) -- (2, 1) -- (3, 1) -- (4, 2) -- (4.5, 2);

    \draw[dashed, line width=0.9pt, gray] (-3,-3) -- (3, 3);
    \draw[dashed, line width=0.7pt, gray] (0,1) -- (2, 1);
    \draw[dashed, line width=0.7pt, gray] (0, 2) -- (4, 2);
    \draw[dashed, line width=0.7pt, gray] (0,-1) -- (-2, -1);
    \draw[dashed, line width=0.7pt, gray] (0, -2) -- (-4, -2);
    \node at (0.7,-1) {\large $-q_1$};
    \node at (0.7,-2) {\large $-q_2$};
    \node at (0.4,-0.4) {\large $0$};
    \node at (-0.5,1) {\large $q_1$};
    \node at (-0.5,2) {\large $q_2$};

\end{tikzpicture}
          \caption{large $\lambda$}
         \label{fig:convex_lambda-1}
     \end{subfigure}\\[4ex]
     \caption*{\textbf{Quasiconvex PAR}}
     \begin{subfigure}[b]{0.3\textwidth}
         \centering
         \begin{tikzpicture}[scale=0.4]
    \tikzset{myarrow/.style={-{Straight Barb[scale=1]}, line width=\mylinewidthone}}
    \tikzset{thickline/.style={line width=\mylinewidthtwo}}
    
    \draw[myarrow] (-5.8,0) -- (5.8,0);
    \draw[myarrow] (0,-4.5) -- (0,4.5);

    \draw[dashed, line width=0.9pt, gray] (-4,-4) -- (4, 4);
    \draw[dashed, line width=0.7pt, gray] (-2,-2) -- (-2, 0);
    \draw[dashed, line width=0.7pt, gray] (2,0) -- (2, 2);
    \draw[dashed, line width=0.7pt, gray] (-4,-4) -- (-4, 0);
    \draw[dashed, line width=0.7pt, gray] (4,0) -- (4, 4);
    \draw[dashed, line width=0.7pt, gray] (-2,-2) -- (0, -2);
    \draw[dashed, line width=0.7pt, gray] (0,2) -- (2, 2);
    \draw[dashed, line width=0.7pt, gray] (-4,-4) -- (0, -4);
    \draw[dashed, line width=0.7pt, gray] (0, 4) -- (4, 4);

    \draw[thickline]
        (-4, -4) -- (-3.2, -3.2);
    \draw[thickline]
        (-3.2, -2.8) -- (-2.4, -2) -- (-2, -2) -- (-1.2, -1.2);
    \draw[thickline]
         (-1.2, -0.8) -- (-0.4, 0) -- (0,0)
        -- (0.4,0) -- (1.2, 0.8);
    \draw[thickline]
         (1.2, 1.2) -- (2, 2) -- (2.4, 2) -- (3.2, 2.8);
    \draw[thickline]
         (3.2, 3.2) -- (4, 4);
    \draw[thickline, dashed]
         (-3.2, -3.2) -- (-3.2, -2.8);
    \draw[thickline, dashed]
         (-1.2, -1.2) -- (-1.2, -0.8);
    \draw[thickline, dashed]
         (1.2, 0.8) -- (1.2, 1.2);
    \draw[thickline, dashed]
         (3.2, 2.8) -- (3.2, 3.2);

    \filldraw [black] (-3.2, -3.2) circle (3.5pt);
    \filldraw [black] (-3.2, -2.8) circle (3.5pt);
    \filldraw [black] (-1.2, -1.2) circle (3.5pt);
    \filldraw [black] (-1.2, -0.8) circle (3.5pt);
    \filldraw [black] (3.2, 3.2) circle (3.5pt);
    \filldraw [black] (3.2, 2.8) circle (3.5pt);
    \filldraw [black] (1.2, 1.2) circle (3.5pt);
    \filldraw [black] (1.2, 0.8) circle (3.5pt);

    \node at (-2.9,-0.6) {\large $-q_1$};
    \node at (-4.9,-0.6) {\large $-q_2$};
    \node at (0.4,-0.6) {\large $0$};
    \node at (2,-0.6) {\large $q_1$};
    \node at (4,-0.6) {\large $q_2$};
    \node at (1,-2) {\large $-q_1$};
    \node at (1,-4) {\large $-q_2$};
    \node at (-0.7,2) {\large $q_1$};
    \node at (-0.7,4) {\large $q_2$};

\end{tikzpicture}
          \caption{small $\lambda$}
         \label{fig:quasiconvex_lambda-0.2}
     \end{subfigure}
     \hfill
     \begin{subfigure}[b]{0.3\textwidth}
         \centering
         \begin{tikzpicture}[scale=0.4]
    \tikzset{myarrow/.style={-{Straight Barb[scale=1]}, line width=\mylinewidthone}}
    \tikzset{thickline/.style={line width=\mylinewidthtwo}}
    
    \draw[myarrow] (-5.8,0) -- (5.8,0);
    \draw[myarrow] (0,-4.5) -- (0,4.5);

    \draw[dashed, line width=0.9pt, gray] (-4,-4) -- (4, 4);
    \draw[dashed, line width=0.7pt, gray] (-2,-2) -- (-2, 0);
    \draw[dashed, line width=0.7pt, gray] (2,0) -- (2, 2);
    \draw[dashed, line width=0.7pt, gray] (-4,-2) -- (-4, 0);
    \draw[dashed, line width=0.7pt, gray] (4,0) -- (4, 2);
    \draw[dashed, line width=0.7pt, gray] (-2,-2) -- (0, -2);
    \draw[dashed, line width=0.7pt, gray] (0,2) -- (2, 2);
    \draw[dashed, line width=0.7pt, gray] (-4,-4) -- (0, -4);
    \draw[dashed, line width=0.7pt, gray] (0, 4) -- (4, 4);
    \draw[thickline]
        (-5, -4) -- (-4, -4);
    \draw[thickline]
        (-4, -2) -- (-2, -2);
    \draw[thickline]
        (-2,0) -- (0,0)
        -- (2,0);
    \draw[thickline]
        (2, 2) -- (4, 2);
    \draw[thickline]
        (5, 4) -- (4, 4);

    \draw[thickline, dashed]
         (-4, -4) -- (-4, -2);
    \draw[thickline, dashed]
         (-2, -2) -- (-2,0);
    \draw[thickline, dashed]
         (2,0) -- (2, 2);
    \draw[thickline, dashed]
         (4, 2) -- (4, 4);

    \filldraw [black] (-4, -4) circle (3.5pt);
    \filldraw [black] (-4, -2) circle (3.5pt);
    \filldraw [black] (-2, -2) circle (3.5pt);
    \filldraw [black] (-2, 0) circle (3.5pt);
    \filldraw [black] (4, 4) circle (3.5pt);
    \filldraw [black] (4, 2) circle (3.5pt);
    \filldraw [black] (2, 2) circle (3.5pt);
    \filldraw [black] (2, 0) circle (3.5pt);

    \node at (-2.9,-0.6) {\large $-q_1$};
    \node at (-4.9,-0.6) {\large $-q_2$};
    \node at (0.4,-0.6) {\large $0$};
    \node at (2,-0.6) {\large $q_1$};
    \node at (4,-0.6) {\large $q_2$};
    \node at (1,-2) {\large $-q_1$};
    \node at (1,-4) {\large $-q_2$};
    \node at (-0.7,2) {\large $q_1$};
    \node at (-0.7,4) {\large $q_2$};

\end{tikzpicture}
          \caption{medium $\lambda$}
         \label{fig:quasiconvex_lambda-0.8}
     \end{subfigure}
     \hfill
     \begin{subfigure}[b]{0.3\textwidth}
         \centering
         \begin{tikzpicture}[scale=0.4]
    \tikzset{myarrow/.style={-{Straight Barb[scale=1]}, line width=\mylinewidthone}}
    \tikzset{thickline/.style={line width=\mylinewidthtwo}}
    
    \draw[myarrow] (-5.8,0) -- (5.8,0);
    \draw[myarrow] (0,-4.5) -- (0,4.5);

    \draw[dashed, line width=0.9pt, gray] (-4,-4) -- (4, 4);
    \draw[dashed, line width=0.7pt, gray] (-3,-2) -- (0, -2);
    \draw[dashed, line width=0.7pt, gray] (0,2) -- (3, 2);
    \draw[dashed, line width=0.7pt, gray] (-5,-4) -- (0, -4);
    \draw[dashed, line width=0.7pt, gray] (0, 4) -- (5, 4);
    \draw[thickline]
        (-5.5, -4) -- (-5, -4);
    \draw[thickline]
        (-5, -2) -- (-3, -2);
    \draw[thickline]
        (-3,0) -- (3,0);
    \draw[thickline]
        (3, 2) -- (5, 2);
    \draw[thickline]
        (5, 4) -- (5.5, 4);

    \draw[thickline, dashed]
         (-5, -4) -- (-5, -2);
    \draw[thickline, dashed]
         (-3, -2) -- (-3,0);
    \draw[thickline, dashed]
         (3,0) -- (3, 2);
    \draw[thickline, dashed]
         (5, 2) -- (5, 4);

    \filldraw [black] (-5, -4) circle (3.5pt);
    \filldraw [black] (-5, -2) circle (3.5pt);
    \filldraw [black] (-3, -2) circle (3.5pt);
    \filldraw [black] (-3, 0) circle (3.5pt);
    \filldraw [black] (5, 4) circle (3.5pt);
    \filldraw [black] (5, 2) circle (3.5pt);
    \filldraw [black] (3, 2) circle (3.5pt);
    \filldraw [black] (3, 0) circle (3.5pt);

    \node at (-2.9,-0.6) {\large $-q_1$};
    \node at (-4.9,-0.6) {\large $-q_2$};
    \node at (0.4,-0.6) {\large $0$};
    \node at (2,-0.6) {\large $q_1$};
    \node at (4,-0.6) {\large $q_2$};
    \node at (1,-2) {\large $-q_1$};
    \node at (1,-4) {\large $-q_2$};
    \node at (-0.7,2) {\large $q_1$};
    \node at (-0.7,4) {\large $q_2$};

\end{tikzpicture}
          \caption{large $\lambda$}
         \label{fig:quasiconvex_lambda-1}
     \end{subfigure}\\[4ex]
    \caption*{\textbf{Nonconvex PAR}}
    \begin{subfigure}[b]{0.3\textwidth}
         \centering
         \begin{tikzpicture}[scale=0.65]
    \tikzset{myarrow/.style={-{Straight Barb[scale=1]}, line width=\mylinewidthone}}
    \tikzset{thickline/.style={line width=\mylinewidthtwo}}
    
    \draw[myarrow] (-3.5,0) -- (3.5,0);
    \draw[myarrow] (0,-3) -- (0,3);

    \draw[dashed, line width=0.7pt, gray] (-3, -3) -- (3, 3);

    \draw[dashed, line width=0.7pt, gray] (-2,-2) -- (-2, 0);
    \draw[dashed, line width=0.7pt, gray] (2, 0) -- (2, 2);
    \draw[dashed, line width=0.7pt, gray] (-2,-2) -- (0, -2);
    \draw[dashed, line width=0.7pt, gray] (0, 2) -- (2, 2);

    \draw[thickline]
        (-3, -2.8) -- (-2.2, -2) -- (-1.8, -2) -- (-1, -1.2);
    \draw[thickline]
        (-1, -0.8) -- (-0.2,0) -- (0,0)
        -- (0.2,0) -- (1, 0.8);
    \draw[thickline]
        (1, 1.2) -- (1.8, 2) -- (2.2, 2) -- (3, 2.8);
    \draw[thickline, dashed]
        (-1, -1.2) -- (-1, -0.8);
    \draw[thickline, dashed]
        (1, 1.2) -- (1, 0.8);

    \filldraw [black] (-3, -2.8) circle (2.5pt);
    \filldraw [black] (-1, -1.2) circle (2.5pt);
    \filldraw [black] (-1, -0.8) circle (2.5pt);
    \filldraw [black] (3, 2.8) circle (2.5pt);
    \filldraw [black] (1, 1.2) circle (2.5pt);
    \filldraw [black] (1, 0.8) circle (2.5pt);
    
    \node at (-2.7,-0.4) {\large $-q_1$};
    \node at (0.7,-2) {\large $-q_1$};
    \node at (0.4,-0.4) {\large $0$};
    \node at (2,-0.4) {\large $q_1$};
    \node at (-0.4,2) {\large $q_1$};

\end{tikzpicture}
          \caption{small $\lambda$}
         \label{fig:nonconvex_lambda-0.2}
     \end{subfigure}
     \hfill
     \begin{subfigure}[b]{0.3\textwidth}
         \centering
         \begin{tikzpicture}[scale=0.65]
    \tikzset{myarrow/.style={-{Straight Barb[scale=1]}, line width=\mylinewidthone}}
    \tikzset{thickline/.style={line width=\mylinewidthtwo}}
    
    \draw[myarrow] (-3.5,0) -- (3.5,0);
    \draw[myarrow] (0,-3) -- (0,3);

    \draw[dashed, line width=0.9pt, gray] (-3,-3) -- (3, 3);
    \draw[dashed, line width=0.7pt, gray] (-2,-2) -- (-2, 0);
    \draw[dashed, line width=0.7pt, gray] (2,0) -- (2, 2);
    \draw[dashed, line width=0.7pt, gray] (-2,-2) -- (0, -2);
    \draw[dashed, line width=0.7pt, gray] (0, 2) -- (2, 2);

    \draw[thickline]
        (-3, -2.2) -- (-2.8, -2) -- (-1.2, -2) -- (-1, -1.8);
    \draw[thickline]
        (-1, -0.2) -- (-0.8,0) -- (0,0)
        -- (0.8,0) -- (1, 0.2);
    \draw[thickline]
        (1, 1.8) -- (1.2, 2) -- (2.8, 2) -- (3, 2.2);
        
    \draw[thickline, dashed]
        (-1, -1.8) -- (-1, -0.2);
    \draw[thickline, dashed]
        (1, 1.8) -- (1, 0.2);
        
    \filldraw [black] (-3, -2.2) circle (2.5pt);
    \filldraw [black] (-1, -1.8) circle (2.5pt);
    \filldraw [black] (-1, -0.2) circle (2.5pt);
    \filldraw [black] (3, 2.2) circle (2.5pt);
    \filldraw [black] (1, 1.8) circle (2.5pt);
    \filldraw [black] (1, 0.2) circle (2.5pt);

    \node at (-2.7,-0.4) {\large $-q_1$};
    \node at (0.7,-2) {\large $-q_1$};
    \node at (0.4,-0.4) {\large $0$};
    \node at (2,-0.4) {\large $q_1$};
    \node at (-0.4,2) {\large $q_1$};

\end{tikzpicture}
          \caption{medium $\lambda$}
         \label{fig:nonconvex_lambda-0.8}
     \end{subfigure}
     \hfill
     \begin{subfigure}[b]{0.3\textwidth}
         \centering
         \begin{tikzpicture}[scale=0.65]
    \tikzset{myarrow/.style={-{Straight Barb[scale=1]}, line width=\mylinewidthone}}
    \tikzset{thickline/.style={line width=\mylinewidthtwo}}
    
    \draw[myarrow] (-3.5,0) -- (3.5,0);
    \draw[myarrow] (0,-3) -- (0,3);

    \draw[dashed, line width=0.9pt, gray] (-3,-3) -- (3, 3);
    \draw[dashed, line width=0.7pt, gray] (-2,-2) -- (-2, 0);
    \draw[dashed, line width=0.7pt, gray] (2,0) -- (2, 2);
    \draw[dashed, line width=0.7pt, gray] (-2,-2) -- (0, -2);
    \draw[dashed, line width=0.7pt, gray] (0, 2) -- (2, 2);

    \draw[thickline]
        (-3, -2) -- (-1, -2);
    \draw[thickline]
        (-1,0) -- (0,0)
        -- (1,0);
    \draw[thickline]
        (1, 2) -- (3, 2);
    \draw[thickline, dashed]
        (-1, -2) -- (-1, -0);
    \draw[thickline, dashed]
        (1, 2) -- (1, 0);
    
    \filldraw [black] (-3, -2) circle (2.5pt);
    \filldraw [black] (-1, -2) circle (2.5pt);
    \filldraw [black] (-1, 0) circle (2.5pt);
    \filldraw [black] (3, 2) circle (2.5pt);
    \filldraw [black] (1, 2) circle (2.5pt);
    \filldraw [black] (1, 0) circle (2.5pt);

    \node at (-2.7,-0.4) {\large $-q_1$};
    \node at (0.7,-2) {\large $-q_1$};
    \node at (0.4,-0.4) {\large $0$};
    \node at (2,-0.4) {\large $q_1$};
    \node at (-0.4,2) {\large $q_1$};

\end{tikzpicture}
          \caption{large $\lambda$}
         \label{fig:nonconvex_lambda-1}
     \end{subfigure}\\[4ex]
     \caption{Proximal mappings for different PARs. Each row corresponds to a class of PARs: convex (top), quasiconvex (middle), and nonconvex (bottom). Each column illustrates the proximal mapping under a different regularization strength: small, medium, and large~$\lambda$.}
    \label{fig::prox-mappings}
\end{figure}

\paragraph{Nonconvex PAR.}
Given the quantization values $\cQ=\{q_1, \cdots, q_m\}$ satisfying $q_1<q_2<\cdots <q_m$ (here we allow general asymmetric quantization values), we consider the following nonconvex PAR 
\begin{equation}
\label{eq::psi-nonconvex}
    \Psi(x)=\begin{cases}
        x-q_k\quad \text{if }q_k\leq x\leq \frac{q_k+q_{k+1}}{2},\\
        -x+q_{k+1}\quad \text{if }\frac{q_k+q_{k+1}}{2}\leq x\leq q_{k+1}.
    \end{cases}
\end{equation}
Its proximal mapping $\prox_{\lambda\Psi}(\cdot)$ admits the following closed-form solution:
\begin{equation}
        \prox_{\lambda\Psi}(x)=\begin{cases}
        \clip\left(x-\lambda, q_k, \frac{q_k+q_{k+1}}{2} \right)\quad \text{if }q_k\leq x\leq \frac{q_k+q_{k+1}}{2},\\
        \clip\left(x+\lambda, \frac{q_k+q_{k+1}}{2},  q_{k+1}\right)\quad \text{if }\frac{q_k+q_{k+1}}{2}\leq x\leq q_{k+1}.
        \end{cases}
    \end{equation}
    Here the clip function $\clip(x, a, b)$ is defined by $\clip(x, a, b)=\min\{\max\{x, a\}, b\}$. In particular, if $\lambda\geq \frac{1}{2}\max_{1\leq k\leq m-1}(q_{k+1}-q_k)$, the proximal mapping reduces to an exact quantization mapping 
    \begin{equation}
        \prox_{\lambda\Psi}(x)=\argmin_{q\in \cQ}|x-q|.
    \end{equation}
Similar to the quasiconvex PAR, this nonconvex PAR induces hard quantization once $\lambda$ exceeds a certain threshold. However, unlike the quasiconvex case, where the input magnitude is first reduced before snapping to the nearest quantization level, the proximal mapping of the nonconvex PAR remains fixed once $\lambda$ surpasses the threshold.

Another interesting feature to notice is that (see \Cref{fig::prox-mappings}), for convex and nonconvex PARs, the magnigude of their output is no larger than that of the input; but for nonconvex PARs, this is no longer true.

\subsection{Proximal gradient method}
\label{sec::PG}
To minimize the PARO objective $F_{\lambda}(\vx)=f(\vx)+\lambda\Psi(\vx)$ where $f(\cdot)$ is a smooth loss function and $\Psi(\cdot)$ is a PAR, 
we first consider the classic proximal gradient method 
\begin{equation}\label{eq::prox-grad-algm}
    \vx^{t+1}=\prox_{\eta_t\lambda\Psi}\left(\vx^{t}-\eta_t\nabla f(\vx^t)\right).
\end{equation}
Here $\eta_t>0$ is the stepsize.
In practice, the stepsize $\eta_t$ is often selected adaptively using a line search scheme. 
We refer the reader to \cite{parikh2014proximal,beck2017book,wright_recht2022book} for a comprehensive overview of proximal gradient methods.
The following result gives the convergence guarantee of the proximal gradient algorithm (\ref{eq::prox-grad-algm}) for PARO.

\begin{theorem}
    \label{thm::convergence-prox-gd}
    Suppose $f(\cdot)$ is $L$-smooth and the stepsize $\eta_t\equiv\eta\leq \frac{1}{2L}$. Then, the following arguments hold
    \begin{itemize}
        \item \textbf{Convex case:} If both $f(\cdot)$ and $\Psi(\cdot)$ are convex, then we have 
        \begin{equation}
            F_{\lambda}\left(\vx^{T}\right)-F_{\lambda}^{\star} \leq \frac{\norm{\vx^{0}-\vx^{\star}}^2}{2 \eta T}.
        \end{equation}
        \item \textbf{General case:} If $F_{\lambda}(\vx)=f(x)+\lambda\Psi(\vx)$ is proper and coercive, then the iterates $\{\vx^t\}_{t=0}^{\infty}$ generated by (\ref{eq::prox-grad-algm}) are bounded. Moreover, any accumulation point $\vx^\star$ of $\{\vx^t\}$ satisfies $\bm{0} \in \partial F_\lambda(\vx^\star)$, i.e., $\vx^\star$ is a critical point.
    \end{itemize}
\end{theorem}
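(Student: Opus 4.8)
The plan is to build both parts from two elementary ingredients: the descent lemma afforded by $L$-smoothness of $f$, and the variational inequality characterizing the prox step. Writing $\vu^t = \vx^t - \eta\nabla f(\vx^t)$, the update $\vx^{t+1}\in\prox_{\eta\lambda\Psi}(\vu^t)$ means $\vx^{t+1}$ minimizes $\lambda\Psi(\vz)+\frac{1}{2\eta}\norm{\vz-\vu^t}^2$ (a minimizer exists since the quadratic dominates the piecewise-affine $\Psi$, making the objective coercive and lower semicontinuous). Fermat's rule, valid because the quadratic term is smooth, then yields a subgradient $\vg^t\in\lambda\partial\Psi(\vx^{t+1})$ with $\vg^t = -\nabla f(\vx^t)-\frac{1}{\eta}(\vx^{t+1}-\vx^t)$. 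This single identity drives everything.

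For the convex case I would first derive the standard one-step inequality. Comparing the prox objective at $\vx^{t+1}$ against an arbitrary competitor $\vz$ and invoking convexity of both $f$ and $\Psi$ together with the descent lemma gives, for every $\vz$,
\[
F_{\lambda}(\vx^{t+1}) - F_{\lambda}(\vz) \le \frac{1}{2\eta}\left(\norm{\vx^t-\vz}^2 - \norm{\vx^{t+1}-\vz}^2\right),
\]
where $\eta\le 1/(2L)\le 1/L$ is used to absorb the quadratic remainder. Setting $\vz=\vx^t$ shows $\{F_{\lambda}(\vx^t)\}$ is nonincreasing; setting $\vz=\vx^\star$ and summing over $t=0,\ldots,T-1$ telescopes the right-hand side to $\frac{1}{2\eta}\norm{\vx^0-\vx^\star}^2$. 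Using monotonicity to lower-bound the sum by $T\bigl(F_{\lambda}(\vx^T)-F_{\lambda}^\star\bigr)$ then yields the claimed $O(1/T)$ rate.

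For the general case the same two ingredients instead produce a sufficient-decrease estimate. Comparing the prox objective at $\vx^{t+1}$ against $\vz=\vx^t$ gives $\lambda\Psi(\vx^{t+1})+\inner{\nabla f(\vx^t)}{\vx^{t+1}-\vx^t}+\frac{1}{2\eta}\norm{\vx^{t+1}-\vx^t}^2\le \lambda\Psi(\vx^t)$, and adding the descent lemma for $f$ cancels the inner-product term to produce
\[
F_{\lambda}(\vx^{t+1}) \le F_{\lambda}(\vx^t) - \left(\tfrac{1}{2\eta}-\tfrac{L}{2}\right)\norm{\vx^{t+1}-\vx^t}^2,
\]
where $\tfrac{1}{2\eta}-\tfrac{L}{2}\ge\tfrac{L}{2}>0$ under $\eta\le 1/(2L)$. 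Monotone decrease confines every iterate to the sublevel set $\{F_{\lambda}\le F_{\lambda}(\vx^0)\}$, which is bounded by coercivity and closed by lower semicontinuity of $F_{\lambda}$, proving boundedness of $\{\vx^t\}$. Since $F_{\lambda}$ is then bounded below, summing the decrease forces $\sum_t\norm{\vx^{t+1}-\vx^t}^2<\infty$, hence $\norm{\vx^{t+1}-\vx^t}\to 0$.

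Finally, let $\vx^\star$ be any accumulation point with $\vx^{t_j}\to\vx^\star$; since consecutive iterates merge, also $\vx^{t_j+1}\to\vx^\star$. Passing to the limit in $\vg^{t_j}=-\nabla f(\vx^{t_j})-\frac{1}{\eta}(\vx^{t_j+1}-\vx^{t_j})$, continuity of $\nabla f$ gives $\vg^{t_j}\to-\nabla f(\vx^\star)$. The main obstacle is the closing step: I must pass $\vg^{t_j}\in\lambda\partial\Psi(\vx^{t_j+1})$ to the limit. This is exactly where outer semicontinuity (closed graph) of the Clarke subdifferential is required, which holds because $\Psi$ is piecewise affine and hence locally Lipschitz; it delivers $-\nabla f(\vx^\star)\in\lambda\partial\Psi(\vx^\star)$. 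Combining with the Clarke sum rule $\partial F_{\lambda}=\nabla f+\lambda\partial\Psi$ (an equality since $f$ is $C^1$) gives $\vzero\in\partial F_{\lambda}(\vx^\star)$, as required.
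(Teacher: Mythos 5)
Your proposal is correct and follows essentially the same route as the paper: the general case uses the identical sufficient-decrease estimate, coercivity-based boundedness, vanishing displacements, and a closed-graph argument for the subdifferential at the prox optimality condition (you phrase it via outer semicontinuity of $\partial\Psi$ plus the sum rule, while the paper packages the same residual as an element of $\partial F_\lambda$ directly). The only difference is that for the convex case you reproduce the standard telescoping proof of the $O(1/T)$ rate, whereas the paper simply cites Theorem 3.1 of Beck--Teboulle.
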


These results demonstrate that the proximal gradient method efficiently converges to a critical point of the regularized loss function $F_{\lambda}(\cdot)$, which corresponds to the global minimum in the convex case. This is particularly desirable since, under mild conditions, all critical points in general supervised learning problems are highly quantized (\Cref{thm::main-extended}). 
Combining these two findings, we conclude that the proximal gradient method efficiently converges to a highly quantized solution.
\begin{proof}[Proof of \Cref{thm::convergence-prox-gd}]
    The convex case follows directly from \cite[Theorem 3.1]{beck2009fast}. For the general nonconvex case, our proof parallels the argument used in the proof of Theorem 1 in \cite{li2015accelerated}. We include it here for completeness.
    
    To begin, observe that
    \begin{equation}
        \begin{aligned}
            &F_{\lambda}(\vx^{t+1})\\&=f(\vx^{t+1})+\lambda \Psi(\vx^{t+1})\\&\stackrel{(a)}{\leq} f(\vx^t)+\inner{\nabla f(\vx^t)}{\vx^{t+1}-\vx^t}+\frac{L}{2}\norm{\vx^{t+1}-\vx^t}^2+\lambda\Psi(\vx^{t+1})\\
            &= f(\vx^t)+\inner{\nabla f(\vx^t)}{\vx^{t+1}-\vx^t}+\frac{1}{2\eta}\norm{\vx^{t+1}-\vx^t}^2+\lambda\Psi(\vx^{t+1})+\left(\frac{L}{2}-\frac{1}{2\eta}\right)\norm{\vx^{t+1}-\vx^t}^2\\
            &\stackrel{(b)}{\leq} f(\vx^t)+\lambda\Psi(\vx^t)+\left(\frac{L}{2}-\frac{1}{2\eta}\right)\norm{\vx^{t+1}-\vx^t}^2\\
            &\stackrel{(c)}{=} F_{\lambda}(\vx^t)-\frac{1}{4\eta}\norm{\vx^{t+1}-\vx^t}^2.
        \end{aligned}
        \label{eq::31}
    \end{equation}
    Here $(a)$ comes from the fact that $f(\cdot)$ is $L$-smooth; $(b)$ is due to the definition of $\vx^{t+1}$; and in $(c)$ we use the condition that $\eta\leq\frac{1}{2L}$. 
    As a result, $\{F_{\lambda}(\vx^t)\}$ is nonincreasing. Since $\{F_{\lambda}(\cdot)\}$ is assumed to be proper and coercive, the iterates $\{\vx^t\}$ is bounded. Hence, $\{\vx^t\}$ has at least one accumulation point.
    
    Next, we show that any accumulation point $\vx^{\star}$ is a critical point of $F_{\lambda}(\cdot)$. By telescoping \eqref{eq::31}, we obtain
    \begin{equation}
        \frac{1}{T}\sum_{t=0}^{T-1}\norm{\vx^{t+1}-\vx^t}^2\leq \frac{\left(F_{\lambda}(\vx^0)-F_{\lambda}^{\star}\right)}{4\eta T},
    \end{equation}
    which implies $\norm{\vx^{t+1}-\vx^t}\to 0$ as $t\to \infty$.
    From the proximal update rule,
    \begin{equation}
        \vx^{t+1}\in \argmin_{\vz}\left\{\eta_t\lambda\Psi(\vz)+\frac{1}{2}\norm{\vz - \left(\vx^{t}-\eta_t\nabla f(\vx^{t})\right)}^2\right\}.
    \end{equation}
    The optimality condition gives 
    \begin{equation}\label{eq::prox-opt-cond}
        \vzero \in \left(\eta_t\lambda\partial \Psi(\vx^{t+1})+\vx^{t+1} - \left(\vx^{t}-\eta_t\nabla f(\vx^{t})\right)\right).
    \end{equation}
    Let's define
    \[
    \vv^t := \vx^t-\vx^{t+1}+\eta_t\left(\nabla f(\vx^{t})-\nabla f(\vx^{t+1})\right),
    \]
    then~(\ref{eq::prox-opt-cond}) implies
    \begin{equation}
        \vv^t \in \partial F_{\lambda}(\vx^{t+1}).
    \end{equation}
    Using the Lipschitz continuity of $\nabla f$, we have
    \begin{align*}
        \norm{\vv^t} &= 
        \norm{\vx^t-\vx^{t+1}+\eta_t\left(\nabla f(\vx^{t})-\nabla f(\vx^{t+1})\right)} \\
        &\leq \left(1+\eta_t L\right)\norm{\vx^t-\vx^{t+1}}\\
        &=\frac{3}{2}\norm{\vx^t-\vx^{t+1}},
    \end{align*}
   which converges to $0$ as $t\to\infty$.
   Therefore, $\norm{\vv^t}\to 0$ as $t\to\infty$.
   
    For any accumulation point $\vx^\star$, there exists a subsequence $\{\vx^{t_k}\}$ such that $\vx^{t_k}\to \vx^\star$. Since $\vv^{t_k}\in \partial F_{\lambda}(\vx^{t_k})$ and $\norm{\vv^{t_k}} \to 0$, 
    we conclude that $\bm{0}\in \partial F_{\lambda}(\vx^\star)$ by the closedness of the limiting subdifferential.
    This completes the proof.
    
\end{proof}

\paragraph{Accelerated proximal gradient methods.}
To further accelerate convergence, one may apply the accelerated proximal gradient method, which incorporates a momentum term
\begin{equation}
    \begin{aligned} 
    \vy^{t+1} &=\vx^t+\beta^t\left(\vx^t-\vx^{t-1}\right), \\ 
    \vx^{t+1} &=\prox_{\eta_t\lambda\Psi}\left(\vy^{t+1}-\eta_t \nabla f\left(\vy^{t+1}\right)\right).\end{aligned}
\end{equation}
Here $\beta_t\in (0, 1)$ is the momentum coefficient and $\eta_t>0$ is the stepsize. See \cite[Section~4.3]{parikh2014proximal} for more details about the choices of the stepsize and momentum coefficient. 
In the convex setting, classical results show that the accelerated method achieves a convergence rate of $O(1/T^2)$ for the regularized objective when appropriate hyperparameters are used \cite{vandenberghe2010fast,beck2010gradient}. This rate extends to nonconvex objectives with convex penalties, as shown in \cite{nesterov2013gradient}. For general nonconvex problems, more sophisticated variants have been developed; see \cite{li2015accelerated}.

\subsection{Alternating direction method of multipliers (ADMM)}
\label{sec::ADMM}
When the loss function $f(\cdot)$ exhibits certain structures, such as in linear or logistic regression, the alternating direction method of multipliers (ADMM) offers an effective alternative to the proximal gradient method, often yielding faster convergence in practice \cite{boyd2011distributed}.

To apply ADMM, we reformulate the original problem $\min_{\vx}F_{\lambda}(\vx)=f(\vx)+\lambda \Psi(\vx)$ as the constrained problem 
\begin{equation}
    \min_{\vx, \vz} f(\vx)+\lambda\Psi(\vz) \quad\text{such that}\quad \vx-\vz=0.
\end{equation}
The corresponding augmented Lagrangian with penalty parameter $\rho$ is given by
\begin{equation}
L_{\rho}(\vx, \vz, \vy)=f(\vx)+\lambda\Psi(\vz)+\rho\inner{\vy}{\vx-\vz}+\frac{\rho}{2}\norm{\vx-\vz}^2,
\end{equation}
where $\vy$ is the dual variable. ADMM performs the following updates at each iteration 
\begin{align}
        \vx^{t+1}&=\argmin_{\vx}L_{\rho}(\vx, \vz^t, \vy^t),\tag{$x$-minimization}\label{x-minimization}\\
        \vz^{t+1}&=\argmin_{\vz}L_{\rho}(\vx^{t+1}, \vz, \vy^t)=\prox_{(\lambda/\rho)\cdot\Psi}\left(\vx^{t+1}+\vy^t\right),\tag{$z$-minimization}\\
        \vy^{t+1}&=\vy^t+\rho\left(\vx^{t+1}-\vz^{t+1}\right)\tag{dual update}.
    \end{align}
When $f$ admits a convenient structure, the \ref{x-minimization} can often be computed efficiently. For example, if $f(\vx)=\frac{1}{2}\norm{\mA\vx-\vb}^2$, it admits a closed-form solution $\vx^{t+1}=\left(\mA^{\top}\mA+\rho \mI\right)^{-1}\left(\mA^{\top} \vb+\rho\left(\vz^{t}-\vy^{t}\right)\right)$. ADMM is known to achieve convergence rates comparable to the proximal gradient method in both convex and certain nonconvex settings \cite{yang2016linear}. Furthermore, when additional structural conditions are satisfied, including linear and logistic regression with convex PARs as special cases, ADMM enjoys a linear convergence rate \cite{hong2017linear}. In \Cref{sec::numerical-experiments}, we empirically compare the performance of ADMM against other benchmark methods on linear regression tasks.

\section{Statistical guarantees of PAR for linear regression}
\label{sec::statistical-guarantee}
In practice, we seek quantized solutions that not only achieve low training loss but also exhibit strong statistical guarantees. This section investigates the statistical properties of PAR-regularized solutions in the context of linear regression. Our main result shows that specific PAR formulations closely approximate widely used regularizers, including $\ell_2$-, $\ell_1$-, and more general nonconvex regularizers; see \Cref{fig::par-as-different-regularizers} for an illustration. As a result, the corresponding PAR-regularized estimators inherit similar statistical guarantees as their classical counterparts. 

Specifically, we consider the following PAR-regularized linear regression problem:
\begin{equation}
    \min_{\vx\in \bR^d} F_{\PAR}(\vx)=\frac{1}{2n}\norm{\mA\vx-\vb}^2+\lambda \Psi(\vx).
    \label{eq::linear-regression}
\end{equation}
Here, $\mA\in \bR^{n\times d}$ is the design matrix and $\vb\in \bR^n$ is the response vector. We will specify the data model and the PAR formulation in each subsection. Throughout this section, we denote the PAR-regularized solution by $\vx^{\star}_{\PAR}=\argmin_{\vx\in \bR^d}F_{\PAR}(\vx)$.

\subsection{PAR as ridge regression}
\begin{figure}
    \centering
    \begin{subfigure}[b]{0.28\textwidth}
         \centering
         \begin{tikzpicture}[scale=0.8]
    \tikzset{myarrow/.style={-{Straight Barb[scale=1]}, line width=\mylinewidthone}}
    \tikzset{thickline/.style={line width=1.2pt}}
    
    \draw[myarrow] (-3,0) -- (3,0);
    \draw[myarrow] (0,0) -- (0,3);

    \draw[domain=-2.5:2.5, smooth, variable=\x, blue, thick] 
        plot ({\x}, {0.3333*\x*\x});
    \draw[thickline]
        (-2.5,2.08333) -- (-2,1.333) -- (-1,0.333) -- (0,0)
        -- (1,0.333) -- (2,1.333) -- (2.5,2.08333);

    \filldraw [black] (0,0) circle (1.2pt);
    \filldraw [black] (-2,1.3333) circle (1.2pt);
    \filldraw [black] (2,1.3333) circle (1.2pt);
    \filldraw [black] (-1,0.3333) circle (1.2pt);
    \filldraw [black] (1,0.3333) circle (1.2pt);
    \draw[dashed, line width=1.2pt] (-2,0) -- (-2,1.3333);
    \draw[dashed, line width=1.2pt] (-1,0) -- (-1,0.333);
    \draw[dashed, line width=1.2pt] (2,0) -- (2,1.333);
    \draw[dashed, line width=1.2pt] (1,0) -- (1,0.333);
    
    \node at (-2,-0.3) {\large $-q_2$};
    \node at (-1,-0.3) {\large $-q_1$};
    \node at (0,-0.3) {\large $0$};
    \node at (1,-0.3) {\large $q_1$};
    \node at (2,-0.3) {\large $q_2$};

\end{tikzpicture}
         \caption{PAR as $x^2$}
         \label{fig::PAR-as-l2}
     \end{subfigure}
     \hfill
     \begin{subfigure}[b]{0.28\textwidth}
         \centering
         \begin{tikzpicture}[scale=0.8]
    \tikzset{myarrow/.style={-{Straight Barb[scale=1]}, line width=\mylinewidthone}}
    \tikzset{thickline/.style={line width=1.2pt}}
    
    \draw[myarrow] (-3,0) -- (3,0);
    \draw[myarrow] (0,0) -- (0,3);

    \draw[domain=-2.5:0, smooth, variable=\x, blue, thick]
        plot ({\x}, {-0.8*\x});

    \draw[domain=0:2.5, smooth, variable=\x, blue, thick]
        plot ({\x}, {0.8*\x});
    \draw[thickline]
        (-2.5,2.25) -- (-2,1.74) -- (-1,0.85) -- (0,0)
        -- (1,0.85) -- (2,1.74) -- (2.5,2.25);

    \draw[dashed, line width=1.2pt] (-2,0) -- (-2,1.68);
    \draw[dashed, line width=1.2pt] (-1,0) -- (-1,0.82);
    \draw[dashed, line width=1.2pt] (2,0) -- (2,1.68);
    \draw[dashed, line width=1.2pt] (1,0) -- (1,0.82);

    \filldraw [black] (0,0) circle (1.2pt);
    \filldraw [black] (-2,1.74) circle (1.2pt);
    \filldraw [black] (2,1.74) circle (1.2pt);
    \filldraw [black] (-1,0.85) circle (1.2pt);
    \filldraw [black] (1,0.85) circle (1.2pt);
    
    \node at (-2,-0.3) {\large $-q_2$};
    \node at (-1,-0.3) {\large $-q_1$};
    \node at (0,-0.3) {\large $0$};
    \node at (1,-0.3) {\large $q_1$};
    \node at (2,-0.3) {\large $q_2$};

\end{tikzpicture}
         \caption{PAR as $|x|$}
     \end{subfigure}
     \hfill
     \begin{subfigure}[b]{0.28\textwidth}
         \centering
         \begin{tikzpicture}[scale=0.8]
    \tikzset{myarrow/.style={-{Straight Barb[scale=1]}, line width=\mylinewidthone}}
    \tikzset{thickline/.style={line width=1.2pt}}
    
    \draw[myarrow] (-3,0) -- (3,0);
    \draw[myarrow] (0,0) -- (0,3);

    \draw[domain=-2.5:0, smooth, variable=\x, blue, thick]
        plot ({\x}, {1.3*sqrt(-\x)});

    \draw[domain=0:2.5, smooth, variable=\x, blue, thick]
        plot ({\x}, {1.3*sqrt(\x)});
    \draw[thickline]
        (-2.5,2.0554) -- (-2,1.838) -- (-1,1.3) -- (0,0)
        -- (1,1.3) -- (2,1.838) -- (2.5,2.0554);

    \draw[dashed, line width=1.2pt] (-2,0) -- (-2,1.838);
    \draw[dashed, line width=1.2pt] (-1,0) -- (-1,1.3);
    \draw[dashed, line width=1.2pt] (2,0) -- (2,1.838);
    \draw[dashed, line width=1.2pt] (1,0) -- (1,1.3);

    \filldraw [black] (0,0) circle (1.2pt);
    \filldraw [black] (-2,1.838) circle (1.2pt);
    \filldraw [black] (2,1.838) circle (1.2pt);
    \filldraw [black] (-1,1.3) circle (1.2pt);
    \filldraw [black] (1,1.3) circle (1.2pt);
    
    \node at (-2,-0.3) {\large $-q_2$};
    \node at (-1,-0.3) {\large $-q_1$};
    \node at (0,-0.3) {\large $0$};
    \node at (1,-0.3) {\large $q_1$};
    \node at (2,-0.3) {\large $q_2$};

\end{tikzpicture}
         \caption{PAR as $\sqrt{|x|}$}
     \end{subfigure}
     \caption{PARs as approximations to classical regularizers. The PARs in black color approximate the standard regularizers shown in blue.}
    \label{fig::par-as-different-regularizers}
\end{figure}
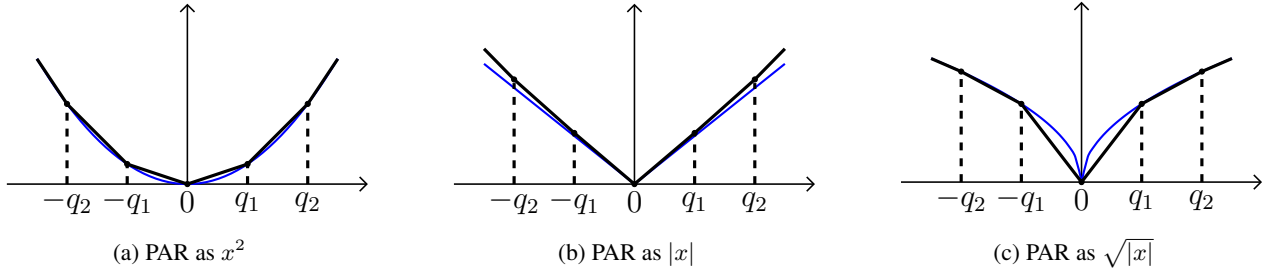
In this section, we demonstrate that a special class of PARs can effectively approximate ridge regression
\begin{equation}
    F_{\ridge}(\vx)=\frac{1}{2n}\norm{\mA\vx-\vb}^2+\frac{\lambda}{2}\norm{\vx}^2,
\end{equation}
which admits a closed-form solution 
$$\vx^{\star}_{\ridge}=\argmin F_{\ridge}(\vx)=(\mA^{\top}\mA+n\lambda\mI)^{\top}\mA^{\top}\vb.
$$
\paragraph{PAR formulation.} We consider a special class of PARs, illustrated in \Cref{fig::PAR-as-l2}. For this class, the quantization set is $\cQ=\{0, \pm q, \pm 2q, \cdots\}$, and the slope set is $\cA=\{q, 2q, \cdots\}$ where $q$ is the distance between adjacent quantization levels.\footnote{We can also design PARs with nonuniform quantization intervals to approximate the $\ell_2$-regularizer.}

\paragraph{Data model.} We consider the fixed design regime, where the design matrix $\mA$ is fixed. The response vector $\vb$ is generated by $\vb=\mA\vx^{\star}+\vepsilon$, with $\vepsilon=[\epsilon_1, \ldots, \epsilon_n]^{\top}$ where each $\epsilon_i$ is an independent random variable with zero mean and variance $\sigma^2$.\footnote{It may be possible to extend these results to the random design regime by incorporating results from \cite{hsu2014random}.} 

We denote the sample covariance matrix by $\widehat{\mSigma}:=\frac{1}{n}\mA^{\top}\mA$. For any estimator $\vx$, we define the in-sample risk and excess risk as 
\begin{equation}
    \cR(\vx):=\bE_{\Tilde{\vb}\sim \cD}\left[\frac{1}{n}\norm{\mA\vx-\Tilde{\vb}}^2\right], \quad \cE(\vx)=\cR(\vx)-\cR^{\star}=\norm{\vx-\vx^{\star}}^2_{\widehat{\mSigma}}.
\end{equation}
Let $\cR^{\star}=\min_{\vx\in \bR^d}\cR(\vx)$ represents the optimal risk.
Our next theorem characterizes the distances between the PAR and the ridge solutions.

\begin{theorem}
\label{thm::ridge-approximation}
    The distances between $\vx^{\star}_{\PAR}$ and $\vx^{\star}_{\ridge}$ are characterized as follows
    \begin{equation}
        \norm{\vx^{\star}_{\PAR}-\vx^{\star}_{\ridge}}\leq \sqrt{\frac{d}{2}}q, \quad \text{and}\quad\norm{\vx^{\star}_{\PAR}-\vx^{\star}_{\ridge}}_{\widehat{\mSigma}}\leq \sqrt{\frac{d\lambda}{2}}q.
    \end{equation}
\end{theorem}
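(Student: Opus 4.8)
The plan is to exploit that $\vx^{\star}_{\PAR}$ and $\vx^{\star}_{\ridge}$ minimize convex objectives that share the same data-fitting term $h(\vx) := \frac{1}{2n}\norm{\mA\vx-\vb}^2$ and differ only through their regularizer. The convex PAR of \Cref{fig::PAR-as-l2} is the piecewise-affine interpolant of the quadratic $\tfrac12 x^2$ at the breakpoints $\cQ$, so its subdifferential stays uniformly close to the ridge gradient $\nabla\big(\tfrac12\norm{\vx}^2\big)=\vx$; this single geometric fact drives the whole argument.

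First I would record the first-order optimality conditions, which are necessary and sufficient since $h$ and both penalties are convex:
\[
\nabla h(\vx^{\star}_{\ridge}) = -\lambda\,\vx^{\star}_{\ridge}, \qquad \nabla h(\vx^{\star}_{\PAR}) = -\lambda\,\g \quad\text{for some } \g\in\partial\Psi(\vx^{\star}_{\PAR}),
\]
using $\nabla\big(\tfrac12\norm{\vx}^2\big)=\vx$. Setting $\mDelta := \vx^{\star}_{\PAR}-\vx^{\star}_{\ridge}$, subtracting the two conditions, and pairing with $\mDelta$, together with the identity $\inner{\nabla h(\vx^{\star}_{\PAR})-\nabla h(\vx^{\star}_{\ridge})}{\mDelta}=\norm{\mDelta}_{\widehat{\mSigma}}^2$ (valid since $\nabla h(\vx)=\widehat{\mSigma}\vx-\tfrac1n\mA^{\top}\vb$ with $\widehat{\mSigma}=\tfrac1n\mA^{\top}\mA$), yields the exact relation
\[
\norm{\mDelta}_{\widehat{\mSigma}}^2 + \lambda\norm{\mDelta}^2 = \lambda\,\inner{\vx^{\star}_{\PAR}-\g}{\mDelta}.
\]

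Next I would bound the right-hand side by $\lambda\,\norm{\vx^{\star}_{\PAR}-\g}\,\norm{\mDelta}$ via Cauchy--Schwarz. Discarding the nonnegative term $\norm{\mDelta}_{\widehat{\mSigma}}^2$ on the left gives $\norm{\mDelta}\le\norm{\vx^{\star}_{\PAR}-\g}$; feeding this back into the relation gives $\norm{\mDelta}_{\widehat{\mSigma}}^2\le\lambda\,\norm{\vx^{\star}_{\PAR}-\g}^2$. Both claimed inequalities therefore follow at once from the single estimate $\norm{\vx^{\star}_{\PAR}-\g}\le\sqrt{d/2}\,q$.

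The estimate $\norm{\vx^{\star}_{\PAR}-\g}\le\sqrt{d/2}\,q$, i.e.\ $|x_i-g_i|\le q/\sqrt2$ coordinate-wise with $g_i\in\partial\Psi(x_i)$, is the crux and the main obstacle. On the interior of a segment the subgradient of $\Psi$ is the secant slope of $\tfrac12 x^2$, which agrees with the ridge gradient $x$ at the segment midpoint and hence deviates from $x$ by at most a fixed fraction of the gap $q$; at a breakpoint $x_i\in\cQ$ the subdifferential is the interval spanned by the two adjacent slopes, inside which the ridge gradient lies, so again every admissible $g_i$ is within the same fraction of $x_i$. Making this uniform over all segments, correctly handling the set-valued subdifferential at the breakpoints (where optimality pins $\g$ down only to membership in $\partial\Psi(\vx^{\star}_{\PAR})$), and tracking the constant so that it reduces to $q/\sqrt2$, is where the real work lies. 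Everything else---existence of the minimizers from coercivity of $\Psi$ (its slopes diverge) and the two concluding inequalities---is routine once this estimate is in hand.
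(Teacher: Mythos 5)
Your proposal is correct, and it reaches the theorem by a genuinely different route from the paper. The paper argues at the level of function values: it shows the penalties are uniformly close, $0 \le \Psi(x) - \tfrac12 x^2 \le \tfrac{q^2}{8}$ on each segment (maximal excess at segment midpoints), hence $F_{\ridge} \le F_{\PAR} \le F_{\ridge} + \tfrac{d\lambda q^2}{8}$ pointwise; a sandwich using only the global optimality of $\vx^{\star}_{\PAR}$ gives $F_{\ridge}(\vx^{\star}_{\PAR}) - F_{\ridge}(\vx^{\star}_{\ridge}) \le \tfrac{d\lambda q^2}{4}$, and both bounds then follow from strong convexity of $F_{\ridge}$ (modulus $\lambda$ in the $\ell_2$-norm, modulus $1$ in $\norm{\cdot}_{\widehat{\mSigma}}$) together with $\nabla F_{\ridge}(\vx^{\star}_{\ridge}) = \vzero$. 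Your argument is instead first-order: your identity $\norm{\mDelta}_{\widehat{\mSigma}}^2 + \lambda\norm{\mDelta}^2 = \lambda\inner{\vx^{\star}_{\PAR}-\g}{\mDelta}$ is correct, and the two deductions from it via Cauchy--Schwarz are valid. Moreover, the step you flag as ``the real work'' is in fact immediate and delivers a better constant than you need: since $\Psi$ interpolates $\tfrac12 x^2$ at $\cQ$ (this is what the paper's proof computation uses, with secant slope $(k+\tfrac12)q$ on $[kq,(k+1)q]$, notwithstanding the main text listing $\cA=\{q,2q,\dots\}$), on the interior of a segment the subgradient $(k+\tfrac12)q$ differs from $x$ by less than $q/2$, and at a breakpoint $kq$ the subdifferential $[(k-\tfrac12)q,(k+\tfrac12)q]$ has $kq$ as its midpoint, so every admissible $g_i$ satisfies $|x_i - g_i| \le q/2$ uniformly. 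Hence $\norm{\vx^{\star}_{\PAR}-\g} \le \tfrac{\sqrt{d}}{2}q$, and your route actually sharpens the two bounds to $\tfrac{\sqrt{d}}{2}q$ and $\tfrac{\sqrt{d\lambda}}{2}q$, strictly better than the stated $\sqrt{d/2}\,q$ and $\sqrt{d\lambda/2}\,q$. The trade-off: your first-order argument needs only stationarity of $\vx^{\star}_{\PAR}$ (so it applies verbatim to any critical point, which coincides with the global minimizer here by convexity) and gives sharper constants, but it hinges on the subdifferential of $\Psi$ tracking the ridge gradient; the paper's value-comparison argument never touches subdifferentials and needs only uniform closeness of the penalties plus global minimality, so it transfers more readily to approximating regularizers (e.g., nonuniform gaps or nonconvex variants) whose subgradients do not stay near the identity map.
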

The above two bounds both scale with the quantization level $q$. However, unlike the distance evaluated in the $\ell_2$-norm, the one in the Mahalanobis norm $\norm{\cdot}_{\widehat{\mSigma}}$ depends on the regularization strength $\lambda$. Specifically, as the regularization strength $\lambda$ decreases, the two estimators become closer and closer in the Mahalanobis norm $\norm{\cdot}_{\widehat{\mSigma}}$. This discrepancy arises because the least-square loss $f(\vx)=\frac{1}{2n}\norm{\mA\vx-\vb}^2$ is not strongly convex with respect to the $\ell_2$-norm when $n\ll d$, whereas it remains strongly convex in the Mahalanobis norm. This result is particularly appealing, as our next theorem establishes that the PAR-regularized solution enjoys statistical guarantees comparable to the ridge estimator, provided that the quantization gap $q$ is sufficiently small.

\begin{proof}
    To start with, we first show that the two loss functions $F_{\PAR}(\vx)$ and $F_{\ridge}(\vx)$ are uniformly close to each other in the sense that
    \begin{equation}
        \sup_{\vx\in \bR^d}\{F_{\PAR}(\vx)-F_{\ridge}(\vx)\}\leq \frac{d\lambda q^2}{8}.
    \end{equation}
    To this end, we first consider the 1-dimensional function $\Psi(x)-\frac{1}{2}x^2$ for $x\in \bR$. Suppose that $x\in [kq, (k+1)q)$ for some $k\in \bZ$. Without loss of generality, we assume $k\geq 0$. Then, we have 
    \begin{equation}
        0\leq \Psi(x)-\frac{1}{2}x^2=\left(k+\frac{1}{2}\right)q(x-kq)+\frac{k^2q^2}{2}-\frac{1}{2}x^2.
    \end{equation}
    It attains its maximum at the point $x^{\star}=\left(k+\frac{1}{2}\right)q$ with the optimal value $\frac{1}{8}q^2$. Then, the argument follows by taking summation over all the coordinates.
    
    Provided this result, we can establish the following relationship between the losses $F_{\ridge}(\vx^{\star}_{\PAR})$ and $F_{\ridge}(\vx^{\star}_{\ridge})$:
    \begin{equation}
    \label{eq::ridge-PAR}
        \begin{aligned}
            F_{\ridge}(\vx^{\star}_{\PAR})\leq F_{\PAR}(\vx^{\star}_{\PAR})+\frac{d\lambda q^2}{8}\leq F_{\PAR}(\vx^{\star}_{\ridge})+\frac{d\lambda q^2}{8}\leq F_{\ridge}(\vx^{\star}_{\ridge})+\frac{d\lambda q^2}{4}.
        \end{aligned}
    \end{equation}
    Now, we first consider the distance in the $\ell_2$-norm. Note that the regularizer $\frac{\lambda}{2}x^2$ is $\lambda$-strongly convex with respect to the $\ell_2$-norm and the loss function $\frac{1}{2n}\norm{\mA\vx-\vb}^2$ is convex. Hence, we have 
    \begin{equation}
        \begin{aligned}
            F_{\ridge}(\vx^{\star}_{\PAR})&\geq F_{\ridge}(\vx^{\star}_{\ridge})+\inner{\nabla F_{\ridge}(\vx^{\star}_{\ridge})}{\vx^{\star}_{\PAR}-\vx^{\star}_{\ridge}}+\frac{\lambda}{2}\norm{\vx^{\star}_{\PAR}-\vx^{\star}_{\ridge}}^2\\
            &= F_{\ridge}(\vx^{\star}_{\ridge})+\frac{\lambda}{2}\norm{\vx^{\star}_{\PAR}-\vx^{\star}_{\ridge}}^2.
        \end{aligned}
    \end{equation}
    Here we use the optimality condition that $\nabla F_{\ridge}(\vx^{\star}_{\ridge})=\vzero$.
    Substituting this inequality into \Cref{eq::ridge-PAR}, we derive the desired result
    \begin{equation}
        \norm{\vx^{\star}_{\PAR}-\vx^{\star}_{\ridge}}\leq \sqrt{\frac{d}{2}} q.
    \end{equation}
    
    To control the Mahalanobis norm, we note that the least-square loss $f(\vx)=\frac{1}{2n}\norm{\mA\vx-\vb}^2$ is $1$-strongly convex with respect to the norm $\norm{\cdot}_{\widehat{\mSigma}}$. To show this, it suffices to prove that for any $t\in [0, 1]$ and any $\vx, \vy\in \bR^d$, the following inequality holds 
    \begin{equation}
        f(t\vx+(1-t)\vy)\leq tf(\vx)+(1-t)f(\vy)-\frac{t(1-t)}{2}\norm{\vx-\vy}^2_{\widehat{\mSigma}}.
    \end{equation}
    This follows by the following equality
    \begin{equation}
        \begin{aligned}
            f(t\vx+(1-t)\vy)&=\frac{1}{2n}\norm{t\left(\mA\vx-\vb\right)+(1-t)\left(\mA\vy-\vb\right)}^2\\
            &=tf(\vx)+(1-t)f(\vy)-\frac{t(1-t)}{2n}\norm{\mA\left(\vx-\vy\right)}^2\\
            &=tf(\vx)+(1-t)f(\vy)-\frac{t(1-t)}{2}\norm{\vx-\vy}^2_{\widehat{\mSigma}}.
        \end{aligned}
    \end{equation}
    Therefore, leveraging the property of the strong convexity, we have
    \begin{equation}
        \begin{aligned}
            F_{\ridge}(\vx^{\star}_{\PAR})&\geq F_{\ridge}(\vx^{\star}_{\ridge})+\inner{\nabla F_{\ridge}(\vx^{\star}_{\ridge})}{\vx^{\star}_{\PAR}-\vx^{\star}_{\ridge}}+\frac{1}{2}\norm{\vx^{\star}_{\PAR}-\vx^{\star}_{\ridge}}^2_{\widehat{\mSigma}}\\
            &= F_{\ridge}(\vx^{\star}_{\ridge})+\frac{1}{2}\norm{\vx^{\star}_{\PAR}-\vx^{\star}_{\ridge}}^2_{\widehat{\mSigma}}.
        \end{aligned}
    \end{equation}
    Substituting this inequality into \Cref{eq::ridge-PAR}, we derive that 
    \begin{equation}
        \norm{\vx^{\star}_{\PAR}-\vx^{\star}_{\ridge}}_{\widehat{\mSigma}}^2\leq \frac{d\lambda q^2}{2}.
    \end{equation}
    This completes the proof.
\end{proof}
Our next theorem characterizes the excess risk of the PAR-regularized solution.
\begin{theorem}[Excess risk]
\label{thm::par-as-ridge}
    Consider the data model and PAR formulation described above. If the regularization strength in (\ref{eq::linear-regression}) is chosen as $\lambda=\Theta\left(\frac{\sigma}{\norm{\vx^{\star}}+\sqrt{d} q}\sqrt{\frac{\tr(\widehat{\mSigma})}{n}}\right)$, the excess risk of the PAR-regularized solution $\vx^{\star}_{\PAR}$ satisfies
    \begin{equation}
        \cE(\vx^{\star}_{\PAR})\lesssim \sigma\left(\norm{\vx^{\star}}+\sqrt{d} q\right)\sqrt{\frac{\tr(\widehat{\mSigma})}{n}}.
    \end{equation}
\end{theorem}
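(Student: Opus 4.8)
The plan is to bound the PAR excess risk by comparing the PAR estimator against the ridge estimator $\vx^\star_{\ridge}$ and then invoking the classical bias--variance analysis of ridge regression. Since the excess risk is exactly the squared Mahalanobis distance, $\cE(\vx^\star_{\PAR}) = \norm{\vx^\star_{\PAR} - \vx^\star}^2_{\widehat\mSigma}$, I would first split it by the triangle inequality (together with $(a+b)^2 \le 2a^2 + 2b^2$),
\[
\cE(\vx^\star_{\PAR}) \le 2\norm{\vx^\star_{\PAR} - \vx^\star_{\ridge}}^2_{\widehat\mSigma} + 2\norm{\vx^\star_{\ridge} - \vx^\star}^2_{\widehat\mSigma}.
\]
The first (approximation) term is controlled deterministically by \Cref{thm::ridge-approximation}, which gives $2\norm{\vx^\star_{\PAR} - \vx^\star_{\ridge}}^2_{\widehat\mSigma} \le d\lambda q^2$ for every realization of the noise. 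The second (statistical) term is the ordinary in-sample excess risk of the ridge estimator, which I would analyze in expectation over $\vepsilon$.

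For the ridge term, I would substitute $\vx^\star_{\ridge} = (\mM + n\lambda\mI)^{-1}\mA^\top\vb$ with $\mM := \mA^\top\mA$ and $\vb = \mA\vx^\star + \vepsilon$, yielding the standard split $\vx^\star_{\ridge} - \vx^\star = -n\lambda(\mM + n\lambda\mI)^{-1}\vx^\star + (\mM + n\lambda\mI)^{-1}\mA^\top\vepsilon$. Since $\bE[\vepsilon] = \vzero$ and $\bE[\vepsilon\vepsilon^\top] = \sigma^2\mI$, the cross term vanishes and the expected excess risk separates into a squared-bias and a variance piece. Diagonalizing in the eigenbasis of $\mM$ (eigenvalues $s_i$, coefficients $\alpha_i = \vu_i^\top\vx^\star$, and noting $\sum_i s_i = n\,\tr(\widehat\mSigma)$ because $\widehat\mSigma = \tfrac1n\mM$), the squared bias is $n\lambda^2\sum_i \tfrac{s_i}{(s_i + n\lambda)^2}\alpha_i^2 \le \tfrac{\lambda}{4}\norm{\vx^\star}^2$ using $\tfrac{s_i}{(s_i+n\lambda)^2} \le \tfrac{1}{4n\lambda}$, and the variance is $\tfrac{\sigma^2}{n}\sum_i \tfrac{s_i^2}{(s_i+n\lambda)^2} \le \tfrac{\sigma^2}{2n^2\lambda}\sum_i s_i = \tfrac{\sigma^2\,\tr(\widehat\mSigma)}{2n\lambda}$ using $(s_i + n\lambda)^2 \ge 2 s_i n\lambda$.

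Combining the three pieces yields, in expectation,
\[
\cE(\vx^\star_{\PAR}) \;\lesssim\; \lambda\bigl(dq^2 + \norm{\vx^\star}^2\bigr) + \frac{\sigma^2\,\tr(\widehat\mSigma)}{n\lambda} \;\asymp\; \lambda R^2 + \frac{\sigma^2\,\tr(\widehat\mSigma)}{n\lambda},
\]
where $R := \norm{\vx^\star} + \sqrt d\, q$, so that $dq^2 + \norm{\vx^\star}^2 \asymp R^2$. This is a convex function of $\lambda$, and its minimizer is precisely the prescribed $\lambda = \Theta\bigl(\tfrac{\sigma}{R}\sqrt{\tr(\widehat\mSigma)/n}\bigr)$; balancing the two terms then gives $\cE(\vx^\star_{\PAR}) \lesssim \sigma R\sqrt{\tr(\widehat\mSigma)/n} = \sigma(\norm{\vx^\star} + \sqrt d\, q)\sqrt{\tr(\widehat\mSigma)/n}$, as claimed.

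The eigenvalue bookkeeping is routine; the step needing genuine care is ensuring that the quantization contribution $dq^2$ from the approximation term pools cleanly with the ridge bias $\norm{\vx^\star}^2$ into the single radius $R^2$, since it is exactly this pooling that produces the factor $\norm{\vx^\star} + \sqrt d\, q$ in both the tuning of $\lambda$ and the final rate. I also expect the main subtlety to be the mode of the guarantee: under the stated assumption of only mean-zero, variance-$\sigma^2$ noise, the clean statement is in expectation, whereas a high-probability version would require an extra concentration argument (e.g.\ sub-Gaussian noise) for the quadratic form in $\vepsilon$ appearing in the variance term.
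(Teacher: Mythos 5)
Your proof is correct and follows essentially the same route as the paper: the identical decomposition $\cE(\vx^{\star}_{\PAR}) \le 2\norm{\vx^{\star}_{\PAR}-\vx^{\star}_{\ridge}}^2_{\widehat{\mSigma}} + 2\norm{\vx^{\star}_{\ridge}-\vx^{\star}}^2_{\widehat{\mSigma}}$, the same use of \Cref{thm::ridge-approximation} for the approximation term, and the same balancing of $\lambda R^2$ against $\sigma^2\tr(\widehat{\mSigma})/(n\lambda)$. The only difference is that you re-derive the ridge excess-risk bound via an eigenbasis bias--variance calculation where the paper simply cites \Cref{prop::ridge} (Proposition 3.7 of \cite{bach2024learning}) for the same inequality, and your remark that the guarantee is properly stated in expectation over $\vepsilon$ is a fair observation the paper leaves implicit.
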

\paragraph{Statistical guarantee.} If the quantization level is chosen as $ q\leq \frac{\norm{\vx^{\star}}}{\sqrt{d}}$, the excess risk of the PAR-regularized solution is $O\Big(\sigma\norm{\vx^{\star}}\sqrt{\tr(\widehat{\mSigma})/n}\Big)$, which is the same as that of the ridge regression estimator. 

\paragraph{Quantization guarantee.} \Cref{thm::main} implies that at least $d-n$ coordinates of $\vx^{\star}_{\PAR}$ are quantized. Moreover, if the covariate spectrum is concentrated in only a few directions, which means that the effective rank $\tr(\widehat{\mSigma})/\|\widehat{\mSigma}\|\ll d$, then the required sample complexity can be significantly smaller than the dimensionality, i.e., $n\ll d$. In this scenario, most coordinates of $\vx^{\star}_{\PAR}$ are quantized.

\paragraph{Comparison with simple estimators.} One might argue that the quantization result derived in \Cref{thm::main} is too weak. For instance, we can easily construct an estimator with the same quantization guarantee: we simply set $d-n$ coordinates to be zero and pick the remaining $n$ coordinates by solving a linear equation $\mA_{:n}\vx_{:n}=\vb$. While this approach achieves zero training loss, it fails to generalize well, highlighting the advantage of the PAR estimator, which maintains both quantization efficiency and strong statistical guarantees.

\paragraph{Storage advantage.} The PAR estimator offers significant storage benefits compared to the ridge estimator. To quantify this, we first consider the ridge estimator, whose parameters are typically dense and stored in single-precision floating-point format (FP32), requiring $32$ bits to store each parameter. In contrast, the PAR estimator reduces storage via quantization. To match the statistical accuracy of ridge regression, it suffices to select the quantization gap $ q \leq \frac{\norm{\vx^{\star}}}{\sqrt{d}}$. A crude bound on the maximum entry of the PAR estimator gives
\begin{equation}
    \begin{aligned}
        \norm{\vx_{\PAR}^{\star}}_{\infty} \leq \norm{\vx^{\star}_{\PAR}-\vx^{\star}_{\ridge}}+\norm{\vx^{\star}_{\ridge}}\leq \sqrt{\frac{d}{2}} q+\norm{\vx^{\star}}\leq 2\norm{\vx^{\star}}.
    \end{aligned}
\end{equation}
Therefore, storing the magnitude of each parameter requires at most $\lceil\ln(2\sqrt{d})\rceil$ bits, plus one additional bit for the sign. For dimensions up to $1 \times 10^8$, this amounts to roughly $16$ bits per parameter, resulting in a $2\times$ reduction in storage compared to the ridge estimator.

Before proving \Cref{thm::par-as-ridge}, we first introduce the following classic textbook result on Ridge regression.
\begin{proposition}[Proposition 3.7 in \cite{bach2024learning}] 
\label{prop::ridge}
For ridge estimator $\vx^{\star}_{\ridge}$, its excess risk is characterized as
    \begin{equation}
\cE(\vx^{\star}_{\ridge})\leq \frac{\lambda}{2}\norm{\vx^{\star}}^2+\frac{\sigma^2\tr(\widehat{\mSigma})}{2\lambda n}.
\end{equation}
\end{proposition}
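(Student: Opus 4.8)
The plan is to prove this textbook bound by the classical bias--variance decomposition of the ridge estimator. Since $\vx^\star_{\ridge}=(\mA^\top\mA+n\lambda\mI)^{-1}\mA^\top\vb$ is a random vector through the noise $\vepsilon$ (whereas $\cE(\vx)=\norm{\vx-\vx^\star}^2_{\widehat{\mSigma}}$ is deterministic in its argument), I read the claimed inequality as a bound on the expected excess risk $\bE_{\vepsilon}[\cE(\vx^\star_{\ridge})]$. First I would substitute the data model $\vb=\mA\vx^\star+\vepsilon$ into the closed form and simplify using the identity $(\mA^\top\mA+n\lambda\mI)^{-1}\mA^\top\mA-\mI=-n\lambda(\mA^\top\mA+n\lambda\mI)^{-1}$, which yields the clean separation
\begin{equation}
\vx^\star_{\ridge}-\vx^\star=\underbrace{-\lambda(\widehat{\mSigma}+\lambda\mI)^{-1}\vx^\star}_{\text{bias}}+\underbrace{\tfrac1n(\widehat{\mSigma}+\lambda\mI)^{-1}\mA^\top\vepsilon}_{\text{variance}},
\end{equation}
where I have written $\widehat{\mSigma}=\frac1n\mA^\top\mA$ throughout.

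Next I would take the $\widehat{\mSigma}$-weighted squared norm and the expectation over $\vepsilon$. Because the bias term is deterministic and the variance term has mean zero (as $\bE[\vepsilon]=\vzero$), the cross term vanishes and the expected excess risk splits as $\bE_{\vepsilon}[\cE(\vx^\star_{\ridge})]=\mathrm{bias}^2+\mathrm{var}$. To evaluate both pieces I would diagonalize $\widehat{\mSigma}=\sum_j\mu_j\vu_j\vu_j^\top$. The bias term becomes $\lambda^2\sum_j\frac{\mu_j}{(\mu_j+\lambda)^2}(\vu_j^\top\vx^\star)^2$. For the variance term I would use $\bE[\vepsilon\vepsilon^\top]=\sigma^2\mI$ together with $\frac1n\mA^\top\mA=\widehat{\mSigma}$ to turn the expectation into a trace,
\begin{equation}
\mathrm{var}=\frac{\sigma^2}{n}\tr\!\left[(\widehat{\mSigma}+\lambda\mI)^{-1}\widehat{\mSigma}(\widehat{\mSigma}+\lambda\mI)^{-1}\widehat{\mSigma}\right]=\frac{\sigma^2}{n}\sum_j\frac{\mu_j^2}{(\mu_j+\lambda)^2}.
\end{equation}

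Finally, both sums are controlled by the single elementary inequality $(\mu_j+\lambda)^2\ge 2\lambda\mu_j$ (equivalently $\mu_j^2+\lambda^2\ge0$). This gives $\frac{\lambda^2\mu_j}{(\mu_j+\lambda)^2}\le\frac{\lambda}{2}$ for every $j$, so the bias term is at most $\frac{\lambda}{2}\sum_j(\vu_j^\top\vx^\star)^2=\frac{\lambda}{2}\norm{\vx^\star}^2$; and it gives $\frac{\mu_j^2}{(\mu_j+\lambda)^2}\le\frac{\mu_j}{2\lambda}$, so the variance term is at most $\frac{\sigma^2}{2\lambda n}\sum_j\mu_j=\frac{\sigma^2\tr(\widehat{\mSigma})}{2\lambda n}$. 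Adding the two yields the claimed bound. There is no genuine obstacle here, as the result is a standard computation; the only points requiring care are the algebraic simplification of the bias, the vanishing of the cross term (which hinges on the zero-mean, fixed-design assumption), and the choice of the elementary inequality. I note that the slightly sharper $(\mu_j+\lambda)^2\ge4\lambda\mu_j$ would improve the bias constant to $\lambda/4$, but the looser version is convenient precisely because the same inequality simultaneously bounds the variance term.
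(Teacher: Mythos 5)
Your proof is correct, but note that the paper itself contains no proof of this statement: it is imported verbatim as Proposition~3.7 of the cited textbook \cite{bach2024learning} and used as a black box in the proof of the excess-risk theorem for PAR-as-ridge. Your bias--variance computation is exactly the standard derivation behind that textbook result: the splitting $\vx^{\star}_{\ridge}-\vx^{\star}=-\lambda(\widehat{\mSigma}+\lambda\mI)^{-1}\vx^{\star}+\frac{1}{n}(\widehat{\mSigma}+\lambda\mI)^{-1}\mA^{\top}\vepsilon$ is algebraically right (and quietly corrects a typo in the paper, which writes $(\mA^{\top}\mA+n\lambda\mI)^{\top}$ where the inverse is meant), the cross term vanishes under the fixed-design, zero-mean noise assumption, and the trace identity for the variance term checks out. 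Two small points deserve mention. First, you are right that the statement only makes sense as a bound on $\bE_{\vepsilon}[\cE(\vx^{\star}_{\ridge})]$, since the ridge estimator is random through $\vepsilon$; the paper (and its later use of the proposition) is silent on this, so making the expectation explicit is a genuine clarification rather than a deviation. Second, your elementary bounds $\frac{\lambda^{2}\mu_j}{(\mu_j+\lambda)^{2}}\leq\frac{\lambda}{2}$ and $\frac{\mu_j^{2}}{(\mu_j+\lambda)^{2}}\leq\frac{\mu_j}{2\lambda}$ involve a division by $\mu_j$; the case $\mu_j=0$ should be noted as trivially satisfied since both left-hand sides vanish (this is relevant here, as $\widehat{\mSigma}$ is rank-deficient whenever $n<d$, the regime the paper emphasizes). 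With that trivial caveat handled, the argument is complete and yields exactly the stated constants.
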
 
We now proceed to prove \Cref{thm::par-as-ridge}.
\begin{proof}[Proof of \Cref{thm::par-as-ridge}]
First, we have the following decomposition 
\begin{equation}
    \cE(\vx^{\star}_{\PAR})=\norm{\vx^{\star}_{\PAR}-\vx^{\star}}_{\widehat{\mSigma}}^2\leq 2\norm{\vx^{\star}_{\PAR}-\vx^{\star}_{\ridge}}_{\widehat{\mSigma}}^2+2\norm{\vx^{\star}_{\ridge}-\vx^{\star}}_{\widehat{\mSigma}}^2.
\end{equation}
By \Cref{thm::ridge-approximation}, the first term can be bounded as
\begin{equation}
        \norm{\vx^{\star}_{\PAR}-\vx^{\star}_{\ridge}}_{\widehat{\mSigma}}\leq \sqrt{\frac{d\lambda}{2}} q.
    \end{equation}
    For the second term, applying \Cref{prop::ridge} yields
    \begin{equation}
        \cE(\vx^{\star}_{\ridge})=\norm{\vx^{\star}_{\ridge}-\vx^{\star}}_{\widehat{\mSigma}}^2\leq \frac{\lambda}{2}\norm{\vx^{\star}}^2+\frac{\sigma^2\tr(\widehat{\mSigma})}{2\lambda n}.
    \end{equation}
    Combining the above two recipes, we obtain that 
    \begin{equation}
        \cE(\vx^{\star}_{\PAR})\leq \lambda \left(d q^2+\norm{\vx^{\star}}^2\right)+\frac{\sigma^2\tr(\widehat{\mSigma})}{\lambda n}.
    \end{equation}
    Therefore, upon setting $\lambda=\Theta\left(\frac{\sigma}{\norm{\vx^{\star}}+\sqrt{d} q}\sqrt{\frac{\tr(\widehat{\mSigma})}{n}}\right)$, we derive that
    \begin{equation}
        \cE(\vx^{\star}_{\PAR})\lesssim \sigma\left(\norm{\vx^{\star}}+\sqrt{d}q\right)\sqrt{\frac{\tr(\widehat{\mSigma})}{n}}.
    \end{equation}
    This completes the proof.
\end{proof}
\subsection{PAR as Lasso and nonconvex regularizers}
In this section, we demonstrate that a special class of PAR regularizers can effectively approximate the Lasso objective
\begin{equation}
    F_{\Lasso}(\vx)=\frac{1}{2n}\norm{\mA\vx-\vb}^2+\frac{\lambda}{2}\norm{\vx}_1.
\end{equation}
as well as the nonconvex regularizers, including several commonly used in sparse linear regression, such as the bridge regularizer ($\Psi(\vx) = \norm{\vx}_p^p$ for $0 < p < 1$),  smoothly clipped
absolute deviations (SCAD) penalty \cite{fan2001variable}, and minimax concave penalty (MCP) \cite{zhang2010nearly}. 
Nonconvex regularizers are introduced to better approximate the $\ell_0$-norm and to address the bias issue of the Lasso, which penalizes large coefficients more heavily. See \cite{zhang2012general} for a comprehensive survey of nonconvex regularizers.

While the $\ell_1$-regularizer $\Psi(\vx)=\norm{\vx}_1$ is itself a special case of a PAR, it primarily encourages sparsity by promoting solutions concentrated at zero. In contrast, we propose a richer class of PARs that not only approximate the $\ell_1$-penalty but also introduce additional quantization levels beyond zero. This structure enables the regularizer to promote parameter values near multiple predefined levels, thereby facilitating quantization while retaining statistical properties comparable to those of the standard Lasso solution.

\paragraph{Data model.} We consider the following sparse linear regression setting:
\begin{assumption}
    The true solution $\vx^{\star}$ is $s$-sparse, i.e., its support $S=\supp(\vx^{\star})$ satisfies $|S|=s$.
\end{assumption}
\begin{assumption}[Restricted eigenvalue]
    We assume the design matrix $\mA$ satisfies the restricted eigenvalue (RE) condition over $S=\mathrm{supp}(\vx^{\star})$ with parameters $(\alpha, \gamma)$, that is
    \begin{equation}
        \frac{1}{n}\norm{\mA\vv}^2\geq \gamma \norm{\vv}^2, \quad \forall \vv\in \cC_{\alpha}(S):=\{\vv\in \bR^d:\norm{\vv_{S^c}}_1\leq \alpha\norm{\vv_S}_1\}.
    \end{equation}
\end{assumption}
This condition holds for a broad class of random design matrices, particularly those with sub-Gaussian or isotropic rows, provided that the sample size is sufficiently large relative to the sparsity level $s$. In particular, for a Gaussian design matrix, where $\mA \in \bR^{n\times d}$ has i.i.d. $\cN (\vzero,\mSigma)$ rows, the restricted eigenvalue condition with parameters $(\alpha, \gamma)$ holds with probability at least $1-\exp\{-\Omega(n)\}$ as long as the sample size $n\gtrsim \frac{\norm{\mSigma}^2(1+\alpha)^2}{\gamma}s\log(d)$ \cite[Corollary~1]{raskutti2010restricted}.

\paragraph{PAR formulation.}
We consider a PAR $\Psi(\cdot)$ with quantization values $\cQ=\{0, \pm q_1, \pm q_2, \cdots\}$ where $0<q_1<q_2<\cdots$ and slopes $\cA=\{\pm a_1, \pm a_2, \cdots\}$. Let $a_{\max} = \max\{a \in \cA\}$ denote the maximum slope magnitude. We assume that $\Psi$ satisfies a linear growth condition, i.e., there exists a universal constant $k > 0$ such that $\Psi(x) \geq \nu|x|$ for all $x \in \bR$.

The following two examples illustrate this class of PARs.
\begin{example}[Convex PAR]
\label{example::1}
    Consider a convex PAR $\Psi(\cdot)$ with arbitrary quantization values $\cQ$ and slopes $\cA=\{a_0, a_1, \cdots, a_m\}$ with $0<a_0<a_1<\cdots < a_m<\infty$. In this case, $a_{\max}=a_m$ and $\nu=a_0$ since $\Psi(x)\geq a_0 x$ for all $x\in \bR$.
\end{example}

\begin{example}[Quasiconvex PAR]
\label{example::2}
    Consider the quasiconvex PAR in \Cref{fig::quasiconvex-2}, which is characterized by
    \begin{equation}
    \Psi(x)=\begin{cases}
        |x|-\frac{k}{2}q\quad \text{if}\quad kq\leq |x|\leq \frac{2k+1}{2}q,\\
        \frac{k+1}{2}q\quad \text{if}\quad\frac{2k+1}{2}q\leq |x|\leq (k+1)q,
    \end{cases}
\end{equation}
for integer $k \geq 0$ and fixed step size $q > 0$. It is straightforward to verify that this function is quasiconvex, with $a_{\max} = 1$ and $\nu = \frac{1}{2}$.
\end{example}

We now characterize the statistical guarantees for the PAR-regularized solution.
\begin{theorem}
\label{thm::PAR-as-Lasso}
    Consider the data model and PAR formulation described above. Suppose that the regularization strength satisfies $\lambda\geq \frac{\norm{\mA^{\top}\vepsilon}_{\infty}}{2\nu n}$, and the design matrix $\mA$ satisfies restricted eigenvalue condition with parameters $\left(\frac{3a_{\max}}{\nu},\gamma\right)$. Then, the estimation error of the PAR-regularized solution is bounded as
    \begin{equation}
        \norm{\vx^{\star}_{\PAR}-\vx^{\star}}\leq \frac{3\lambda a_{\max}\sqrt{s}}{\gamma}.
    \end{equation}
\end{theorem}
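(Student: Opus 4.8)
The plan is to adapt the classical primal analysis of the Lasso---``basic inequality $\to$ cone condition $\to$ restricted eigenvalue''---to the PAR setting, with the two structural properties of $\Psi$ (its $a_{\max}$-Lipschitz continuity and its linear lower growth $\Psi(x)\ge\nu|x|$) playing the role that the single norm $\norm{\cdot}_1$ plays for the Lasso. Throughout, write $\vh:=\vx^\star_{\PAR}-\vx^\star$ for the estimation error and $S=\supp(\vx^\star)$ for the support of the truth.

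First I would derive the basic inequality from optimality of $\vx^\star_{\PAR}$. Since $F_{\PAR}(\vx^\star_{\PAR})\le F_{\PAR}(\vx^\star)$ and $\vb=\mA\vx^\star+\vepsilon$, substituting the noise model, expanding $\frac{1}{2n}\norm{\mA\vh-\vepsilon}^2$, and cancelling the common $\frac{1}{2n}\norm{\vepsilon}^2$ term yields
\[
\frac{1}{2n}\norm{\mA\vh}^2 \;\le\; \frac{1}{n}\inner{\mA\vh}{\vepsilon} + \lambda\big(\Psi(\vx^\star)-\Psi(\vx^\star_{\PAR})\big).
\]
The cross term is controlled by H\"older's inequality, $\frac{1}{n}\inner{\mA\vh}{\vepsilon}=\frac1n\inner{\vh}{\mA^\top\vepsilon}\le \frac{\norm{\mA^\top\vepsilon}_\infty}{n}\norm{\vh}_1$, and this is exactly the quantity against which the assumed lower bound on $\lambda$ is calibrated.

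The crux---and the place where PAR departs from the Lasso---is bounding the regularizer gap $\Psi(\vx^\star)-\Psi(\vx^\star_{\PAR})=\sum_i\big(\Psi(x^\star_i)-\Psi((x^\star_{\PAR})_i)\big)$ coordinatewise. On $S$ I would use that every slope of $\Psi$ has magnitude at most $a_{\max}$, so $\Psi$ is $a_{\max}$-Lipschitz and $\Psi(x^\star_i)-\Psi((x^\star_{\PAR})_i)\le a_{\max}|h_i|$. On $S^c$, where $x^\star_i=0$ and hence $h_i=(x^\star_{\PAR})_i$, I would invoke $\Psi(0)=0$ together with the linear-growth bound $\Psi((x^\star_{\PAR})_i)\ge\nu|h_i|$, giving $\Psi(x^\star_i)-\Psi((x^\star_{\PAR})_i)\le-\nu|h_i|$. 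Summing yields $\Psi(\vx^\star)-\Psi(\vx^\star_{\PAR})\le a_{\max}\norm{\vh_S}_1-\nu\norm{\vh_{S^c}}_1$. Substituting into the basic inequality and splitting $\norm{\vh}_1=\norm{\vh_S}_1+\norm{\vh_{S^c}}_1$, the nonnegativity of $\frac{1}{2n}\norm{\mA\vh}^2$ together with the threshold on $\lambda$ (chosen so that the noise contribution to the $S^c$ coordinates is strictly dominated by $\lambda\nu\norm{\vh_{S^c}}_1$) forces the cone membership $\norm{\vh_{S^c}}_1\le \frac{3a_{\max}}{\nu}\norm{\vh_S}_1$, i.e.\ $\vh\in\cC_{3a_{\max}/\nu}(S)$.

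Finally, since $\vh$ lies in precisely the cone on which the RE condition is assumed, I would lower-bound $\frac1n\norm{\mA\vh}^2\ge\gamma\norm{\vh}^2$; combining with the upper bound $\frac{1}{2n}\norm{\mA\vh}^2\le c\,\lambda a_{\max}\norm{\vh_S}_1$ that the basic inequality supplies (after absorbing the noise term via the $\lambda$-threshold), and with $\norm{\vh_S}_1\le\sqrt s\,\norm{\vh_S}\le\sqrt s\,\norm{\vh}$, gives $\gamma\norm{\vh}^2\lesssim \lambda a_{\max}\sqrt s\,\norm{\vh}$; dividing by $\norm{\vh}$ yields the stated bound $\norm{\vh}\le \frac{3\lambda a_{\max}\sqrt s}{\gamma}$. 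The main obstacle is the regularizer-gap step: unlike $\norm{\cdot}_1$, a general PAR is neither a norm nor convex, so the two one-sided estimates must be produced \emph{separately}---the upper bound on $S$ from Lipschitzness and the lower bound on $S^c$ from linear growth---and the constant governing both the cone and the final error is the ratio $a_{\max}/\nu$. Verifying that the prescribed $\lambda$-threshold genuinely dominates the noise term so as to produce exactly the cone parameter $3a_{\max}/\nu$ is the delicate bookkeeping part.
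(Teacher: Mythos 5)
Your proposal is correct and follows essentially the same route as the paper's proof: the basic inequality from optimality of $\vx^{\star}_{\PAR}$, H\"older's inequality for the noise term, the two one-sided coordinatewise bounds ($a_{\max}$-Lipschitzness of $\Psi$ on $S$ and the linear growth $\Psi(x)\ge\nu|x|$ on $S^c$, using $\Psi(x^\star_i)=0$ there), the cone condition $\norm{\vh_{S^c}}_1\le \frac{3a_{\max}}{\nu}\norm{\vh_S}_1$ extracted from nonnegativity of $\frac{1}{2n}\norm{\mA\vh}^2$, and finally the RE condition combined with $\norm{\vh_S}_1\le\sqrt{s}\,\norm{\vh}$. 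The ``delicate bookkeeping'' you flag is precisely the paper's step $(c)$, where the $\lambda$-threshold absorbs the noise term into $\frac{3}{2}\lambda a_{\max}\norm{\vh_S}_1-\frac{1}{2}\lambda\nu\norm{\vh_{S^c}}_1$.
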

Similar guarantees can be derived for the prediction error $\norm{\mA(\vx^{\star}_{\PAR}-\vx^{\star})}$ and the estimation error in $\ell_{\infty}$-norm $\norm{\vx^{\star}_{\PAR}-\vx^{\star}}_{\infty}$. However, we omit these results here and leave them for future work.
This result closely resembles the classic error bound for Lasso regression \cite[Theorem~11.1]{hastie2015statistical}.
In particular, if $a_{\max}\asymp k$, then the guarantees in \Cref{thm::PAR-as-Lasso} match those of Lasso up to constant factors. To illustrate this result, we consider the classical linear Gaussian model.
\begin{corollary}
    Suppose $a_{\max}=\Theta(\nu)$. Assume the design matrix $\mA$ has i.i.d. standard Gaussian entries and the noise vector $\vepsilon \sim \cN(\vzero, \sigma^2 \mI)$ is also i.i.d. Gaussian. If the sample size satisfies $n\gtrsim s\log(d)$, then with probability at least $1-\exp{-\Omega(\log(d))}$, the estimation error is bounded by 
    \begin{equation}
        \norm{\vx^{\star}_{\PAR}-\vx^{\star}}\lesssim \sigma\sqrt{\frac{s\log(d)}{n}}.
    \end{equation}
\end{corollary}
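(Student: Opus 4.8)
The plan is to invoke Theorem~\ref{thm::PAR-as-Lasso} and reduce the corollary to two high-probability events: the restricted eigenvalue (RE) condition with the prescribed parameters, and a tail bound on $\norm{\mA^{\top}\vepsilon}_{\infty}$ that certifies a valid choice of regularization strength $\lambda$. On the intersection of these events, the estimation bound of Theorem~\ref{thm::PAR-as-Lasso} applies directly and collapses to the claimed rate once we exploit $a_{\max}=\Theta(\nu)$ and a constant RE curvature.

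First I would verify the RE condition. Since the rows of $\mA$ are i.i.d. $\cN(\vzero,\mI)$, we have $\mSigma=\mI$, so $\norm{\mSigma}=1$. The required cone parameter is $\alpha=3a_{\max}/\nu$, and the hypothesis $a_{\max}=\Theta(\nu)$ makes $\alpha=\Theta(1)$. Fixing a constant curvature $\gamma=\Theta(1)$ and applying the cited result [Corollary~1, raskutti2010restricted], the RE condition with parameters $(3a_{\max}/\nu,\gamma)$ holds with probability at least $1-\exp\{-\Omega(n)\}$ provided $n\gtrsim \frac{(1+\alpha)^2}{\gamma}s\log d\asymp s\log d$, which is exactly our sample-size assumption.

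Next I would control $\norm{\mA^{\top}\vepsilon}_{\infty}$. Conditioning on $\mA$, the vector $\mA^{\top}\vepsilon$ is Gaussian with covariance $\sigma^2\mA^{\top}\mA$; its $j$-th coordinate is $\cN(0,\sigma^2\norm{\mA_{\cdot j}}^2)$, where $\mA_{\cdot j}$ is the $j$-th column. Each $\norm{\mA_{\cdot j}}^2\sim\chi^2_n$ concentrates, so a $\chi^2$ tail bound and a union bound over the $d$ columns give $\max_j\norm{\mA_{\cdot j}}^2\leq 2n$ with probability $1-\exp\{-\Omega(n)\}$ (using $n\gtrsim\log d$). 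On this event every coordinate is Gaussian with variance at most $2\sigma^2 n$, and the Gaussian maximal inequality together with a union bound over the $d$ coordinates yields
\begin{equation}
    \norm{\mA^{\top}\vepsilon}_{\infty}\lesssim \sigma\sqrt{n\log d}
\end{equation}
with probability at least $1-\exp\{-\Omega(\log d)\}$.

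Finally, on the intersection of these events I would set $\lambda=\Theta\!\big(\tfrac{\sigma}{\nu}\sqrt{\tfrac{\log d}{n}}\big)$, which satisfies the hypothesis $\lambda\geq\frac{\norm{\mA^{\top}\vepsilon}_{\infty}}{2\nu n}$ of Theorem~\ref{thm::PAR-as-Lasso}. Substituting into the theorem's bound and using $a_{\max}=\Theta(\nu)$ and $\gamma=\Theta(1)$ gives
\begin{equation}
    \norm{\vx^{\star}_{\PAR}-\vx^{\star}}\leq \frac{3\lambda a_{\max}\sqrt{s}}{\gamma}\lesssim \frac{a_{\max}}{\nu}\,\sigma\sqrt{\frac{s\log d}{n}}\asymp \sigma\sqrt{\frac{s\log d}{n}}.
\end{equation}
Since $n\gtrsim s\log d\geq \log d$, the $\exp\{-\Omega(n)\}$ failure probabilities are dominated by $\exp\{-\Omega(\log d)\}$, the claimed confidence level. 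The one step requiring genuine care is the $\norm{\mA^{\top}\vepsilon}_{\infty}$ bound: one must handle the randomness of the column norms (equivalently, the sub-exponential nature of the products $A_{ij}\epsilon_i$) before invoking a Gaussian or Bernstein tail, and track constants so that the union bound over $d$ coordinates still leaves probability $1-\exp\{-\Omega(\log d)\}$.
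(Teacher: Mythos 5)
Your proof is correct and follows exactly the route the paper intends: the corollary is stated without an explicit proof, but the surrounding text supplies precisely the two ingredients you assemble — the restricted eigenvalue guarantee for Gaussian designs via \cite[Corollary~1]{raskutti2010restricted} (with $\alpha = 3a_{\max}/\nu = \Theta(1)$ and $\mSigma = \mI$) and the condition $\lambda \geq \frac{\norm{\mA^{\top}\vepsilon}_{\infty}}{2\nu n}$ from Theorem~\ref{thm::PAR-as-Lasso}. Your treatment of the random column norms via $\chi^2$ concentration before the Gaussian union bound, and the probability accounting that absorbs the $\exp\{-\Omega(n)\}$ terms into $\exp\{-\Omega(\log d)\}$ using $n \gtrsim s\log d \geq \log d$, correctly fill in the details the paper leaves implicit.
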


\begin{proof}[Proof of \Cref{thm::PAR-as-Lasso}]
    Note that $\vx^{\star}_{\PAR}$ is the minimizer of $F_{\PAR}(\vx)$. Hence, we have $F_{\PAR}(\vx^{\star}_{\PAR})\leq F_{\PAR}(\vx^{\star})$, that is,
    \begin{equation}
        \frac{1}{2n}\norm{\mA\vx^{\star}_{\PAR}-\vb}^2+\lambda\Psi(\vx^{\star}_{\PAR})\leq \frac{1}{2n}\norm{\vepsilon}^2+\lambda\Psi(\vx^{\star}).
    \end{equation}
    Rearranging this inequality and denoting $\vv=\vx^{\star}_{\PAR}-\vx^{\star}$ yields 
    \begin{equation}
        \begin{aligned}
            \frac{1}{2n}\norm{\mA\vv}^2&\leq \frac{1}{n}\inner{\mA^{\top}\vepsilon}{\vv}+\lambda\left(\Psi(\vx^{\star})-\Psi(\vx^{\star}_{\PAR})\right)\\
            &\stackrel{(a)}{\leq} \frac{\norm{\mA^{\top}\vepsilon}_{\infty}}{n}\norm{\vv}_1+\lambda\left(\Psi(\vx^{\star}_S)-\Psi(\vx^{\star}_{\PAR, S})\right)-\lambda\Psi\left(\vx^{\star}_{\PAR, S^c}\right)\\
            &\stackrel{(b)}{\leq} \frac{\norm{\mA^{\top}\vepsilon}_{\infty}}{n}\left(\norm{\vv_S}_1+\norm{\vv_{S^c}}_1\right)+\lambda a_{\max}\norm{\vv_S}_1-\lambda \nu\norm{\vv_{S^c}}_1\\
            &\stackrel{(c)}{\leq} \frac{3}{2}\lambda a_{\max}\norm{\vv_S}_1-\frac{1}{2}\lambda \nu\norm{\vv_{S^c}}_1.
        \end{aligned}
        \label{eq::64}
    \end{equation}
    Here in $(a)$, we use Hölder's inequality and the fact that $\Psi(x^\star_i)=0$ for all $i\in S^c$. In $(b)$, we use the triangle inequality and the facts that $\Psi(\cdot)$ is $a_{\max}$-Lipschitz and $\Psi(x)\geq \nu |x|$ for all $x$. In $(c)$, we use the condition that $\lambda\geq \frac{\norm{\mA^{\top}\vepsilon}_{\infty}}{2\nu n}$.
    To proceed, note that $\norm{\mA\vv}^2\geq 0$, which implies that $\norm{\vv_{S^c}}_1\leq \frac{3a_{\max}}{\nu}\norm{\vv_S}_1$.
    Therefore, we can apply the restricted eigenvalue condition, which yields
    \begin{equation}
        \gamma\norm{\vv}^2\stackrel{\text{RE condition}}{\leq} \frac{1}{n}\norm{\mA\vv}^2\stackrel{\text{\eqref{eq::64}}}{\leq} 3\lambda a_{\max}\norm{\vv_S}_1-\lambda \nu\norm{\vv_{S^c}}_1\leq 3\lambda a_{\max}\sqrt{s}\norm{\vv}.
    \end{equation}
    This accomplishes the proof.
\end{proof}
\section{Numerical experiments}
\label{sec::numerical-experiments}
In this section, we present numerical experiments to validate our theoretical results on quantization, optimization, and statistical performance. In \Cref{sec::exp-quantization-guarantee}, we demonstrate that the lower bound on the quantization rate is nearly tight in the linear regression setting. \Cref{sec::exp-optimization-alg} evaluates the performance of various optimization algorithms across different PAR formulations. Finally, \Cref{sec::exp-stats} examines the statistical performance of PAR-regularized models in both linear and logistic regression tasks. Our code is available at \url{https://github.com/jianhaoma/paro}.
\subsection{Validation of the quantization guarantee}  
\label{sec::exp-quantization-guarantee}
We first verify the tightness of the quantization rate lower bound given in \Cref{thm::main}, which states that the quantization rate is at least $1 - n/d$ where $n$ is the sample size and $d$ is the data dimension, and is independent of the regularization strength $\lambda$. To test this, we conduct extensive simulations on a linear regression task. The data dimension is set to $d = 200$, and the input matrix $\mA$ is generated with i.i.d. Gaussian entries. The ground truth $\vx^{\star}$ is randomly sampled, and the output is computed as $\vy = \mA \vx^{\star}$ without additional noise.  

\begin{sloppypar}
    In our experiments, we use a convex PAR with quantization values $\cQ=\bZ$ and slopes $\cA=\{\dots, -2, -1, 1, 2, \dots\}$. \Cref{fig::quantization-ratio-sample-size} reports the observed quantization rate across varying sample sizes and regularization strengths. The results closely match the theoretical lower bound and show that the quantization rate remains largely unaffected by $\lambda$, particularly when the sample size is small. This observation is further supported by \Cref{fig::quantization-ratio-regularization-strength}, where $\lambda$ is varied from $10^{-4}$ to $100$: the quantization rate consistently exceeds $0.9$, aligning with the theoretical bound of $1 - n/d = 1 - 20/200 = 0.9$.
\end{sloppypar}

Additionally, \Cref{fig::quantization-ratio-loss} shows that increasing $\lambda$ leads to higher training loss. Therefore, to balance quantization and model performance, we recommend using a relatively small regularization strength in practice.

\begin{figure}
     \centering
     \begin{subfigure}[b]{0.32\textwidth}
         \centering
         \includegraphics[width=\textwidth]{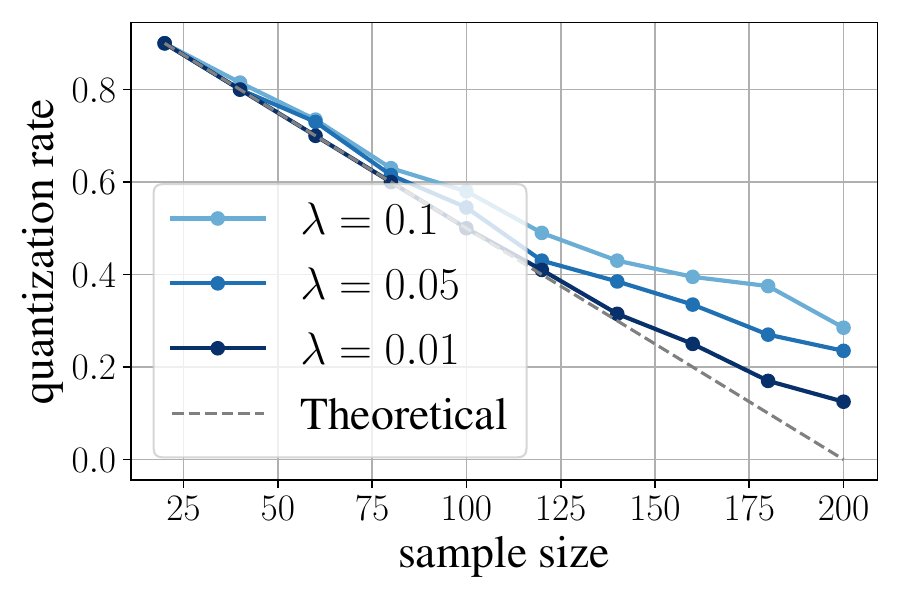}
         \caption{}
         \label{fig::quantization-ratio-sample-size}
     \end{subfigure}
     \hfill
     \begin{subfigure}[b]{0.32\textwidth}
         \centering
         \includegraphics[width=\textwidth]{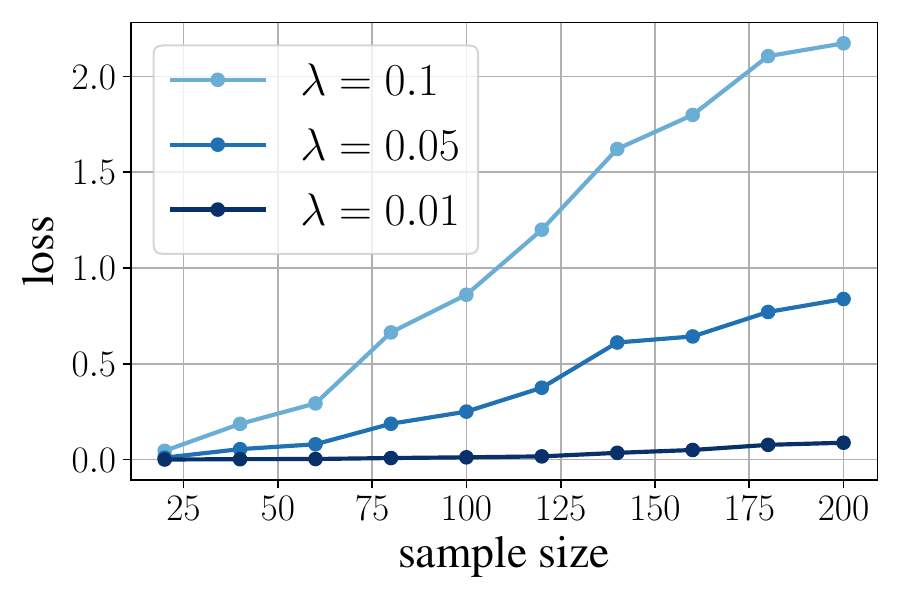}
         \caption{}
         \label{fig::quantization-ratio-loss}
     \end{subfigure}
     \hfill
     \begin{subfigure}[b]{0.32\textwidth}
         \centering
         \includegraphics[width=\textwidth]{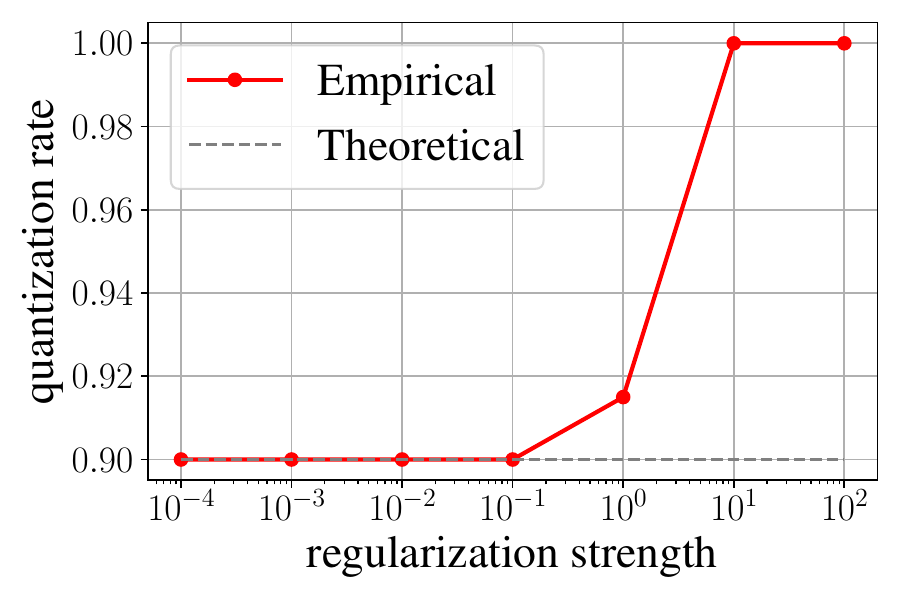}
         \caption{}
         \label{fig::quantization-ratio-regularization-strength}
     \end{subfigure}
        \caption{Empirical validation of the quantization guarantee on linear regression with $d=200$. Panel (a) tests the effect of sample size; panel (b) shows the impact of $\lambda$ on training loss; and panel (c) examines the robustness of the quantization rate to $\lambda$ for the case $n=20$ and $d=200$.}
        \label{fig::statistical-performances}
\end{figure}

\subsection{Comparison of different optimization algorithms and PAR variants}
\label{sec::exp-optimization-alg}
\paragraph{Convergence Behavior Across Optimization Algorithms.}
\begin{figure}
     \centering
     \begin{subfigure}[t]{0.32\textwidth}
         \centering
         \includegraphics[width=\textwidth]{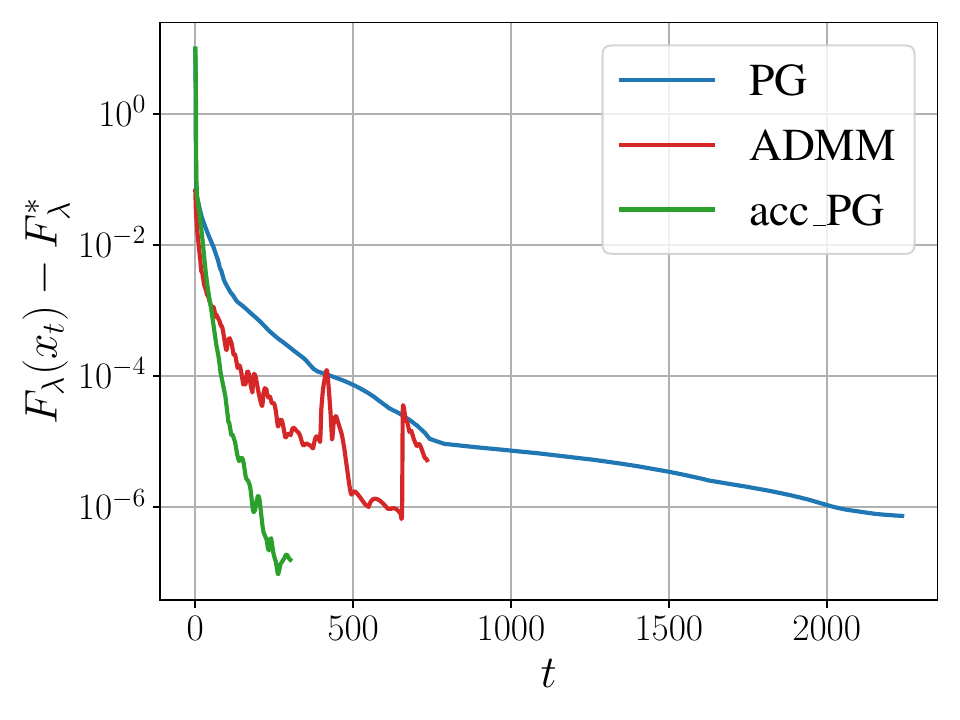}
         \label{fig:convex_different_algos}
     \end{subfigure}
     \hfill
     \begin{subfigure}[t]{0.32\textwidth}
         \centering
         \includegraphics[width=\textwidth]{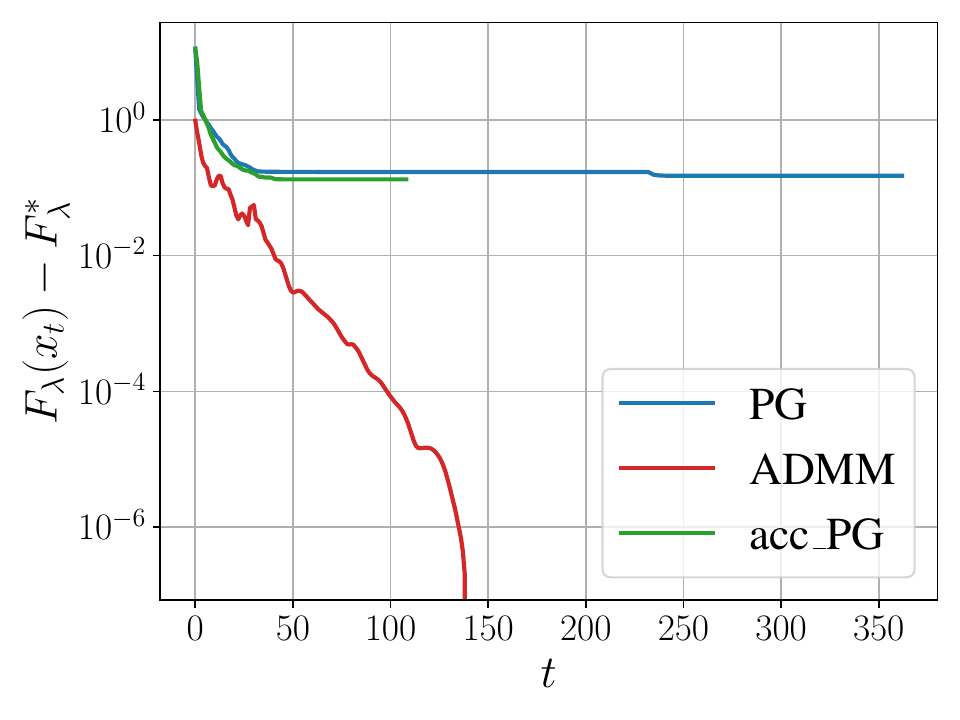}
     \end{subfigure}
     \hfill
     \begin{subfigure}[t]{0.32\textwidth}
         \centering
         \includegraphics[width=\textwidth]{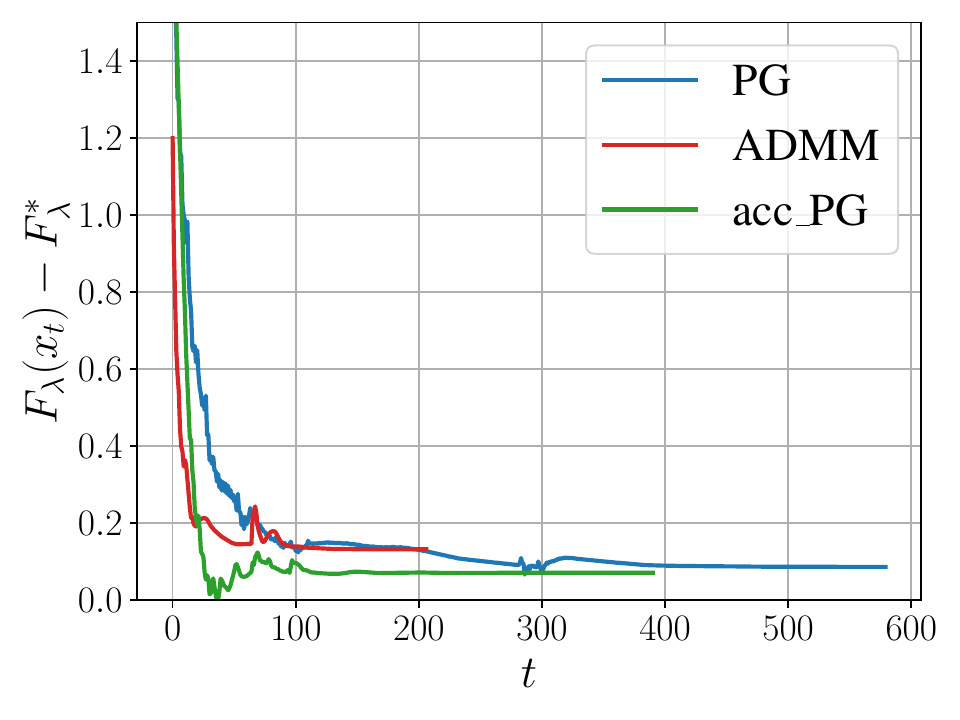}
         \label{fig:nonconvex_different_algos}
     \end{subfigure}\\
     \begin{subfigure}[b]{0.32\textwidth}
         \centering
         \includegraphics[width=\textwidth]{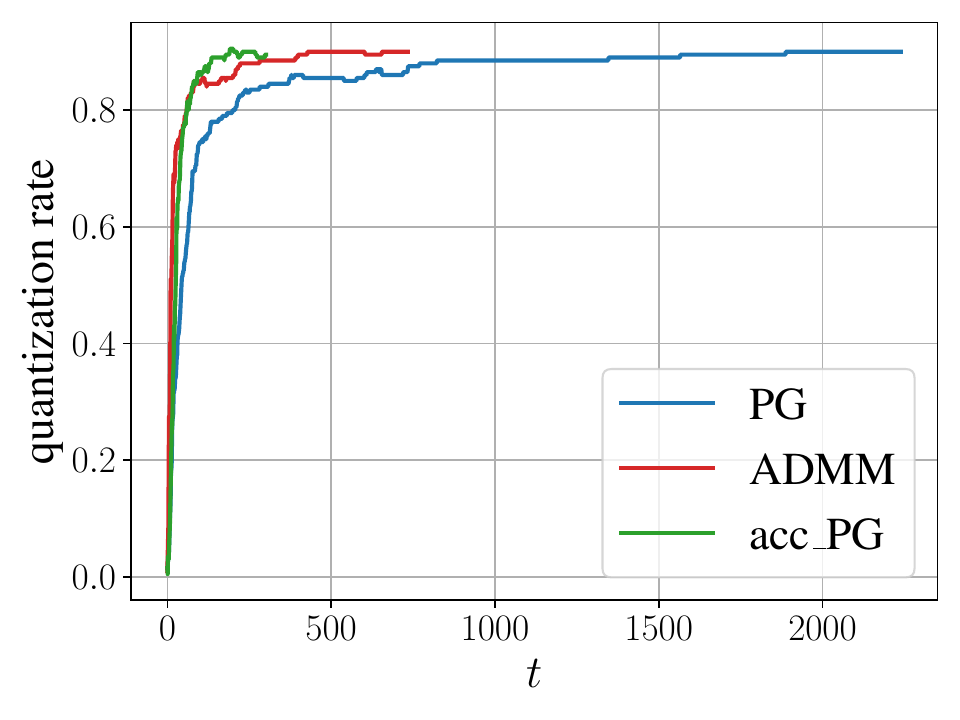}
         \caption{Convex PAR}
         \label{fig:convex_different_algos-quant-ratio}
     \end{subfigure}
     \hfill
     \begin{subfigure}[b]{0.32\textwidth}
         \centering
         \includegraphics[width=\textwidth]{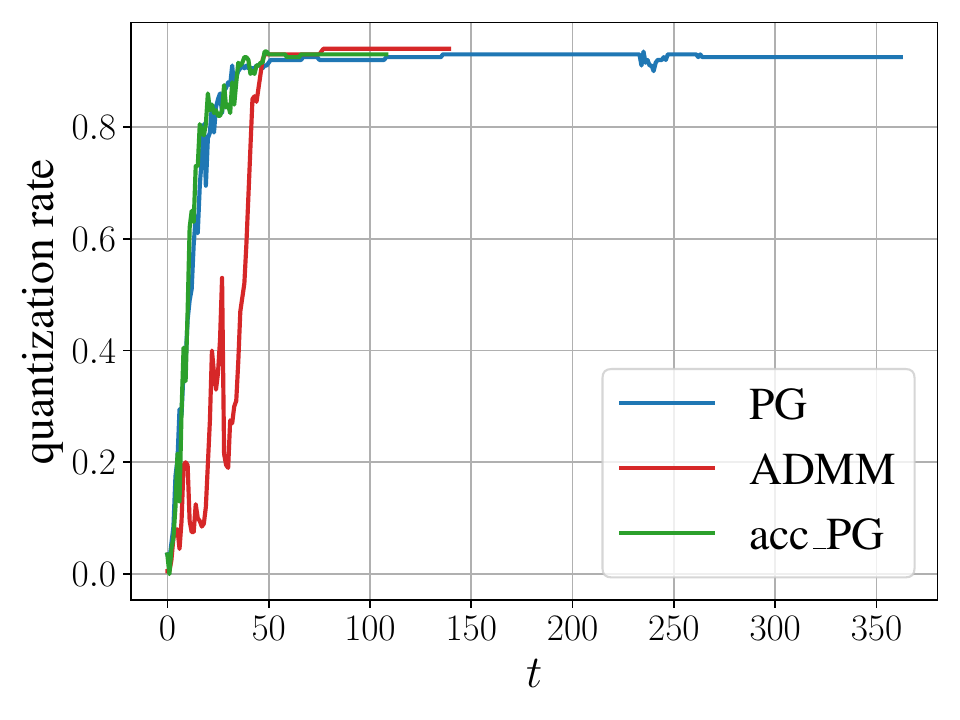}
         \caption{Quasiconvex PAR}
         \label{fig:quasiconvex_different_algos-quant-ratio}
     \end{subfigure}
     \hfill
     \begin{subfigure}[b]{0.32\textwidth}
         \centering
         \includegraphics[width=\textwidth]{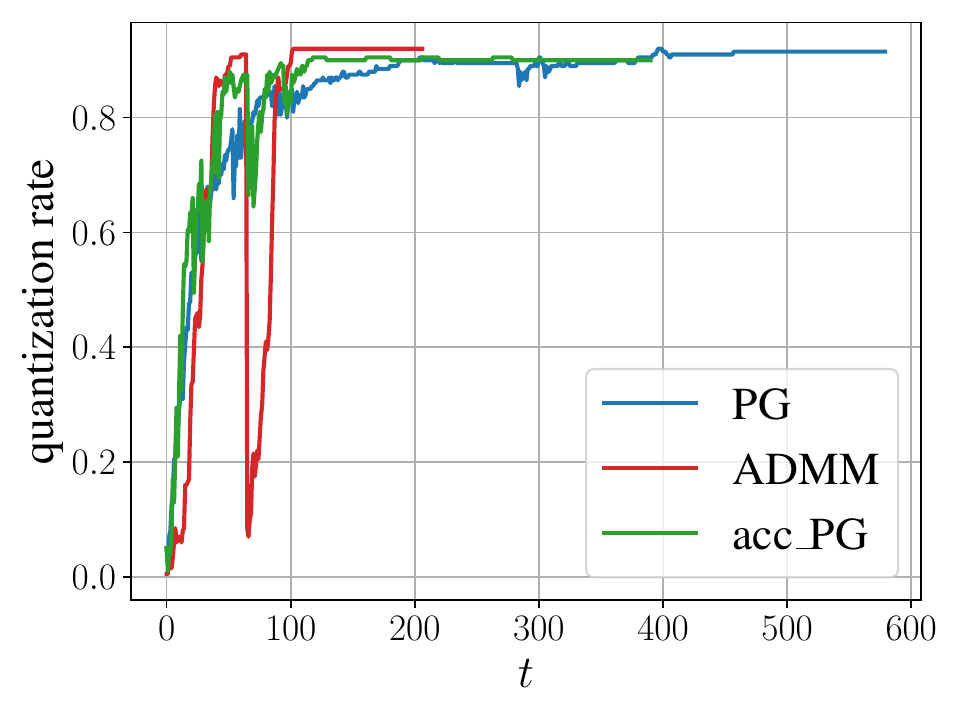}
         \caption{Nonconvex PAR}
         \label{fig:nonconvex_different_algos-quant-ratio}
     \end{subfigure}
        \caption{Comparison of optimization algorithms $\texttt{PG}$, $\texttt{acc\_PG}$, and $\texttt{ADMM}$ for linear regression with convex (left), quasiconvex (middle), and nonconvex (right) PARs. The problem dimension is $200$ and the sample size is $20$. All algorithms determine the step size using line search.}
        \label{fig::optimization-different-PARs}
\end{figure}
We evaluate the optimization performance of three algorithms: proximal gradient ($\texttt{PG}$), accelerated proximal gradient ($\texttt{acc\_PG}$), and ADMM ($\texttt{ADMM}$). The task is a linear regression problem regularized by three types of PARs: convex, quasiconvex, and nonconvex. The implementations of these three algorithms follow \Cref{sec::PG,sec::ADMM}. All methods incorporate a backtracking line search to select the step size.

We use synthetic data with feature dimension $d = 200$ and sample size $n = 20$. The true parameter $\vx^\star$ is generated randomly, and the design matrix $\mA$ is drawn from a standard Gaussian distribution. The response vector is generated as $\vb = \mA \vx^\star + \vepsilon$, where $\vepsilon \sim \cN(\bm{0}, 0.01 \mI)$.

\Cref{fig::optimization-different-PARs} summarizes the convergence behavior across different regularizers. For the convex PAR, all algorithms exhibit linear convergence, with $\texttt{acc\_PG}$ achieving the fastest rate, followed by $\texttt{ADMM}$ and then $\texttt{PG}$. For the quasiconvex PAR, $\texttt{ADMM}$ substantially outperforms both $\texttt{PG}$ and $\texttt{acc\_PG}$. In the case of the nonconvex PAR, all three algorithms show comparable convergence patterns. Interestingly, the optimal objective value is typically achieved early, prior to full convergence to a critical point, which might be attributed to the nonconvex nature of the regularizer.

\paragraph{Effect of PAR structure on quantization performance.}
\begin{figure}
\centering
\includegraphics[width=0.45\linewidth]{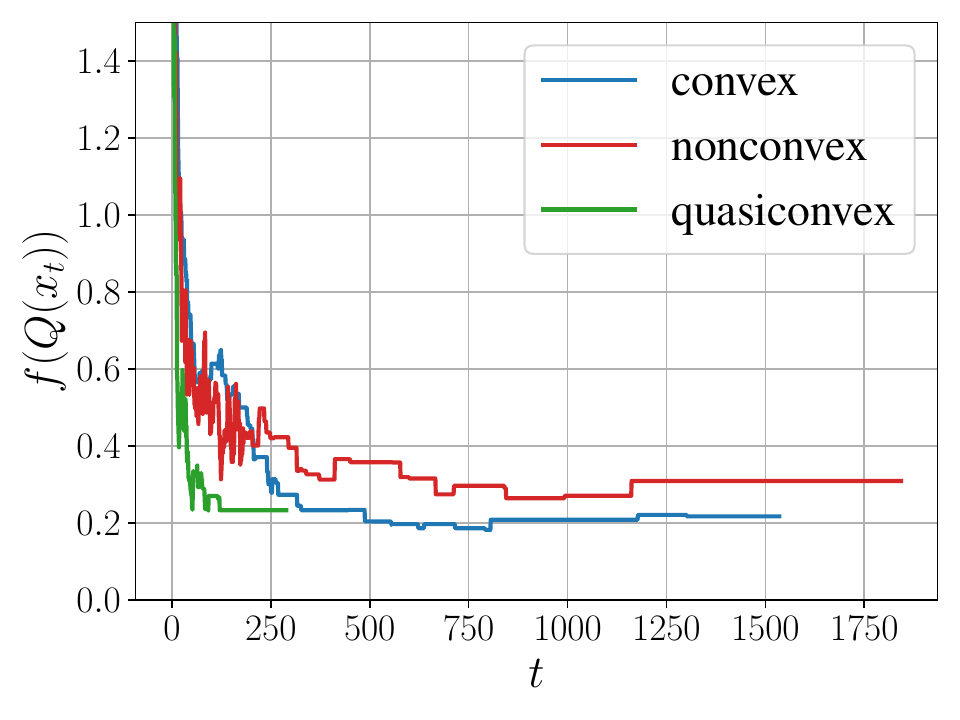}
\caption{Comparison of three different PARs on a linear regression task. Here dimension is $1000$ and the sample size is $100$. To ensure a fair comparison, we use the same set of quantization values $\cQ$ and evaluate the objective value of the fully quantized solutions. Specifically, at each iteration, the current solution $\vx^t$ is projected onto $\cQ$ to obtain $Q(\vx^t)$, and we report the unregularized objective value $f(Q(\vx^t))$. For each PAR, we report the best performance achieved over the regularization strengths $\{0.01, 0.015, 0.02, 0.05\}$.}
\label{fig:performance-different-pars}
\end{figure}
In this simulation, we examine how the structure of the PAR, whether convex, quasiconvex, or nonconvex, influences the quality of the final solution. We generate synthetic data with $d = 1000$ features and $n = 100$ samples. To ensure a fair comparison, all methods use the same quantization set $\cQ$. We adopt $\texttt{ADMM}$ as the optimization algorithm, as it consistently performs slightly better than $\texttt{PG}$ and $\texttt{acc\_PG}$ in this setting; however, the choice of solver does not significantly impact the observed trends. At each iteration, the current iterate $\vx^t$ is projected onto $\cQ$ to obtain a fully quantized solution $Q(\vx^t)$, and we evaluate the unregularized objective $f(Q(\vx^t))$ to measure the solution quality. For each regularizer, we report the best performance across regularization strengths $\{0.01, 0.015, 0.02, 0.05\}$.

As shown in \Cref{fig:performance-different-pars}, the convex PAR outperforms both quasiconvex and nonconvex counterparts, with the quasiconvex variant performing slightly better than the nonconvex one. This result highlights the potential benefits of convexity in guiding the algorithm toward high-quality solutions.

\begin{figure}[t]
     \centering
     \begin{subfigure}[b]{0.3\textwidth}
         \centering
         \includegraphics[width=\textwidth]{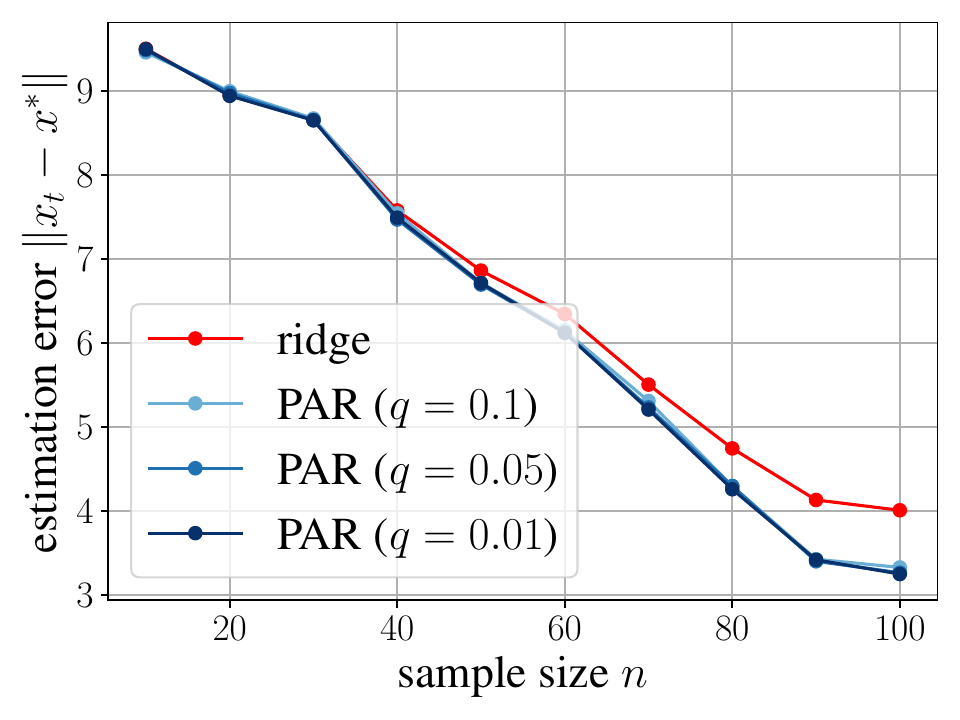}
         \caption{Ridge regularizer}
         
     \end{subfigure}
     \hfill
     \begin{subfigure}[b]{0.3\textwidth}
         \centering
         \includegraphics[width=\textwidth]{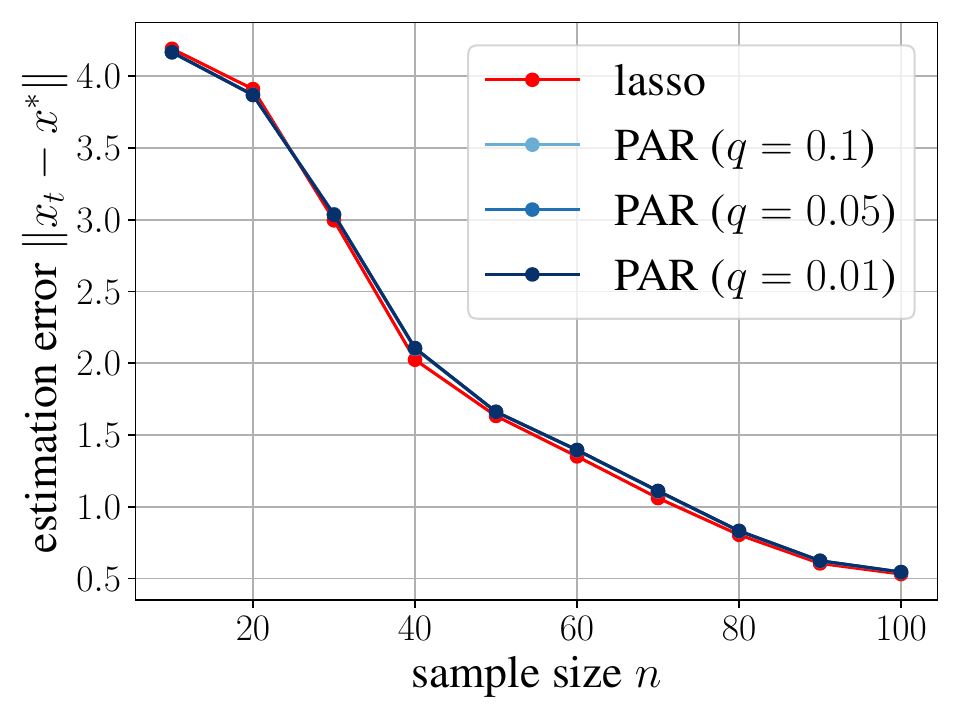}
         \caption{$\ell_1$-regularizer}
         
     \end{subfigure}
     \hfill
     \begin{subfigure}[b]{0.3\textwidth}
         \centering
         \includegraphics[width=\textwidth]{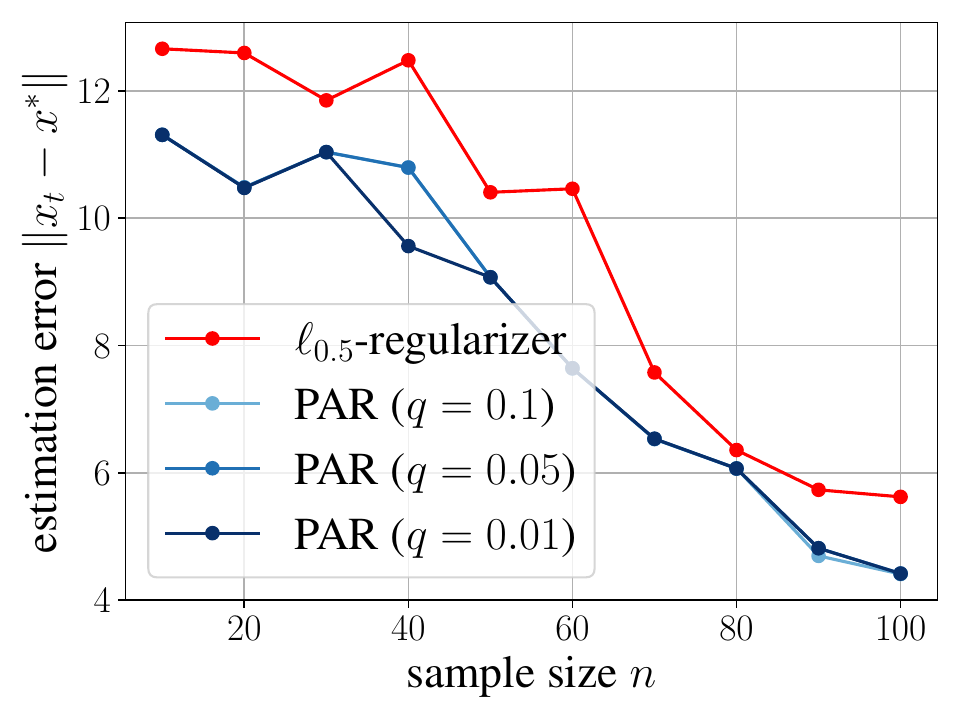}
         \caption{$\ell_{0.5}$-regularizer}
         
     \end{subfigure}\\
     \begin{subfigure}[b]{0.3\textwidth}
         \centering
         \includegraphics[width=\textwidth]{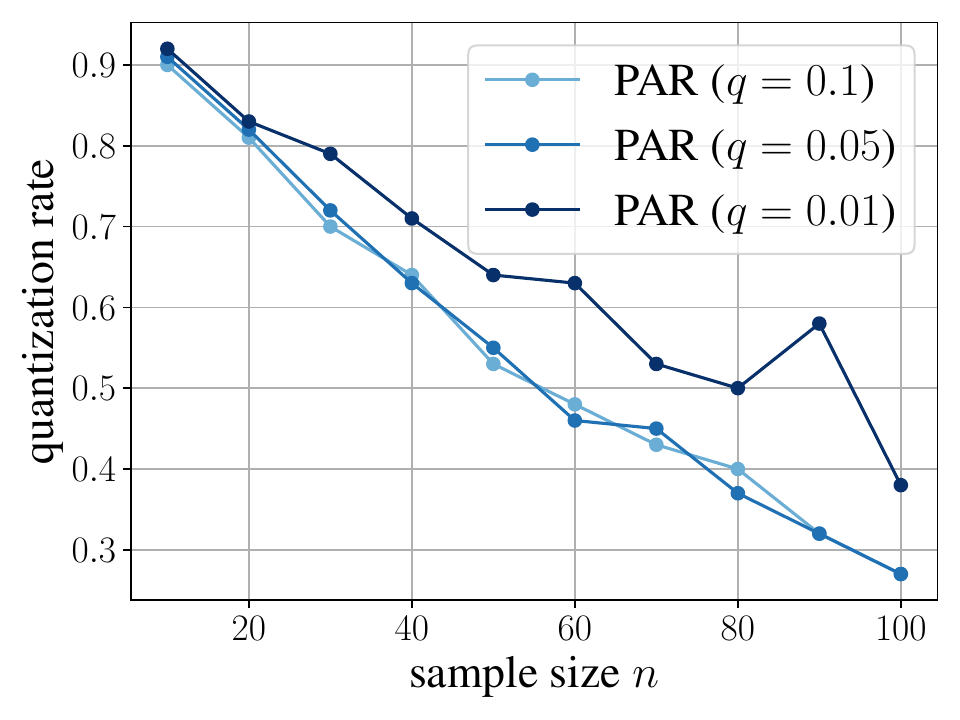}
         \caption{Ridge regularizer}
         
     \end{subfigure}
     \hfill
     \begin{subfigure}[b]{0.3\textwidth}
         \centering
         \includegraphics[width=\textwidth]{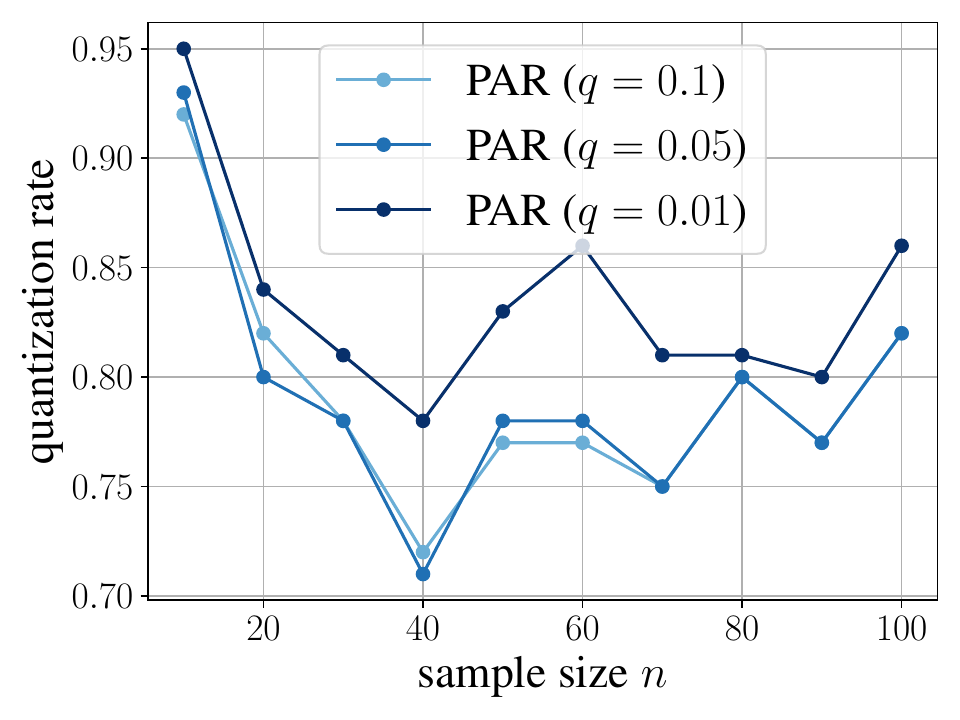}
         \caption{$\ell_1$-regularizer}
         
     \end{subfigure}
     \hfill
     \begin{subfigure}[b]{0.3\textwidth}
         \centering
         \includegraphics[width=\textwidth]{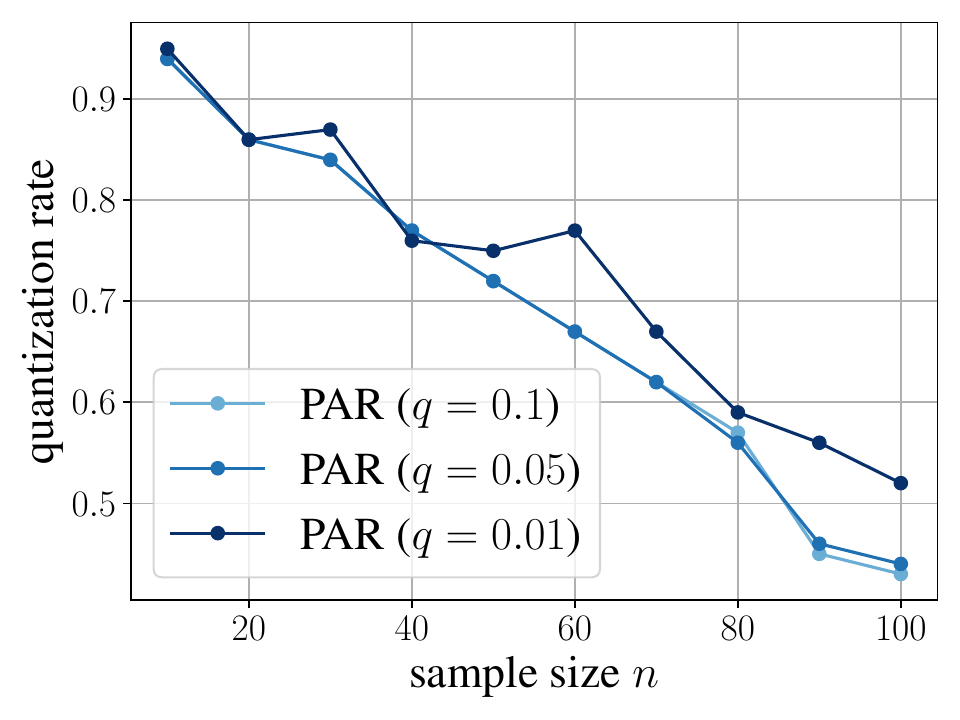}
         \caption{$\ell_{0.5}$-regularizer}
         
     \end{subfigure}
        \caption{Statistical performance of Ridge (left), $\ell_{1}$- (middle), and $\ell_{0.5}$-regularizers (right) and their PAR approximations on linear regression tasks. The quantization rate for each PAR is also shown.}
        \label{fig::statistical-performances-linear-regression}
\end{figure}

\begin{figure}[t]
     \centering
     \begin{subfigure}[b]{0.3\textwidth}
         \centering
         \includegraphics[width=\textwidth]{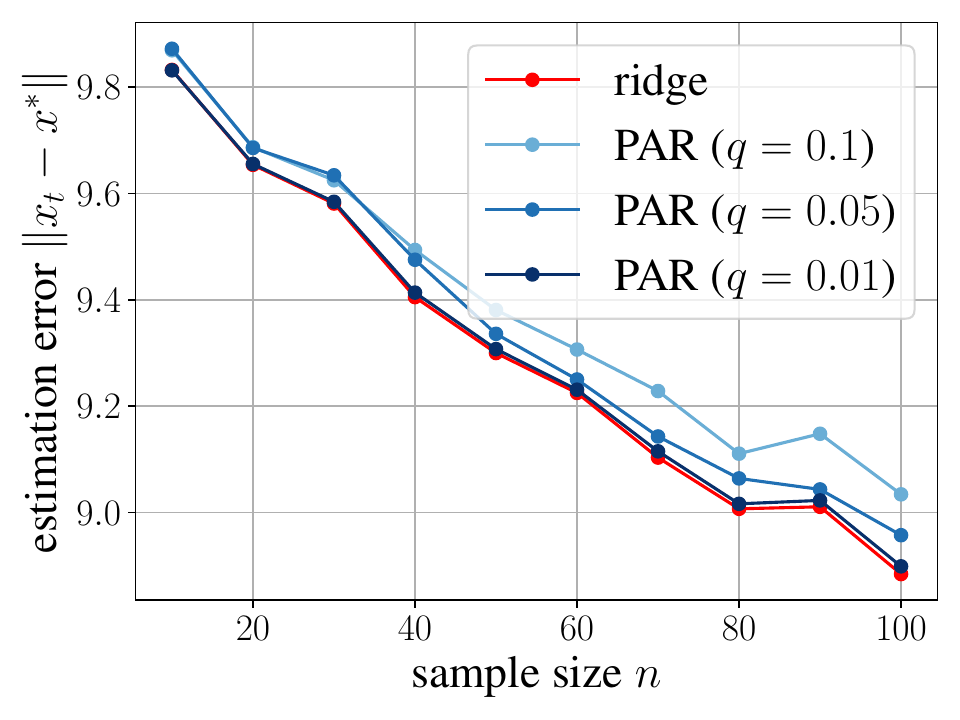}
         \caption{Ridge regularizer}
         \label{fig:estimation_error_ridge_logistic_regression}
     \end{subfigure}
     \hfill
     \begin{subfigure}[b]{0.3\textwidth}
         \centering
         \includegraphics[width=\textwidth]{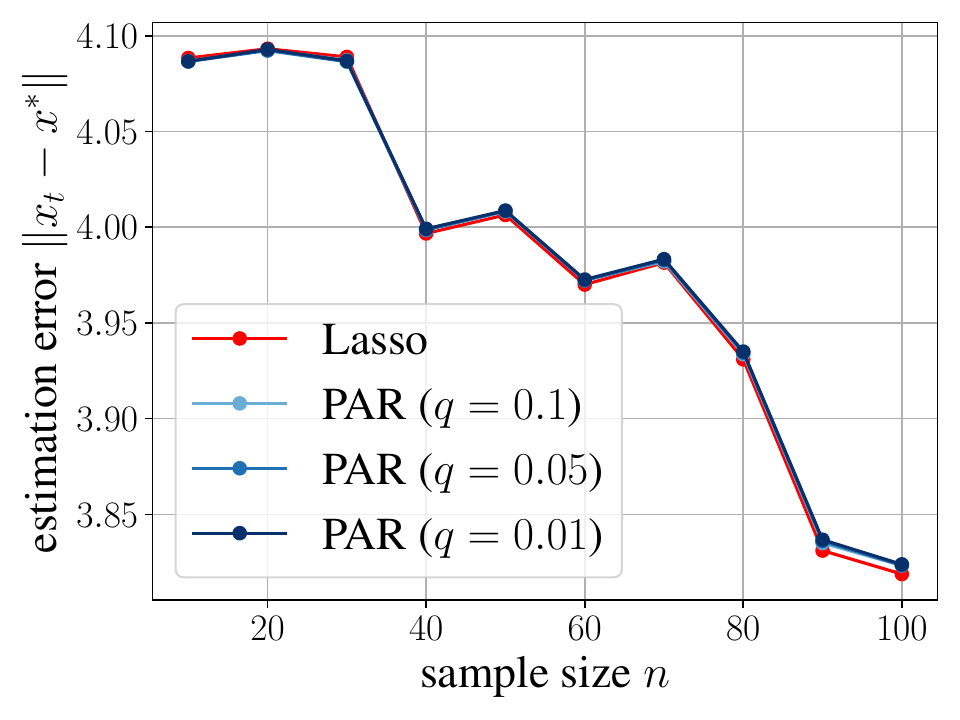}
         \caption{$\ell_1$-regularizer}
         
     \end{subfigure}
     \hfill
     \begin{subfigure}[b]{0.3\textwidth}
         \centering
         \includegraphics[width=\textwidth]{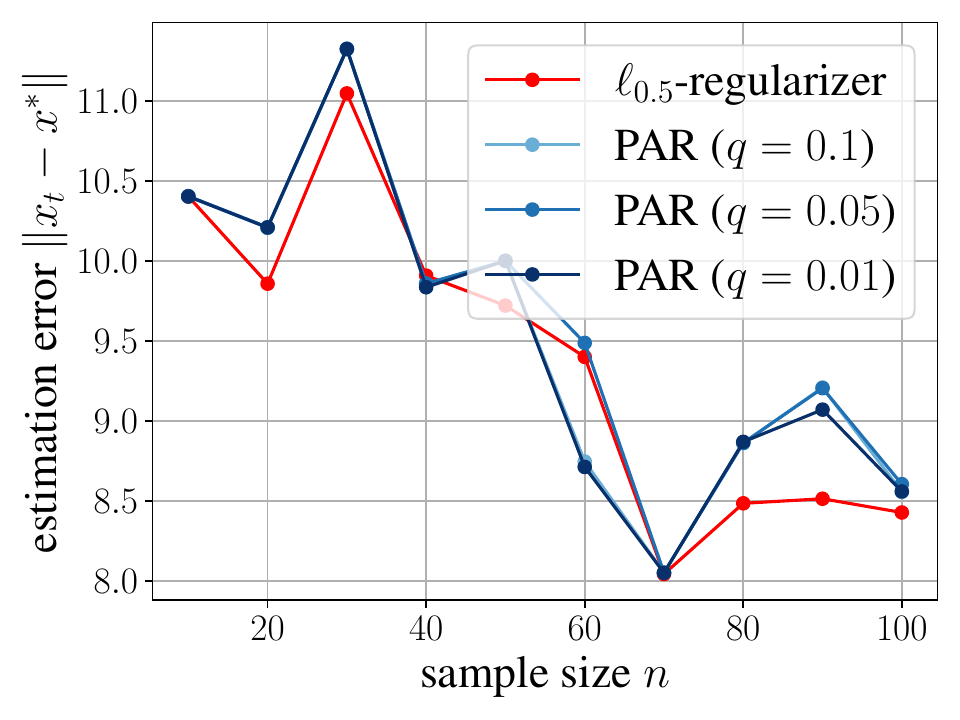}
         \caption{$\ell_{0.5}$-regularizer}
         
     \end{subfigure}\\
     \begin{subfigure}[b]{0.3\textwidth}
         \centering
         \includegraphics[width=\textwidth]{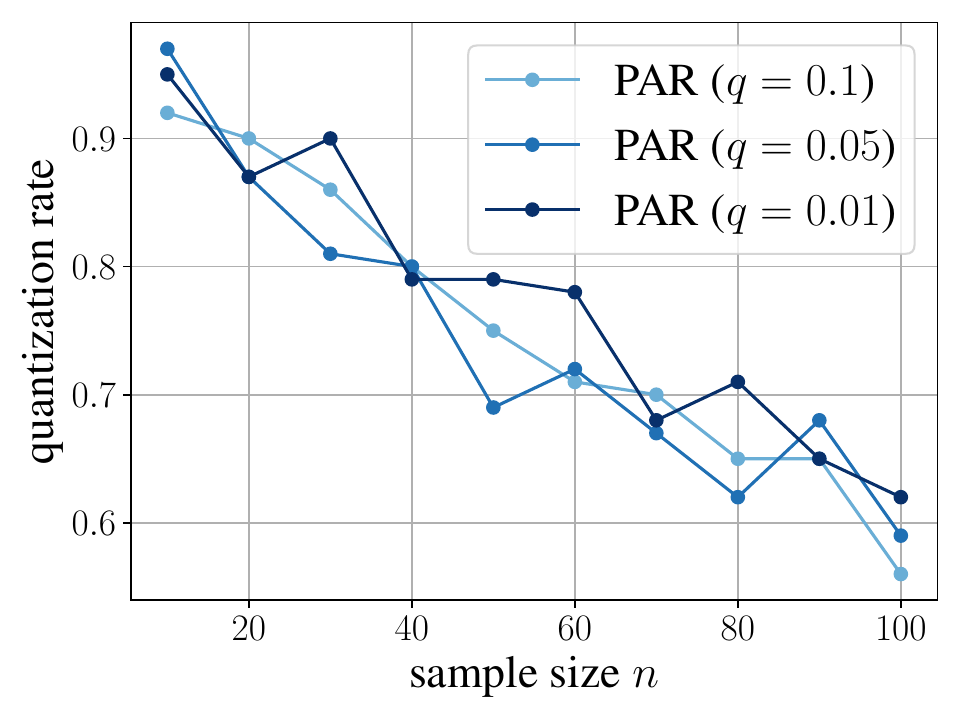}
         \caption{Ridge regularizer}
         
     \end{subfigure}
     \hfill
     \begin{subfigure}[b]{0.3\textwidth}
         \centering
         \includegraphics[width=\textwidth]{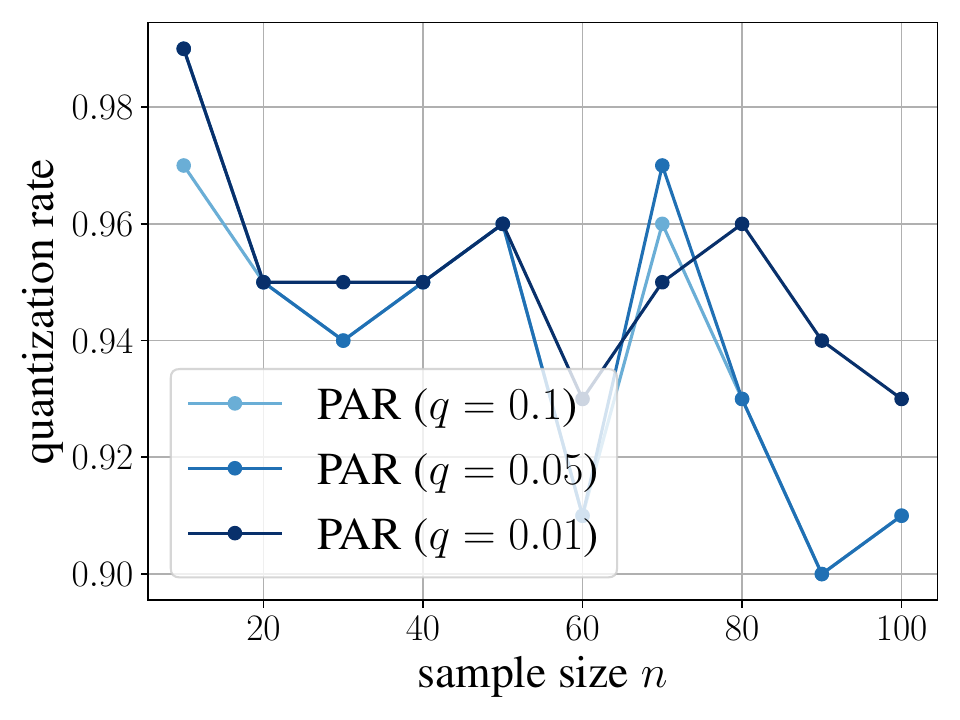}
         \caption{$\ell_1$-regularizer}
         
     \end{subfigure}
     \hfill
     \begin{subfigure}[b]{0.3\textwidth}
         \centering
         \includegraphics[width=\textwidth]{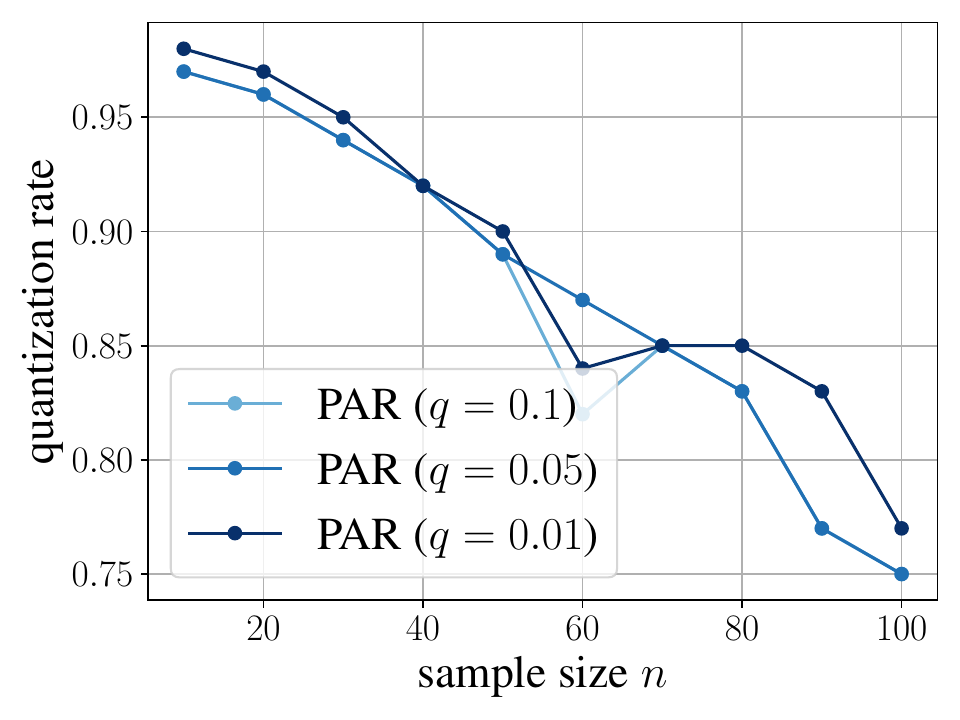}
         \caption{$\ell_{0.5}$-regularizer}
         
     \end{subfigure}
        \caption{Statistical performance of Ridge (left), $\ell_{1}$- (middle), and $\ell_{0.5}$-regularizers (right) and their PAR approximations on logistic regression tasks. The quantization rate for each PAR is also shown.}
        \label{fig::statistical-performances-logistic-regression}
\end{figure}

\subsection{Statistical guarantees}
\label{sec::exp-stats}
We assess the statistical accuracy of three classical regularizers: $\ell_2$-, $\ell_{1}$-, and $\ell_{0.5}$-regularizers, against their PAR approximations.  For each method, we measure the Euclidean distance $\norm{\hat{\vx}-\vx^{\star}}$ between the estimated parameter $\hat{\vx}$ and the true parameter $\vx^{\star}$.  All experiments are conducted with data dimension $d=200$, where the entries of the design matrix are drawn i.i.d. from $\cN(0,1)$. Responses are generated from either a linear or logistic model, with additive Gaussian noise of standard deviation $\sigma = 0.1$. 

For the PAR approximations, we consider three quantization gaps, $q \in \{0.1,0.05,0.01\}$,
and construct the corresponding PARs as described in \Cref{fig::par-as-different-regularizers}. All methods are evaluated on the same simulated datasets to ensure a fair comparison.

\Cref{fig::statistical-performances-linear-regression,fig::statistical-performances-logistic-regression} report the parameter estimation error $\norm{\hat{\vx}-\vx^{\star}}$ for each regularizer and its PAR counterpart on linear and logistic regression tasks, respectively. In both settings, PAR approximations achieve nearly identical estimation accuracy to their original counterparts across all quantization gaps. Combined with the observed quantization rates, these results confirm that PARs retain statistical accuracy while enabling quantization.

\section{Conclusion and future directions}
\label{sec:conclusion}
This work introduces Piecewise-Affine Regularized Optimization (PARO), a principled and versatile framework for inducing quantization while preserving optimization and statistical guarantees. For generalized linear models, and more broadly, supervised learning models, we prove that under mild design assumptions, every critical point of the PARO objective is at least \((1-n/d)\)-quantized. Here $n$ is the sample size and~$d$ is the parameter dimension, implying highly quantized solutions in the overparameterized regime where $d\gg n$. We also derive closed-form proximal mappings for three main PAR families: convex, quasiconvex, and nonconvex, and analyze the convergence of the proximal gradient method in the nonconvex setting. In the context of linear regression, we demonstrate that properly designed PARs can mimic the behavior of Ridge, Lasso, and general nonconvex penalties. They achieve comparable estimation and prediction performance while significantly reducing model storage. Extensive simulations validate our theoretical findings.

Below we point out a few interesting future directions.
\begin{itemize}
    \item \textbf{Learnable quantization values.}  
        Throughout this paper, we assume a fixed quantization set $\cQ$. Some prior works have shown that jointly learning the quantization set can substantially improve model performance \cite{esser2019learned,pouransari2020lsb}. Such approaches can also be interpreted through the lens of nonconvex piecewise-affine regularizers (PARs) \cite{yin2018binaryrelax}. An interesting direction is to investigate the quantization guarantees and convergence behavior under this learnable setting.

\item \textbf{Stochastic gradient methods for PARO.} In this work, we focused on deterministic (full batch) proximal gradient methods and ADMM. However, in large-scale machine learning settings with high-dimensional data and massive datasets, stochastic gradient methods become necessary for scalability. Unfortunately, the standard proximal stochastic gradient method does not induce the desired regularization effect (manifold identification) \cite{xiao2010rda}. This limitation has motivated the development of alternative stochastic optimization methods with strong manifold identification properties \cite{xiao2010rda,dockhorn2021demystifying,jin2025parq,qiu2025normalmapbasedproximalstochastic}. Further investigation in this direction, by exploiting the particular structure of PARs, can be very impactful in practice.

    \item \textbf{Applications to combinatorial optimization.}  
PARs hold promise for broader applications in combinatorial optimization involving discrete variables. In integer programming, linear relaxation, where binary constraints $x \in \{0,1\}$ are replaced with $x \in [0,1]$, combined with (possibly stochastic) rounding, has proven both theoretically and empirically effective. This relaxation corresponds to a convex PAR defined as $\Psi(x) = 0$ for $x \in [0,1]$ and $\Psi(x) = +\infty$ otherwise. Motivated by this, we conjecture that general PARs may induce meaningful discrete solutions for more complex combinatorial problems, including those with multi-level variables such as $x \in \{0,1,\dots,K\}$ for $K\geq 2$.
\end{itemize}

\section*{Acknowledgment}
We thank Dmitriy Drusvyatskiy for guidance on the convergence analysis of proximal gradient methods with nonconvex loss and regularization functions. We are also grateful to Lisa Jin for insightful discussions on deep learning model quantization.

\bibliography{ref}

\appendix

\section{Proofs for Proximal Mappings}
In this section, we present formal derivations of the explicit proximal mappings for the various PARs introduced in \Cref{sec::proximal-mappings}.
\subsection{Convex PAR}
Recall that the proximal mapping of $\Psi$ is defined as
\begin{equation}
\prox_{\lambda\Psi}(x) = \argmin_{z  \in \bR} \left\{\frac{1}{2}(z-x)^2 + \lambda\Psi(z) \right\}.
\end{equation}
Since $\Psi(z)$ is an even function, i.e., $\Psi(z) = \Psi(-z)$, its proximal operator is an odd function. We can therefore derive the solution for $x \ge 0$, which implies the minimizer $z$ is also non-negative, and then extend the result to all $x \in \bR$ using the relation $\prox_{\lambda\Psi}(x) = \sign(x)\prox_{\lambda\Psi}(|x|)$. For $x \ge 0$, the problem becomes
\begin{equation}
\prox_{\lambda\Psi}(x) = \argmin_{z  \ge 0} \left\{ \frac{1}{2}(z-x)^2 + \lambda\Psi(z) \right\}.
\end{equation}
The first-order optimality condition for this convex optimization problem asserts that the minimizer $z$ must satisfy
\begin{equation}\label{eq:optimality}
0 \in z-x + \lambda\partial\Psi(z),
\end{equation}
which can be rewritten as $x - z \in \lambda\partial\Psi(z)$. The subdifferential $\partial\Psi(z)$ for $z\geq 0$ is provided by 
\begin{equation}
\partial\Psi(z) =
\begin{cases}
[-a_0, a_0] & \text{if } z  = 0, \\
\{a_k\} & \text{if } z  \in (q_k, q_{k+1}) \text{ for } k \in \{0, \dots, m-1\}, \\
[a_{k-1}, a_k] & \text{if } z  = q_k \text{ for } k \in \{1, \dots, m\}.
\end{cases}
\end{equation}
We analyze the optimality condition (\eqref{eq:optimality}) for different cases.

\paragraph{Case 1: Solution is zero ($z=0$).}
The optimality condition is $x - 0 \in \lambda\partial\Psi(0)$, which means $x \in \lambda[-a_0, a_0]$. Since we consider $x \ge 0$, this implies $0 \le x \le \lambda a_0$. Thus, for $-\lambda a_0\leq x \le \lambda a_0$, the minimizer is $z=0$.

\paragraph{Case 2: Solution is a non-zero quantization point ($z = q_k$ for $k \in \{1, \dots, m\}$).}
The optimality condition is $x - q_k \in \lambda\partial\Psi(q_k)$, which translates to $x - q_k \in \lambda[a_{k-1}, a_k]$. This is equivalent to
\[
q_k + \lambda a_{k-1} \le x \le q_k + \lambda a_k.
\]
Thus, if $|x|$ lies in $[q_k+\lambda a_{k-1}, q_k+\lambda a_k]$, the solution is $\prox_{\lambda\Psi}(x) = \sign(x)q_k$.

\paragraph{Case 3: Solution lies strictly between quantization points ($z  \in (q_k, q_{k+1})$ for $k \in \{0, \dots, m-1\}$).}
Here, the subgradient is single-valued, $\partial\Psi(z) = \{a_k\}$. The optimality condition reduces to
\[
x - z = \lambda a_k,
\]
which gives the solution $z = x - \lambda a_k$. For this solution to be valid, it must lie in the assumed interval, $q_k < z < q_{k+1}$, which implies
\[
q_k < x - \lambda a_k < q_{k+1}.
\]
Rearranging for $x$ gives 
$q_k + \lambda a_k < x < q_{k+1} + \lambda a_k$.
Thus, if $|x|$ lies in $(q_k+\lambda a_k, q_{k+1}+\lambda a_k)$, the solution is $\prox_{\lambda\Psi}(x) = x - \sign(x)\lambda a_k$.

Combining the three cases above completes the proof.

\subsection{Quasiconvex PAR}
Recall that the proximal mapping is given by
\begin{equation}
    \prox_{\lambda\Psi}(x) = \argmin_{z \in \bR} \left\{\Phi(z):=\frac{1}{2}(z-x)^2 + \lambda\Psi(z) \right\}.
    \label{eq::82}
\end{equation}
Since $\Psi(x)$ is an even function, we can solve for $x \ge 0$ (which implies $z  \ge 0$) and generalize using $\prox_{\lambda\Psi}(x) = \sign(x)\prox_{\lambda\Psi}(|x|)$.

The first-order optimality condition for this problem is $x \in z + \lambda \partial\Psi(z)$. The Fréchet subdifferential $\partial\Psi(z)$ where $z\geq 0$ is given by\footnote{We slightly abuse notation here by using the same symbol as for the Clarke subdifferential.}
\begin{equation}
    \partial\Psi(z) =
\begin{cases}
[-1, 1] & \text{if } z  = 0, \\
\{1\} & \text{if } z  \in \left(kq, \frac{2k+1}{2}q\right), \\
\emptyset & \text{if } z  = \frac{2k+1}{2}q,\\
\{0\} & \text{if } z  \in \left(\frac{2k+1}{2}q, (k+1)q\right),\\
[0, 1] & \text{if } z  =kq \text{ where }k\neq 0.
\end{cases}
\end{equation}
In what follows, we consider two situations: $\lambda\leq q$ and $\lambda>q$.
\subsubsection{Case I: $\lambda\leq q$}
We consider the intersection between the two lines $y=x$ and $y=z+\lambda\partial\Psi(z)$. We further divide it into two cases.
\paragraph{Case 1: $kq\leq x\leq kq+\lambda$.} In this case, there is only one critical point $z=kq$. Hence, we have $\prox_{\lambda\Psi}(x)=kq$.
\paragraph{Case 2: $kq+\lambda\leq x\leq (k+1)q$.} In this case, there are two critical points: $z_1 = x - \lambda$ and $z_2 = x$. The global minimizer must be one of these candidates.

We proceed by comparing the objective function values
\begin{equation}\label{eq:key_inequality}
\Delta:=\frac{1}{2}(x-x)^2+ \Psi(x) - \left(\frac{1}{2}(x-\lambda-x)^2+\Psi(x-\lambda)\right) =\Psi(x)-\Psi(x-\lambda) - \frac{1}{2}\lambda^2.
\end{equation}
\begin{itemize}
    \item When $x\leq \frac{2k+1}{2}q$, we have 
\begin{equation}
    \Delta=\lambda^2 - \frac{1}{2}\lambda^2=\frac{1}{2}\lambda^2>0.
\end{equation}
Hence, $\prox_{\lambda\Psi}(x)=x-\lambda$.
\item When $\frac{2k+1}{2}q\leq x\leq \frac{2k+1}{2}q+\lambda$, we have 
\begin{equation}
    \Delta=\lambda \left(\frac{k+1}{2}q-\left(x-\lambda-\frac{k}{2}q\right)\right)-\frac{1}{2}\lambda^2.
\end{equation}
When $x\leq \frac{2k+1}{2}q+\frac{1}{2}\lambda$, we have $\Delta<0$, which implies that $\prox_{\lambda\Psi}(x)=x-\lambda$. When $\frac{2k+1}{2}q+\frac{1}{2}\leq x\leq \frac{2k+1}{2}q+\lambda$, we have $\Delta\geq 0$, which implies $\prox_{\lambda\Psi}(x)=x$. 
\item When $\frac{2k+1}{2}q+\lambda\leq x\leq (k+1)q$, we always have
$\Delta=-\frac{1}{2}\lambda^2<0$. Hence, $\prox_{\lambda\Psi}(x)=x$.
\end{itemize}

\subsubsection{Case II: $\lambda > q$}
We analyze the minimizers of $\Phi(z)$ in \eqref{eq::82}.

First, note that when $z \in \left[\frac{(2k+1)q}{2}, (k+1)q\right]$, $\Psi(z)$ is constant and $\Phi(z)$ is a quadratic function with minimum at $z = x$. Thus, if $x \in \left[\frac{(2k^\star+1)q}{2}, (k^\star+1)q\right]$ for some $k^\star\in \bZ_+$, then $x$ is a candidate minimizer. In other cases, since proximal mapping is nonexpensive and $\Phi(z)$ is a decreasing function in the range $[0, x]$, the other candidates are $\{(k+1)q\}_{k<k^\star}$.

When $z\in \left[kq, \frac{(2k+1)q}{2}\right]$, $\Psi(z)$ is affine and $\Phi(z)=\frac{1}{2}(z-x+\lambda)^2+C$ where $C$ is a constant. This quadratic achieves its minimum at $z = x - \lambda < kq$, which lies outside the interval, so the only candidates here are again grid points $\{kq\}_{k < k^\star + 1}$. 

Hence, all candidate minimizers belong to the set $\{(k+1)q\}_{k<k^\star+1}\cup \{x\}$. To identify the minimum, we first compare values at the grid points. For any $k$, we have
\begin{equation}
    \Phi(kq) = \frac{1}{2}(x-kq)^2+\frac{\lambda kq}{2}=\frac{q^2}{2}k^2+\frac{\lambda q-2xq}{2}k+\frac{1}{2}x^2.
\end{equation}
This quadratic in $k$ is minimized when $k=\left\lfloor \frac{x-\frac{1}{2}\lambda}{q}\right\rceil$. Next, we compare it with the candidate $x$. Note that
\begin{equation}
    \begin{aligned}
        \Phi(x)-\Phi\left(\left\lfloor \frac{x-\frac{1}{2}\lambda}{q}\right\rceil q\right)&=\frac{\lambda}{2}\cdot \left((k^\star+1)q-\left\lfloor \frac{x-\frac{1}{2}\lambda}{q}\right\rceil q\right) - \frac{1}{2}\left(x-\left\lfloor \frac{x-\frac{1}{2}\lambda}{q}\right\rceil q\right)^2\\
        &\geq \frac{1}{2}\left(x-\left\lfloor \frac{x-\frac{1}{2}\lambda}{q}\right\rceil q\right)\left(\lambda-\left(x-\left\lfloor \frac{x-\frac{1}{2}\lambda}{q}\right\rceil q\right)\right)\\
        &\geq 0.
    \end{aligned}
\end{equation}
where the last inequality uses $\lambda > q$. This confirms that the unique minimizer is $\left\lfloor \frac{x-\frac{1}{2}\lambda}{q}\right\rceil q$.
This completes the proof.
\subsection{Nonconvex PAR}
First, we consider that $x\in \left[q_k, \frac{q_k+q_{k+1}}{2}\right]$ for some $k$. It is obvious that $\Psi_{\lambda\Psi}(x)\in \left[q_k, \frac{q_k+q_{k+1}}{2}\right]$ for any $\lambda\geq 0$. Therefore, $\Psi_{\lambda\Psi}(x)$ is indeed the minimizer of a quadratic function:
    \begin{equation}
        \prox_{\lambda\Psi}(x)=\argmin_{z\in \left[q_k, \frac{q_k+q_{k+1}}{2}\right]}\left\{\frac{1}{2}\left(z+\lambda-x\right)^2\right\}=\clip\left(x-\lambda, q_k, \frac{q_k+q_{k+1}}{2} \right).
    \end{equation}
    Similarly, if $x\in \left[\frac{q_k+q_{k+1}}{2}, q_{k+1}\right]$ for some $k$, its proximal mapping can be derived by
    \begin{equation}
        \prox_{\lambda\Psi}(x)=\argmin_{z\in \left[\frac{q_k+q_{k+1}}{2}, q_{k+1}\right]}\left\{\frac{1}{2}\left(z-\lambda-x\right)^2\right\}=\clip\left(x+\lambda, \frac{q_k+q_{k+1}}{2},  q_{k+1}\right).
    \end{equation}
    This completes the proof.
\end{document}